\documentclass[runningheads]{llncs}
\usepackage[T1]{fontenc}
\usepackage{graphicx}
\usepackage{booktabs}
\usepackage[misc]{ifsym}
\newcommand{\corr}{(\Letter)}
\usepackage{mycommands}
\newcommand{\RatioCoresetSize}{
	
	\begin{table*}[htbp]
		\centering
		\renewcommand{\arraystretch}{1.25} % vertical cell padding
		
		\resizebox{\textwidth}{!}{
			\begin{tabular}{ c |  c c c | c c c | c c c   }
				\multirow{2}{*}{Dataset} 
				&  \multicolumn{3}{c|}{$m = 50k$} 
				&  \multicolumn{3}{c|}{$m = 200k$} 
				&  \multicolumn{3}{c}{$m = 500k$}  \\
				
				&  $k = 10$ &  $k = 20$ &  $k = 50$ 
				&  $k = 10$ &  $k = 20$ &  $k = 50$ 
				&  $k = 10$ &  $k = 20$ &  $k = 50$  \\
				
				\hline
				
				Twitter & 1.00$\pm{\scriptstyle 0.00}$ & 1.00$\pm{\scriptstyle 0.00}$ & 1.00$\pm{\scriptstyle 0.00}$ & 1.00$\pm{\scriptstyle 0.00}$ & 1.00$\pm{\scriptstyle 0.00}$ & 1.00$\pm{\scriptstyle 0.00}$ & 1.00$\pm{\scriptstyle 0.00}$ & 1.00$\pm{\scriptstyle 0.00}$ & 0.99$\pm{\scriptstyle 0.00}$ \\
				
				IntelLab & 0.97$\pm{\scriptstyle 0.04}$ & 0.98$\pm{\scriptstyle 0.03}$ & 0.99$\pm{\scriptstyle 0.02}$ & 0.95$\pm{\scriptstyle 0.05}$ & 0.97$\pm{\scriptstyle 0.04}$ & 0.98$\pm{\scriptstyle 0.06}$ & 0.95$\pm{\scriptstyle 0.06}$ & 0.97$\pm{\scriptstyle 0.07}$ & 0.98$\pm{\scriptstyle 0.13}$ \\
				
				Taxi & 0.64$\pm{\scriptstyle 0.16}$ & 0.58$\pm{\scriptstyle 0.14}$ & 0.53$\pm{\scriptstyle 0.14}$ & 0.56$\pm{\scriptstyle 0.14}$ & 0.51$\pm{\scriptstyle 0.14}$ & 0.46$\pm{\scriptstyle 0.12}$ & 0.53$\pm{\scriptstyle 0.14}$ & 0.49$\pm{\scriptstyle 0.12}$ & 0.44$\pm{\scriptstyle 0.11}$ \\
				
				NYT (M) & 0.22$\pm{\scriptstyle 0.14}$ & 0.38$\pm{\scriptstyle 0.19}$ & 0.53$\pm{\scriptstyle 0.20}$ & 0.18$\pm{\scriptstyle 0.13}$ & 0.36$\pm{\scriptstyle 0.18}$ & 0.50$\pm{\scriptstyle 0.18}$ & 0.18$\pm{\scriptstyle 0.13}$ & 0.34$\pm{\scriptstyle 0.17}$ & 0.45$\pm{\scriptstyle 0.18}$ \\
				
				NYT (Y) & 0.56$ \pm {\scriptstyle 0.19}$ &  0.54$ \pm {\scriptstyle 0.25}$ &  0.58$ \pm {\scriptstyle 0.26}$ &  0.53$ \pm {\scriptstyle 0.21}$ &  0.52$ \pm {\scriptstyle 0.25}$ &  0.57$ \pm {\scriptstyle 0.26}$ &  0.51$ \pm {\scriptstyle 0.22}$ &  0.52$ \pm {\scriptstyle 0.24}$ &  0.56$ \pm {\scriptstyle 0.25}$  \\
			\end{tabular}
		}
		
		\caption{Number of unique points in coresets produced by \algname\ over number of unique points in coresets produced by \uniformsampling. For each dataset (row), results are averaged over all the snapshots, and over 10 independent repetitions of coreset construction. Ratio standard deviations are estimated via error propagation.}
		\label{tab:ratio_size}
	\end{table*}
	
}

\newcommand\CompactRuntime[1]{%
	\begin{figure*}[htbp]
		\centering
		\includegraphics[width=\textwidth]{figures/runtime/runtime_compact_barplot_m#1.pdf}
		\caption{Times for producing coresets across different number $k$ of centers, fixing coreset size $m = #1 k$. For each dataset (subplot), the sum of runtimes over all the snapshots is reported, with mean and standard deviation across 10 independent repetitions of coreset construction. y-axis is reported in log scale.}
		\label{fig:runtime_m\detokenize{#1}}
	\end{figure*}%
}

\newcommand\CompactLogRatioCompressionOptimization[1]{%
	\begin{figure}[!htbp]
		\centering
		\includegraphics[width=\textwidth]{figures/copt/compact_copt_log_ratios_m#1.pdf}
		\caption{$k$-means cost of coresets-based algorithms for coreset size $m = #1 k$. 		
			We plot the \emph{ratio} between each algorithm's achieved cost and the cost achieved on the whole dataset, on a \emph{log scale}. Black dashed line: reference cost of $k$-means on the whole dataset (always $1$).
		Results are averaged over 10 independent repetitions of coreset construction, reporting means and 95\% confidence intervals.}
		\label{fig:log_ratio_compression_optimization_m\detokenize{#1}}
	\end{figure}
}

\newcommand\EstimatedLogRatioDistortion[1]{%
	\begin{figure}[htbp]
		\centering
		\includegraphics[width=\textwidth]{figures/ed/compact_log_ratio_ed_m#1.pdf}
		\caption{
			Estimated distortion on coresets with size $m = #1 k$. 
			We plot the \emph{ratio} between each algorithm's estimated distortion and the estimated distortion of \algname, which is fixed to $1$ by construction (baseline), on a \emph{log scale}.
			Results are averaged over 10 independent repetitions of coreset construction, reporting means and 95\% confidence intervals. 
			Estimated distortion of \sstwenty\ algorithm for Twitter dataset is not displayed for the sake of visualization, since it is always greater than $2.73\times$ the estimated distortion of \algname. }
		\label{fig:estimated_log_ratio_distortion_m\detokenize{#1}}
	\end{figure}%
}

\newcommand{\DataVizTaxi}{
	\begin{figure}[htbp]
		\centering
		\begin{subfigure}[htbp]{0.48\columnwidth}
			\centering
			\includegraphics[width=\columnwidth]{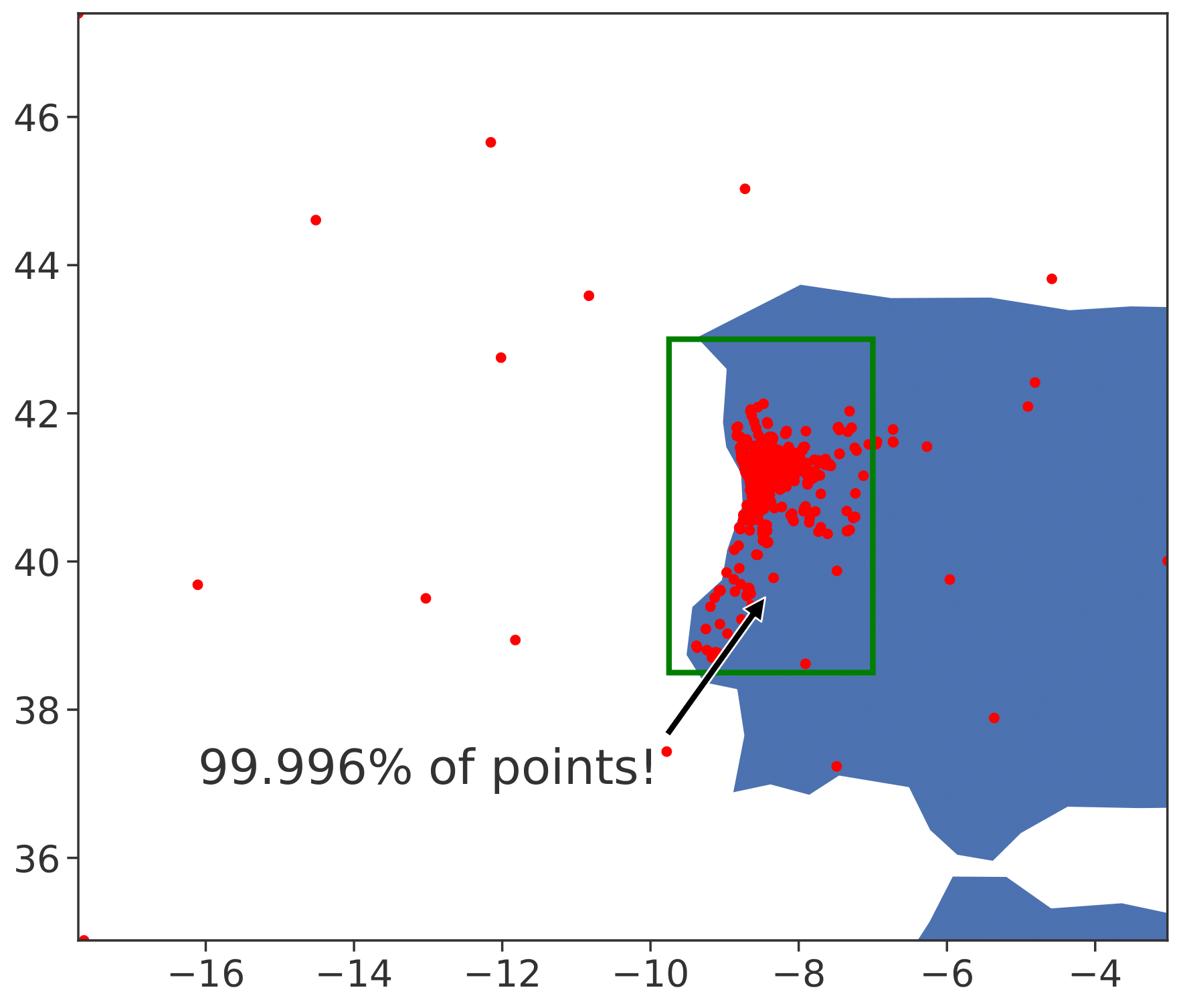}
		\end{subfigure}
		\hfill
		\begin{subfigure}[htbp]{0.48\columnwidth}
			\centering
			\includegraphics[width=\columnwidth]{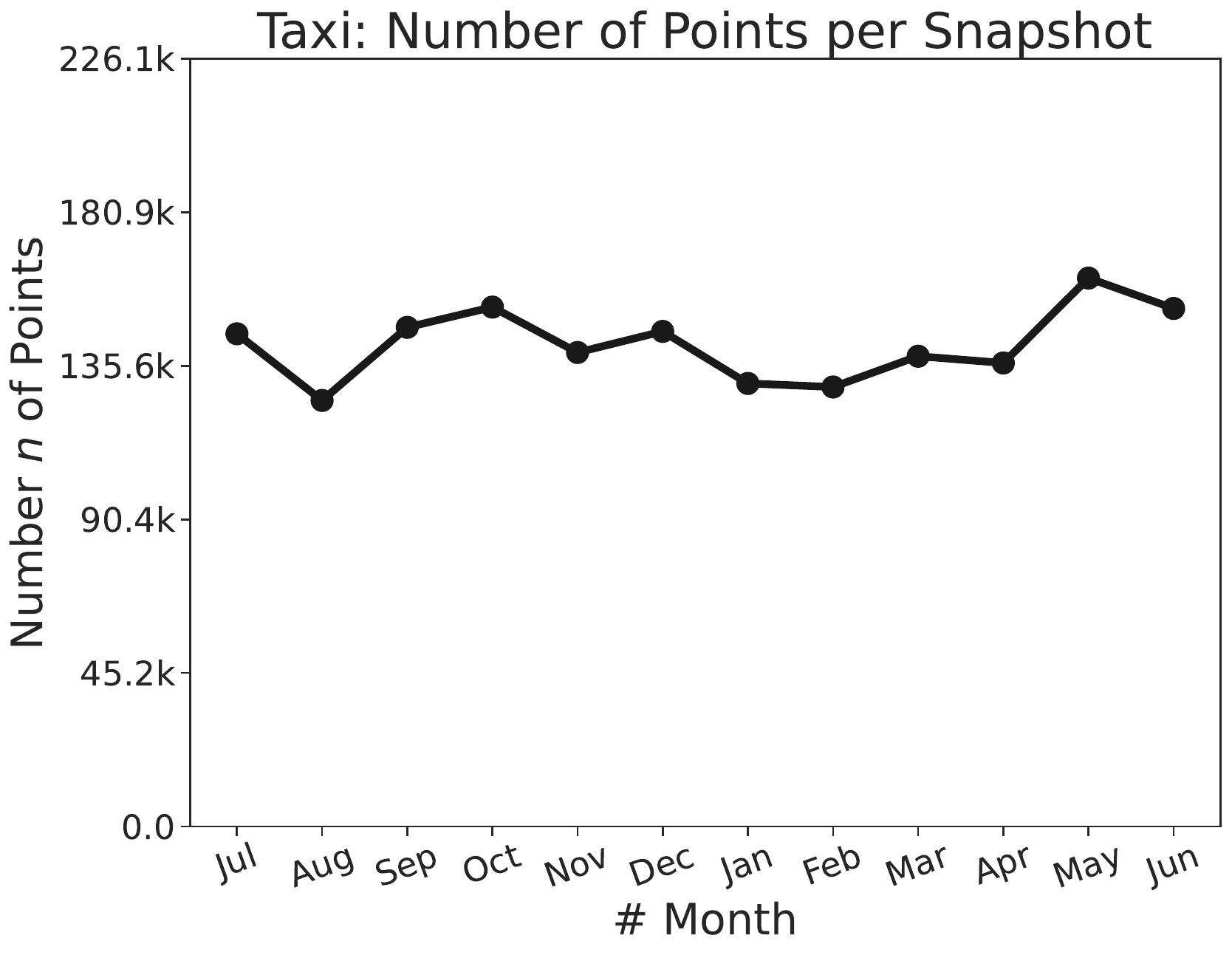}
		\end{subfigure}
		\caption{Visualization of Taxi Dataset (12 snapshots). 
			(Left) Latitude and Longitude of pickup location when excluding the \emph{two most extreme} points. 
			(Right) Number of points for each snapshot, representing the month in which data has been collected. }
		\label{fig:taxi-dataset}
	\end{figure}
	
}

\newcommand{\DataVizTwitter}{
	\begin{figure}[htbp]
		\centering
		\begin{subfigure}[htbp]{0.5\columnwidth}
			\centering
			\includegraphics[width=\columnwidth]{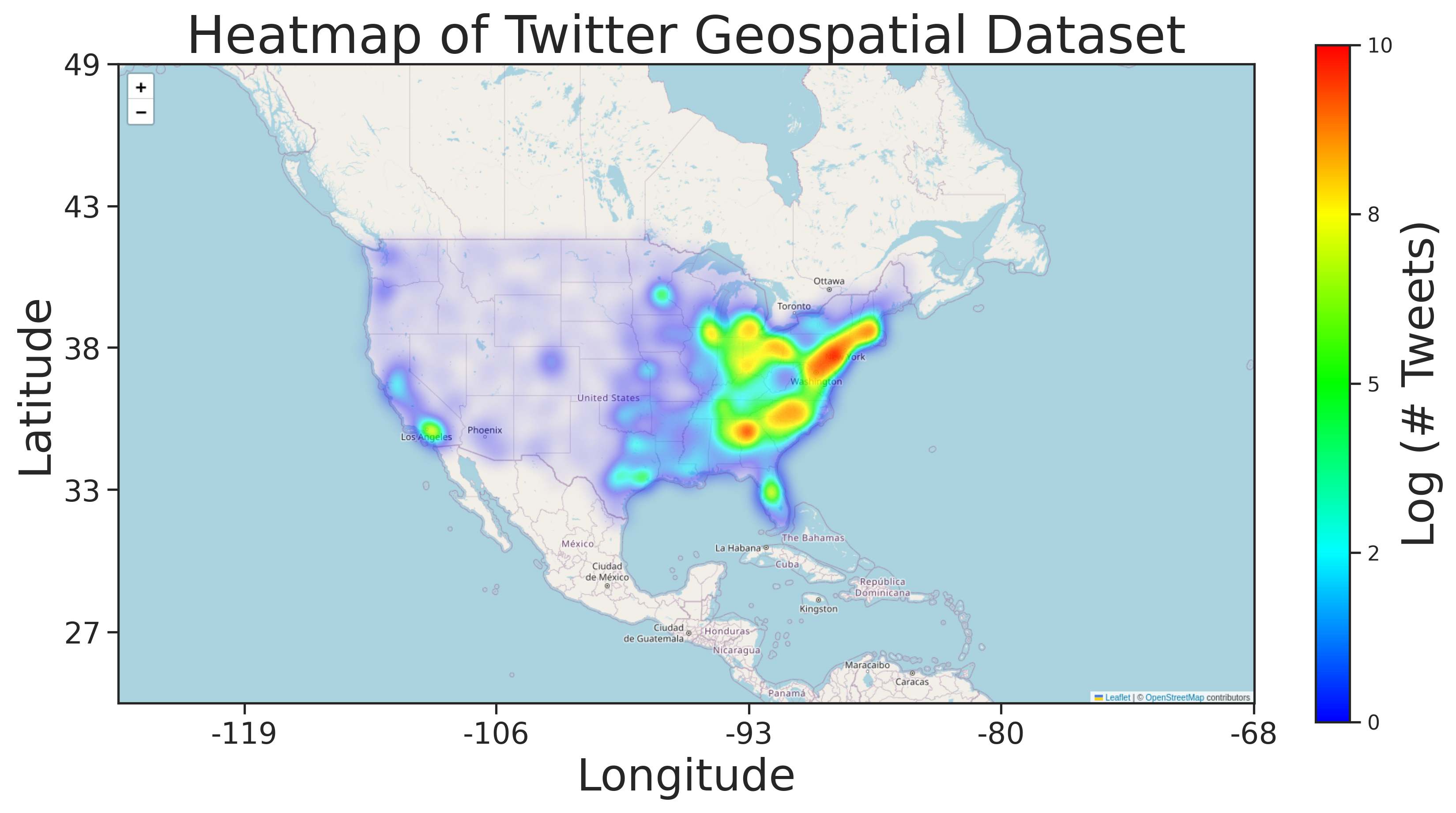}
		\end{subfigure}
		\begin{subfigure}[htbp]{0.45\columnwidth}
			\centering
			\includegraphics[width=\columnwidth]{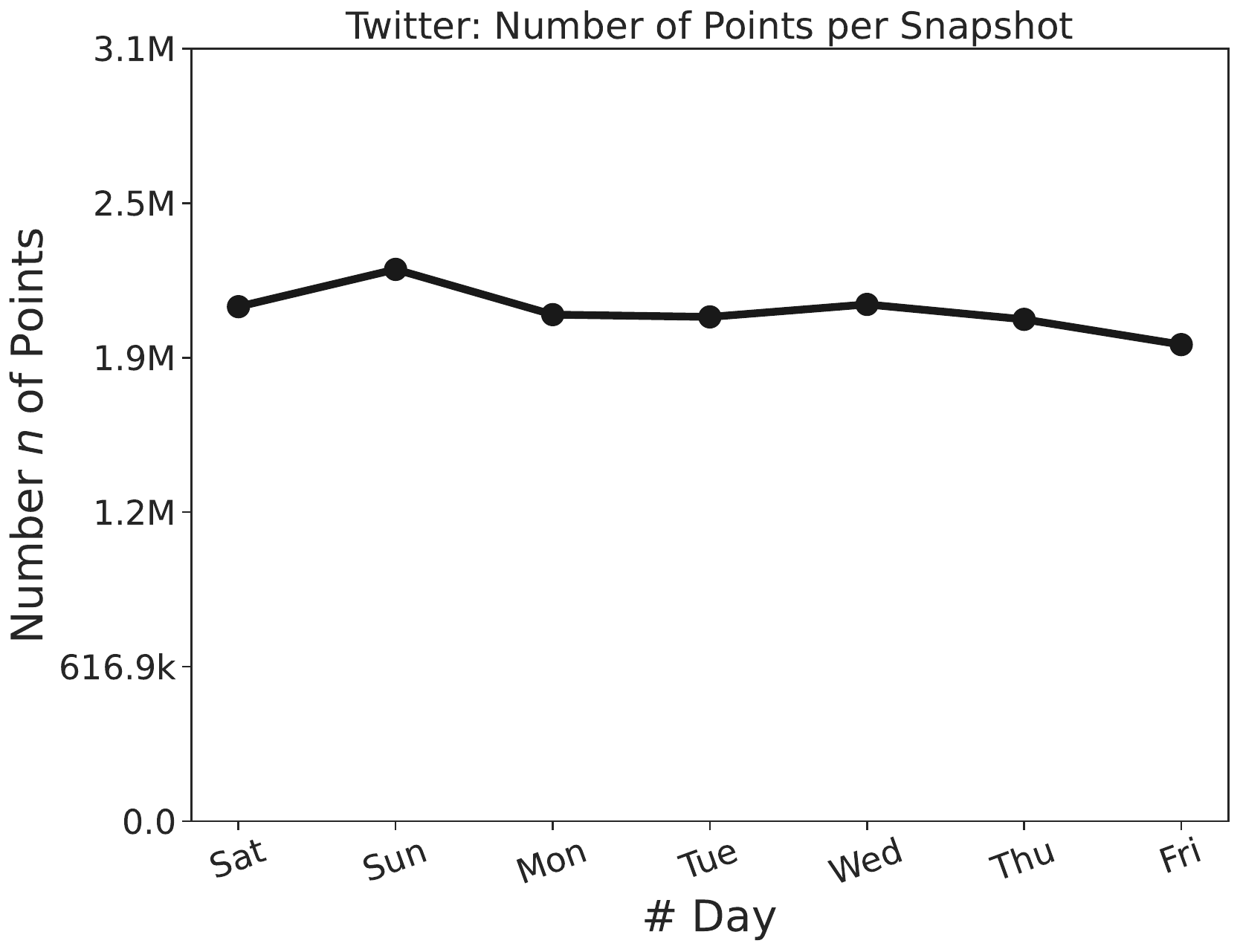}
		\end{subfigure}
		\caption{Visualization of Twitter Dataset (7 snapshots). (Left) Latitude and Longitude of location of Tweets over the US map. Locations have been discretized into bins of $1000\times500$, and logarithm of Tweet counts is shown. (Right) Number of points for each snapshot, representing the day in which data has been collected. }
		\label{fig:twitter-dataset}
	\end{figure}
}

\newcommand{\DataVizNYCTLC}{
	\begin{figure}[htbp]
		\centering
		\begin{subfigure}[b]{0.64 \columnwidth}
			\centering
			\includegraphics[width=\columnwidth]{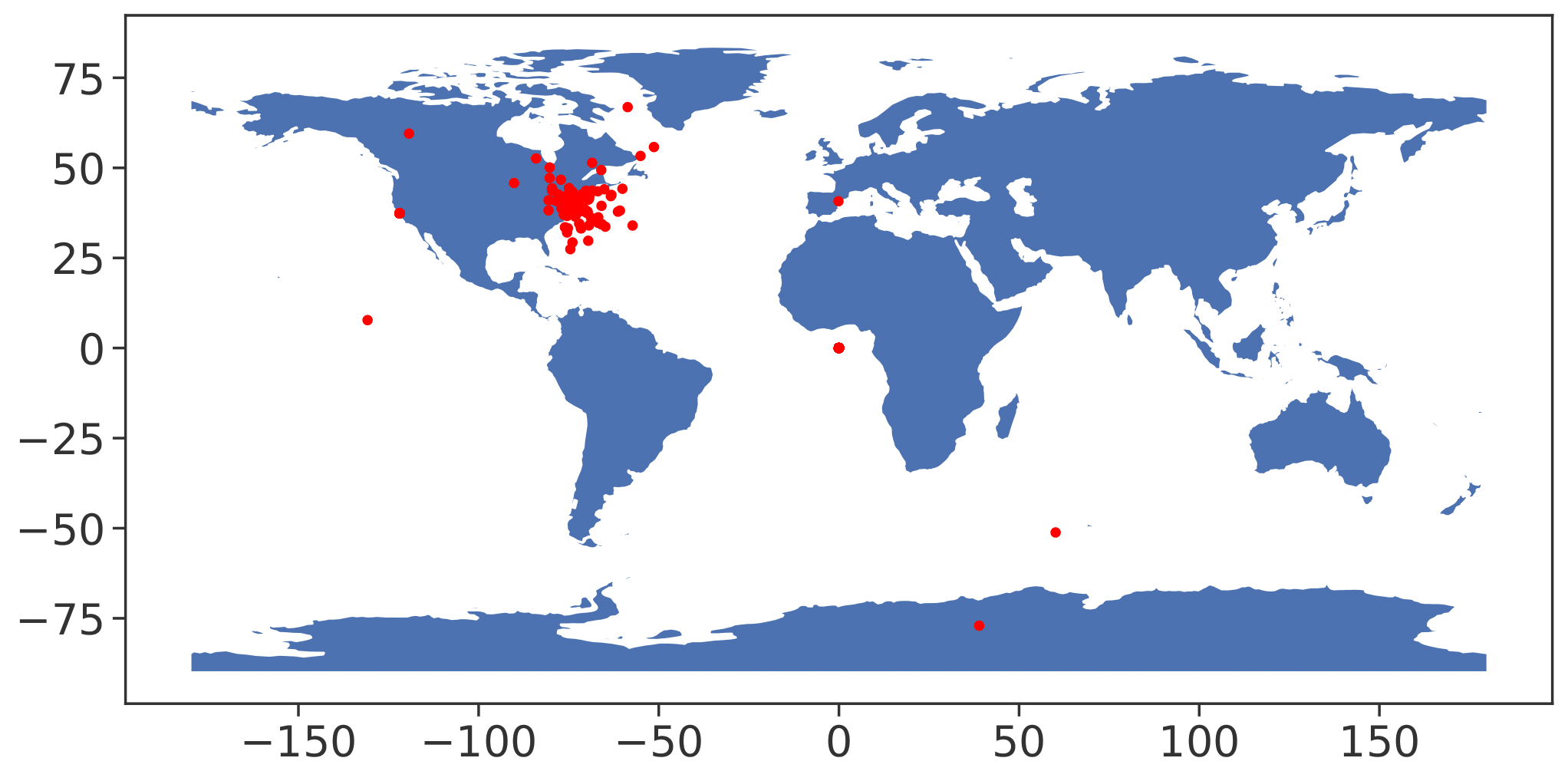}
		\end{subfigure}
		\hfill
		\begin{subfigure}[b]{0.34\columnwidth}
			\centering
			\includegraphics[width=\columnwidth]{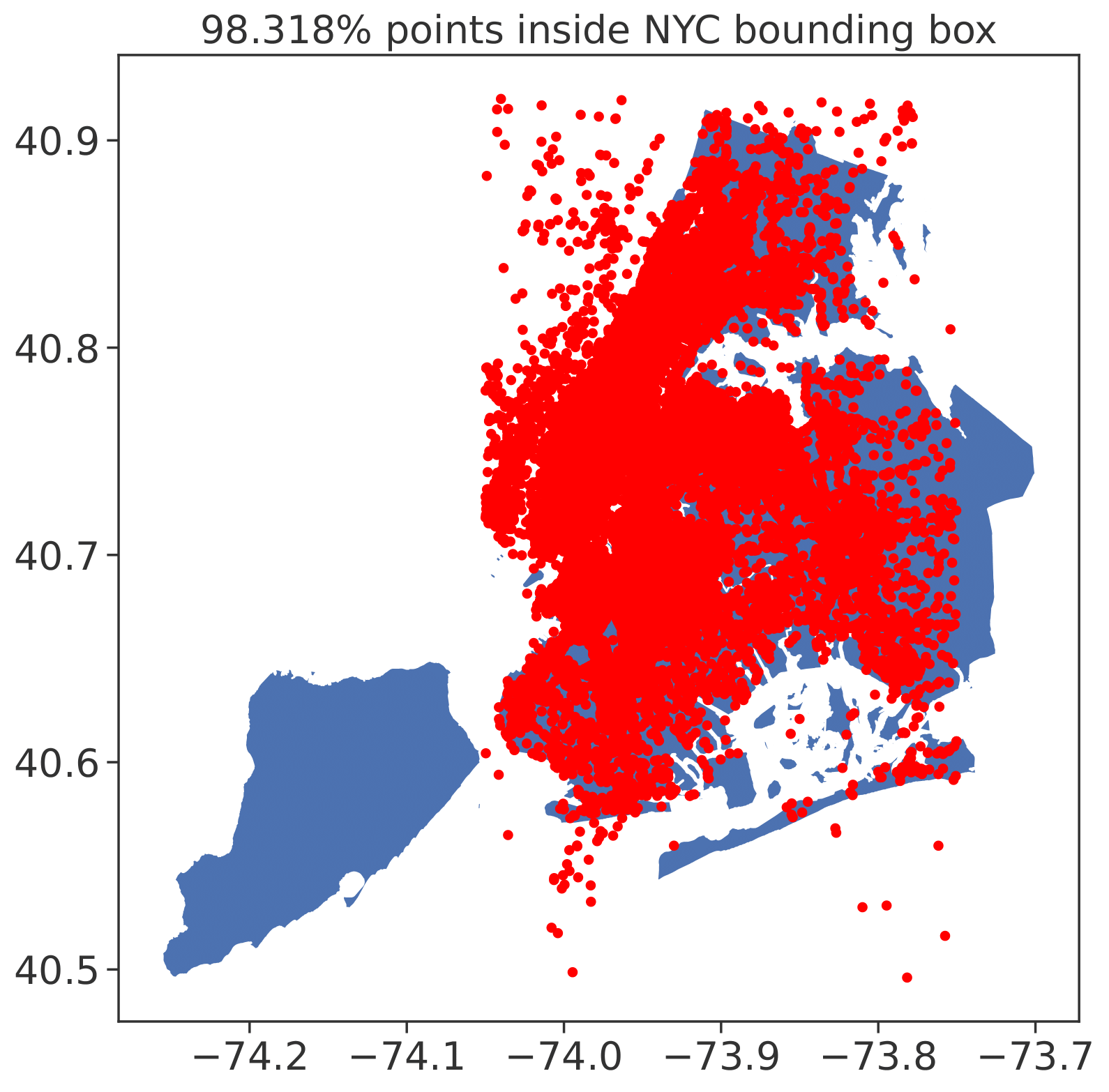}
		\end{subfigure}
		\hfill	
		\caption{Visualization of NYC TLC (M). 
			(Left) Plot of a $10\%$ random sample showing pickup longitude and latitude on world map; 
			(Right) Plot of a $10\%$ random sample showing pickup longitude and latitude on New York City.}
		\label{fig:NYC-Taxi-dataset}
	\end{figure}

}

\begin{document}

\title{Sensitivity Sampling with Predictions for $k$-Means Clustering}

\titlerunning{Sensitivity Sampling with Predictions for $k$-Means Clustering}
% If the full title of your paper is short enough to also fit in the running head, you can omit the abbreviated paper title here. You can check as follows: if you comment out the \titlerunning line, something will appear in the header of all odd-numbered pages of your PDF from page 3 onward. This something is either the full title (in which case all is well), or the error message "Title Suppressed Due to Excessive Length". If this error message appears, you're going to want to provide an abbreviated title within the \titlerunning command, because if you won't do it, Springer will do it for you.

%N.B.: Author information (both in the \author{} and \authorrunning{} command) should only be present in the Camera-Ready Version of your paper. The version that you initially submit for review, ought to be double-blind. So, when initially submitting your paper, use:
%\author{Author information scrubbed for double-blind reviewing}
%\author{Cristian Boldrin\inst{1} \and
%	Fabio Vandin.\inst{1} \corr}
\author{Cristian Boldrin \and Fabio Vandin \corr}
% You may leave out the orcidID information, if you want to.
% Use \corr to indicate the corresponding author. Note the spacing around the \corr command. Only one author can be the corresponding author.

%N.B.: comment out the \authorrunning{} command for the double-blind version of your paper submitted for review. Later, if your paper is accepted, use the command for the Camera-Ready Version.
\authorrunning{C. Boldrin and F. Vandin}
% First names are abbreviated in the running head.
% If there is one author, write 'A.L. Benjamin'.
% If there are two authors, write 'A.L. Benjamin and C.C. Broadus Jr.'
% If there are more than two authors, '[...] et al.' is used.

% -- toc info
\toctitle{Sensitivity Sampling with Predictions for $k$-Means Clustering}
\tocauthor{Cristian~Boldrin,~Fabio~Vandin}

\institute{University of Padova, Padova, Italy\\
	\email{boldrincri@dei.unipd.it}, \email{fabio.vandin@unipd.it}}

\maketitle              % typeset the header of the contribution

\begin{abstract}
We study the problem of $k$-means clustering on large datasets. 
The state-of-the-art for the problem is given by coresets-based approach\-es, which build small weighted summaries of the input and derive approximate solutions with rigorous quality guarantees from them.
One of the most popular and advanced approaches to derive coresets for $k$-means is sensitivity sampling. However, sensitivity sampling requires to compute the importance of each input point with respect to the whole dataset over all possible choices of  centers. Since the exact computation of such quantities is unfeasible, current approaches work by approximating the sensitivity values. Nevertheless, the runtime of such approaches is still impractical for large datasets.

In this work, we propose to reduce the runtime of sensitivity-based approaches for $k$-means by leveraging predictions to approximate the importance of input points. 
We first formally prove that current theoretical results on coresets construction via sensitivity sampling hold for coarser approximations of sensitivities compared to the one required by existing approaches. This implies that even fairly noisy predictors can be leveraged for sensitivity-sampling approaches. We then propose a natural predictor, which applies to the common scenario where clustering is performed (over time) on a sequence of datasets from the same problem. We prove that when the datasets in the sequence come from the same (unknown) distribution, centers resulting in a low error on one dataset can be used as predictions for sensitivity sampling in subsequent datasets, with guarantees on their quality. We perform an extensive experimental evaluation showing that our approach significantly improves, in terms of clustering cost vs runtime, over uniform sampling and state-of-the-art sensitivity sampling approaches when applied to sequences of datasets.

\keywords{k-means clustering \and algorithms with predictions \and sensitivity sampling \and coresets}
\end{abstract}

\section{Introduction}
\label{sec:introduction}
Clustering is a fundamental task for data mining and machine learning, with applications in several domains~\cite{hennig2015handbook}. One of the most popular variants is $k$-means clustering: given a set  $P$ of $n$ points in $d$-dimensional Euclidean space, the goal is to find a set $S$ of $k$ points of the Euclidean space, called \emph{centers}, which partitions $P$ into clusters minimizing the $k$-means cost, defined as the sum of squared distances between points in $P$ and their nearest center in $S$. 
Finding the optimal set of centers is known to be NP-hard even for $k = 2$~\cite{dasgupta2008hardness}, and even finding an approximation to the optimal objective value within a factor of 1.07 is known to be NP-hard~\cite{cohen2019inapproximability}.

The most popular heuristic for solving the $k$-means problem in Euclidean spaces is Lloyd's algorithm~\cite{lloyd1982least}, which uses iterative improvements to find a locally optimal $k$-means clustering.
Each iteration of Lloyd's algorithm requires $\bigO{nkd}$ time, and in the worst case the number of iterations required to converge is $2^{\Omega\left( \sqrt{n} \right)}$~\cite{arthur2006slow}, leading to superpolynomial worst-case time complexity. Moreover, the quality of the final solution depends heavily on the centers' initialization.

% -- brief introduction of coresets, with main properties (motivation)
As modern datasets continue to grow, a natural approach to scale the computation is to derive compressed representations of the input (e.g., by sampling).
For clustering, a common approach is to use \emph{coresets}, where the input data $P$ is replaced by a small set of weighted points, which provably approximates the $k$-means cost function with respect to \emph{any} set of centers. 
Running optimization algorithms (e.g., Lloyd's algorithm) on coresets then produces results close to the solution obtained from running the same algorithms on the original dataset, allowing for \emph{efficiency} in terms of runtime and memory consumption.
%Another key property of coresets is their \emph{composability}: if we let the input $P$ to be partitioned into disjoints blocks $P_i$, and $\Omega_i$ is a coreset for block $P_i$, then the weighted union $\bigcup_i \Omega_i$ is itself a coreset for the whole dataset $P$. Thus, composability of coresets allows for merge-and-reduce techniques, which can be leveraged in streaming and distributed applications~\cite{har2004coresets,bentley1980decomposable}. 

From now on, we restrict our focus to coresets in Euclidean $k$-means clustering. 
More formally, given a set $S$ of $k$ centers and a point $p$, let $\cost(p, S) = \min_{c \in S} \dist(p, c)^2$ be the squared (Euclidean) distance from point $p$ to its nearest center in $S$, and let $\cost(P, S) = \sum_{p \in P} \cost(p, S)$ be the total cost of dataset $P$ induced by $S$.
Arguably, one of the most popular algorithms for constructing coresets is \emph{sensitivity sampling}. In essence, this algorithm samples each point with probability proportional to its \emph{sensitivity}, which is defined, for a point $p \in P$, as $\sup_{S \colon |S| = k} \frac{\cost(p,S)}{\cost(P, S)}$. In the latter, the supremum is taken over the infinitely many sets of $k$ centers in $\mathbb{R}^d$.
Intuitively, the sensitivity is a measure that captures the importance of a point with respect to the whole dataset, over all the possible choices of centers. 
Eventually, the coreset is constructed by assigning to each of the picked points a weight inversely proportional to its sampling probability. 

%	Having a closer look to the definition of sensitivity, one can see the stemming of a chicken-egg problem: the optimal set $S$ of centers (i.e., the one which minimizes the $k$-means objective) needs to be known for computing the sensitivities (for taking the supremum over all the choices of centers). Then, sensitivities themselves are needed for producing the coreset in first place, with which one would like to compute estimates of the optimal set $S$ of centers. 
%	Hence, computing sensitivities is NP-Hard as well. 
%Computing sensitivities is tightly related to solving $k$-means clustering problem: typically, the optimal set $S^*$ of centers (i.e., the one which minimizes the $k$-means objective) is required to bound the sensitivities. This leads to a chicken-egg problem, given that finding $S^*$ is the main motivation for constructing the coreset in the first place. 
While sensitivity sampling leads to small and effective coresets, computing sensitivities is extremely challenging, as it requires considering clustering costs over infinitely many choices of centers in $\mathbb{R}^d$.
Fortunately, sensitivity-based algorithms for constructing coresets can work with approximations of the sensitivities, instead of their exact values. Such approximations can be obtained, for example, by considering the ``importance'' of points with respect to a $\Bicriteria{\bigO{1}}{\bigO{1}}$ \emph{bi-criteria approximation}~\cite{feldman2011unified}.
Generally speaking, a $\left( \alpha, \beta \right)$ bi-criteria approximation for the $k$-means problem on an input set $P$ of points, whose optimal cost is $\OPT_k(P)$, is a set $A$ of centers that:
\begin{myitemize}
	\item gives an $\alpha$ approximation of $\OPT_k(P)$, i.e., $\cost(P, A) \leq \alpha \cdot \OPT_k(P)$, and
	\item uses at most $\beta \cdot k$ centers, i.e., $|A| \leq \beta \cdot k$, with $\beta \geq 1$.
\end{myitemize}
A $\Bicriteria{\bigO{1}}{\bigO{1}}$ bi-criteria approximation is obtained, for example, by running \texttt{$k$-means++}~\cite{arthur2006k} using $2k$ centers; such procedure runs in $\bigTheta{nkd}$, and guarantees~\cite{makarychev2020improved} that, in expectation, the value of $\alpha$ is less than $8.68$ for any $k \ge 2$.  
% -- brief excursus for time complexity which motivates the use of the predictions
For standard sensitivity-based algorithms, time complexity is dominated by the computation of the initial bi-criteria assignment (that is $\bigTheta{nkd}$ if using $\texttt{k-means++}$), after which sampling and re-weighting can be done in time sublinear in the size of the input. Thus, the total runtime of the algorithm is often unfeasible, especially when dealing with hundreds of millions of points in high-dimensions.%, with certain applications required to utilize a number $k$ of centers in the thousands~\cite{hu2017fast,bachem2016fast}. 

% -- AWP
Mitzenmacher and Vassilvitskii~\cite{mitzenmacher2022algorithms} recently formalized the \emph{algorithms with predictions} framework, where combinatorial algorithms are empowered by relevant but noisy predictions (e.g., from a machine learning model). The availability of a predictor (as part of the input) has led to improved algorithms for several classical problems. %, and several algorithms for well-studied problems have been proposed~\cite{hsu2019learning,khalil2017learning,mitzenmacher2018model}.
In our setting, we consider predictions provided as a set of centers on which computing the sampling probabilities, replacing the initial bi-criteria approximation step in sensitivity sampling algorithm, which is the primary runtime bottleneck.
We show that this substantially reduces the empirical runtime compared to recomputing a bi-criteria approximation from scratch (e.g., via \texttt{k-means++} with $2k$ centers). 
%Within this framework, sensitivity-based algorithms producing coresets can replace the computation of the initial bi-criteria approximation (the primary runtime bottleneck) by assuming to access a \emph{predictor} which provides a set of centers, on which computing induced costs. In this way, the empirical runtime can be drastically alleviated, with respect to the standard algorithm which runs \texttt{kmeans++} with $2k$ centers from scratch.

The use of predictions is particularly well-suited to dynamic real-world applications in which datasets constitute a \emph{sequence of snapshots}. 
For instance, sensor-generated datasets are inherently produced as a time-ordered stream and aggregated into snapshots over periods such as days or months. 
In these cases, historical information can be reused to derive predictors for boosting performance on subsequent inputs. 
To this extent, one natural question arises. 

\emph{Question 1. If a dataset is represented as a sequence of snapshots (assuming the same data-generating distribution), can we derive a \emph{provably good} predictor for sensitivities from a \emph{previous snapshot}, in order to speed up the computation of coresets for subsequent snapshots?} 

\textit{Our Contributions.} We positively answer Question 1. Following the statistical clustering framework introduced by Ben-David~\cite{ben2007framework}, we prove that centers yielding a bi-criteria approximation on a past snapshot provide (asymptotically) a bi-criteria approximation on subsequent snapshots \emph{with high probability}, provided that snapshots are \emph{large enough} and drawn i.i.d. from the same underlying data-generating distribution.

Moreover, we show that the requirement of $\Bicriteria{\bigO{1}}{\bigO{1}}$ bi-criteria approximation can be relaxed and, under some mild assumptions, theoretically small coresets are still achievable, matching the state-of-the-art. All in all, we provide formal guarantees to produce small coresets, while improving the empirical runtime leveraging predictions from past data, which is a natural approach in scenarios where the input is a sequence of snapshots.  

Leveraging such results, we propose an algorithm, called \algname, which uses predictions for efficiently producing small and accurate coresets using sensitivity sampling, in the context of Euclidean $k$-means clustering. 
Note that, even with the (predicted) centers given as an input, one still needs to compute the induced costs for deriving the sampling probabilities of points, resulting again in $\bigO{nkd}$ runtime. 
%However, such operation (i.e., computation of distances) can be embarrassingly parallelized, in contrast to the version which computes the initial approximation by running \texttt{kmeans++} with $2k$ centers from scratch, where the sampling distribution has to be recomputed for each newly picked center.
However, we show that, in practice, computing distances from the (predicted) given set of centers is \emph{much faster} than computing bi-criteria approximations from scratch with $\texttt{kmeans++}$.

More in detail, our contributions are:
\begin{myitemize}
	\item We develop algorithm \algname, the first approach which combines coresets and predictions in order to speed up the runtime for sensitivity-based $k$-means clustering. % Most importantly, our algorithm maintains the overall structure of the standard algorithm for sensitivity sampling.
	\item We analyze \algname\ in the context of dynamic applications which involve sequences of snapshots, proving that the predictions (i.e., sets of centers) can be computed from historical data.
	\item We prove that \algname\ matches the state-of-the-art in terms of coreset size, and allows for a relaxed initial approximation under some mild assumptions. In our model, this corresponds to account for inaccurate predictions, while still being able to guarantee small coresets. 
	\item We conduct a detailed experimental evaluation (see Section~\ref{sec:experiments}) on sequences of snapshots, in which our proposed algorithm shows significant speedup in runtime, while still being as accurate as the state-of-the-art sensitivity-sampling algorithm that does not employ predictions.
\end{myitemize}

\section{Preliminaries}
\label{sec:preliminaries}
%For an integer $n > 0$ we use $[n]$ to denote the set $\{ 1, 2, ..., n\}$. 
For two points $p, q \in \mathbb{R}^d$, let $\dist(p, q)= \sqrt{\sum_{i=1}^d (p_i~-~q_i)^2}$ be their Euclidean distance.
Given a fixed set $S$ of $k$ centers in $\mathbb{R}^d$, we indicate with $\cost(p, S) \coloneqq \min_{c \in S} \dist(p, c)^2$ the \emph{cost} of point $p$ for centers $S$, that is the squared distance of $p$  to its nearest center in $S$. For a set $P$ of points, we denote $\cost(P, S) \coloneqq \sum_{p \in P} \cost(p, S)$ to be the $k$-means cost of $P$ with centers $S$. 
Similarly, for a weighted set $\Omega$ of points $\Omega = \{q_1, \dots, q_m\}$ with weights $\{w_{q_1}, \dots, w_{q_m}\}$, we let $\cost_{\Omega}(P, S) \coloneqq \sum_{q_i \in \Omega} w_{q_i} \cost(q_i, S)$ be its weighted $k$-means cost. We denote with $\OPT_k(P) \coloneqq \min_{S \colon |S| = k} \cost(P, S)$ the optimal $k$-means cost of dataset $P$.

We now formally define \emph{coresets}. In particular, we focus on (strong) $\varepsilon$-coreset, which preserve (weighted) costs relative to costs on the original dataset, within a multiplicative factor of $(1 \pm \varepsilon)$, \emph{for any} set of centers. 
Henceforth, we refer to (strong) $\varepsilon$-coresets simply as \emph{coresets}. 
\begin{definition}[Coreset] \label{def:coreset}
	Given an approximation parameter $\varepsilon \in (0, 1)$ and a set $P \subset \mathbb{R}^d$ of points, a subset $\Omega = \{q_1, ..., q_m\} \subset P$ of points with weights $\{w_{q_1}, \dots, w_{q_m}\}$ is a \emph{coreset} for $P$ if, for any set $S$ of $k$ centers, we have:
	\begin{equation} \label{eq:coreset}
		(1 - \varepsilon) \cost(P, S) \leq \cost_\Omega(P, S) \leq (1 + \varepsilon) \cost(P, S).
	\end{equation}
\end{definition}
The \emph{size} of a coreset is the number $m$  of points it contains.
Observe that a coreset $\Omega$ approximately preserves the $k$-means objective of $P$ \emph{for any} set $S$ of $k$ centers \emph{simultaneously}. 
Notice that we constrain the coreset $\Omega$ to be a subset of the dataset $P$: allowing $\Omega$ to contain points outside the dataset yields coresets of smaller size~\cite{braverman2016new,feldman2020turning}, but it is beyond the scope of this work.

\section{Related Work}
\label{sec:related_works}
A lot of research has been done on $k$-means and coresets. We focus on the works that are directly relevant to our setting. In particular, we restrict to coresets that are a subset of the input for Euclidean $k$-means clustering.
For general background on coresets we refer to surveys~\cite{feldman2020core,munteanu2018coresets}.

% -- sensitivity sampling related works
For \emph{sensitivity sampling},~\cite{feldman2011unified} showed that sampling $\bigOtilde{kd\varepsilon^{-4}}$\footnote{The $\bigOtilde{n}$ notation ignores poly-logarithmic factors, i.e., $\bigO{n \cdot \log^c n}$ for constant $c$.} points independently with probability proportional to an upper bound on their sensitivities suffices to produce a coreset (with constant probability). 
Applications of dimensionality reduction subsequently led to coreset size of $\bigOtilde{k^3 \varepsilon^{-4}}$, $\bigOtilde{k^2  \varepsilon^{-4}}$ and $\bigOtilde{k \varepsilon^{-6}}$~\cite{feldman2020turning,braverman2021coresets,huang2020coresets}.
% -- group sampling related works 
\cite{cohen2021new} introduced \emph{group sampling} for processesing the input into groups such that grouped points and clusters have similar costs (with respect to an initial approximation); then, the algorithm draws points in every group proportionate to their costs, which allows for coresets of size $\bigOtilde{k^2  \varepsilon^{-2}}$.
%The technical improvement in the analysis comes from applying a union bound over a nested sequence of increasingly better discretizations (nets), and by carefully trading off variance and net size at each scale, which allows for provable coresets of size $\bigOtilde{k^2  \varepsilon^{-2}}$.
Later,~\cite{cohen2022improved} gave a significantly more sophisticated trade-off, improving the bound to \cohenbound. \cite{huang2024optimal} showed that this bound is tight when the coreset is a subset of the input, showing that any coreset must have size \huanglowerbound. 
%Surprisingly, the bound has been derived by an improved analysis, and not by modifying the structure of the standard sensitivity sampling algorithm. 
%\cristian{\sout{Again, the bound \cohenbound\ for coreset size is optimal when constraining coresets to be part of the input.}}
Hence, the issue of determining a coreset of optimal size for Euclidean $k$-means clustering is considered closed~\cite{cohen2022improved,cohen2022towards,huang2024optimal}.
Recently,~\cite{bansal2024sensitivity} matched the optimal bound from~\cite{cohen2022improved} (achieved using group sampling), within the sensitivity sampling framework. 
% -- recent advancement (SOTA)

In practice, sensitivity sampling has become the simplest and easiest way to implement algorithms for creating coresets, and is consistently preferred to group sampling given that it exhibits lower runtimes and is more accurate on standard datasets (see~\cite{schwiegelshohn2022empirical}), and it is also easier to implement.
Among recent practical implementations,~\cite{draganov2024settling} proposed \fastcoreset, a sensitivity-based algorithm aimed at near linear-time coreset construction using quadtree embeddings, but can only work in practice with unique points in the original dataset. 
% for obtaining coresets in $\bigOtilde{nd}$ time 

Despite its conceptual simplicity and strong empirical performance~\cite{schwiegelshohn2022empirical}, sensitivity sampling requires computing an approximate clustering solution to upper bound sensitivities, which is the dominant cost in practice and limits the scalability of sensitivity sampling approaches.  This is precisely the bottleneck we target: we leverage predictions to avoid recomputing such an approximation from scratch. We show that predictions can be naturally viewed as centers computed on past data when the input is modeled as a sequence of snapshots, while retaining provable guarantees. 

$k$-means clustering has been studied in the \emph{algorithms with predictions} framework~\cite{ergunlearning,nguyenimproved,huang2025new}, assuming informative predictions about an optimal clustering for each input point: the results show that when predictions are accurate, a solution of cost close to the optimal one can be obtained, with the distance between the two costs that depends on the quality of the predictions.
Note that this differs significantly from our approach in two ways. First, we focus on coreset approaches, 
which allow clustering algorithms to scale effectively to large datasets while maintaining provable guarantees on the resulting solution. Second, we consider predictions that are less informative than the ones considered in~\cite{ergunlearning,nguyenimproved,huang2025new} and represented by cluster centers used to approximate sensitivities, which can be efficiently computed from past data.

\section{Sensitivity Sampling with Predictions}
\label{sec:ss_predictor}
In this section, we present our proposed method \algname\ that employs predictions, in the form of cluster centers, to compute an approximation of  sensitivities, which are then used to construct small coresets. 

\algname\ is described in Alg.~\ref{alg:sensitivity-sampling-oracle-coreset}, and it works as follows. Given a dataset $P$, the number $k$ of clusters, the coreset size $m$, and the set $A$ of predicted centers, \algname\ first computes the assignment  (i.e., the corresponding clusters and their costs) of points of $P$ to $A$  (line~\ref{line:assignment}). Then, \algname\ derives a probability $\Prob{p}$ for each point in $P$ using the costs induced by the set $A$ of predicted centers (line~\ref{line:prob-distro-oracle}). 
Finally, it samples $m$ points according to the distribution given by the values $\Prob{p}$'s and computes a weight for each sampled point (lines~\ref{line:sample_first}-\ref{line:sample_last}). The $m$ points and their weight constitute the coreset that is produced in output (line~\ref{line:return}). 
%We assume that the (predicted) set $A$ of centers is such that it yields a $\Bicriteria{\alpha}{\beta}$ bi-criteria approximation. That is we require (i) $\cost(P, A) \leq \alpha \cdot \OPT_k(P)$ and (ii) $k \leq |A| \leq \beta k$, for some (possibly non-constants) $\alpha$ and $\beta$. 

\begin{algorithm}[htbp]
	\caption{\algname $\left( P, k, m, A \right)$}
	\label{alg:sensitivity-sampling-oracle-coreset}
	\LinesNumbered
	%	\DontPrintSemicolon
	\kwInput{dataset $P = \{p_1, ..., p_n \} \subset \mathbb{R}^d$; number $k$ of clusters; coreset size $m$; set $A$ of predicted centers.}
	\kwOutput{coreset $\Omega = \{q_1, ..., q_m\}$ with weights $\{w_{q_1}, ..., w_{q_m}\}$. }
	\tcp{Step 1) Compute assignment induced by predicted centers}
	Let $C_j \subset P$ be the cluster centered at $a_j$.
	For a point $p \in C_j$, let $\Delta_j \coloneqq \cost(C_j, A) / |C_j|$ denote the average cost of its cluster $C_j$\label{line:assignment}\;
	\tcp{Step 2) Compute probability distribution}
	For a point $p \in C_j$, let $\Prob{p} \coloneqq \frac{1}{4} \left( \frac{1}{|A| \cdot |C_j|} + \frac{\cost(p, A)}{|A| \cdot \cost(C_j, A)} + \frac{\cost(p, A)}{\cost(P, A)} + \frac{\Delta_j}{\cost(P, A)} \right) $\label{line:prob-distro-oracle}\;
	\tcp{Step 3) Sample $m$ points from $P$}
	\For{$i \longleftarrow 1 $ to $ m $} {\label{line:sample_first}
		Pick a point $q_i$ from $P$ independently with probability $\Prob{q_i}$\;
		Add $q_i$ to $\Omega$ with weight $w_{q_i} \coloneqq 1 / (m \cdot \Prob{q_i})$\;
	}\label{line:sample_last}
	\Return multiset $\Omega = \{ q_1, ..., q_m\}$ with weights $\{ w_{q_1}, ..., w_{q_m}\}$\label{line:return}\;
	
\end{algorithm}

\emph{Comparison with Bansal et al.~\cite{bansal2024sensitivity}.}
\algname\ is closely related to the algorithm of Bansal et al.~\cite{bansal2024sensitivity}, which is the state-of-the-art for coreset size using sensitivity sampling. Specifically, the algorithm by Bansal et al.~\cite{bansal2024sensitivity} begins with the expensive step of computing an initial approximate $k$-means clustering (from scratch) in order to compute probabilities $\Prob{p}$'s. In contrast,  \algname\ derives the values $\Prob{p}$'s using \emph{predictions} provided in the form of input set $A$ of centers. 

When the predictions provided by $A$ yields a constant-factor approximation, i.e., $\cost(P, A) \le \alpha \cdot \OPT_k(P)$ with $|A| = k$, then we are within the setting analyzed by Bansal et al.~\cite{bansal2024sensitivity} (even if they require the computation of set $A$ in the algorithm, while we assume set $A$ to be part of the input). 
The main result of~\cite{bansal2024sensitivity} establishes the optimal coreset-size bound for Euclidean $k$-means (when the coreset is constrained to be a subset of the input), and is reported in the following theorem in terms of Alg.~\ref{alg:sensitivity-sampling-oracle-coreset}.

\begin{theorem}[Theorem 1 of Bansal et al.~\cite{bansal2024sensitivity}] \label{thm:bansal-bound}
	Let the set $A$ of centers in Alg.~\ref{alg:sensitivity-sampling-oracle-coreset} be an $\bigO{1}$-approximation to $k$-means problem.
	Then, by setting coreset size $m = \bigOtilde{ k \varepsilon^{-2} \cdot \min \left( \sqrt{k}, \varepsilon^{-2} \right)}$, Alg.~\ref{alg:sensitivity-sampling-oracle-coreset} outputs a coreset for Euclidean $k$-means clustering with constant probability.
\end{theorem}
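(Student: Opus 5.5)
This statement is Theorem~1 of Bansal et al.~\cite{bansal2024sensitivity}, restated in terms of Alg.~\ref{alg:sensitivity-sampling-oracle-coreset}. The plan is therefore mostly a reduction: check that, once $A$ is an $\bigO{1}$-approximation with $|A| = k$, the sampling distribution in line~\ref{line:prob-distro-oracle} and the weights $w_{q} = 1/(m\Prob{q})$ are exactly those analyzed by Bansal et al. Their algorithm first computes an $\bigO{1}$-approximate solution $A$. It then samples i.i.d.\ from the mixture of four distributions: uniform within a cluster, proportional to the cost within a cluster, proportional to the global cost, and proportional to the average cluster cost, each with weight $1/4$. Since their analysis only uses the properties of $A$, not how it was obtained, receiving $A$ as input changes nothing.

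If one wanted to reprove the result, I would follow the chaining framework of Cohen-Addad et al.~\cite{cohen2022improved}, adapted to sensitivity sampling, as in \cite{bansal2024sensitivity}.

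\textbf{Sensitivity bound.} First I would show that the mixture probability dominates the sensitivity. For any $S$ with $|S| = k$ and any $p \in C_j$, the triangle inequality gives
\[
\cost(p,S) \le 2\cost(p,A) + 2\cost(a_j,S).
\]
Averaging over $C_j$ then gives
\[
\cost(a_j,S) \le \frac{2\left(\cost(C_j,S) + \cost(C_j,A)\right)}{|C_j|}.
\]
Combining these with $\cost(P,A) \le \alpha\OPT_k(P) \le \alpha\cost(P,S)$ bounds each point's cost ratio by a constant multiple of the $1/4$-mixture terms. This yields total sensitivity $\bigO{k}$, and the importance weights $1/(m\Prob{p})$ have bounded variance.

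\textbf{Bucketing.} Next I would partition the points into groups by the ratio of $\cost(p,S)$ to $\Delta_j$, and the candidate solutions $S$ into classes according to how far each cluster is from its nearest center. I would show that only $\bigO{\log(1/\varepsilon)}$ groups contribute non-negligible error. Points far from $S$ relative to their cluster are handled with the additive/multiplicative decomposition $\cost(p,S) - \cost(a_j,S)$.

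\textbf{Chaining.} For the near groups, I would build nested nets of clustering cost vectors at scales $2^{-h}$ with net size $\exp(\tilde O(k\min(\sqrt{k},\varepsilon^{-2})))$. I would then apply Bernstein-type concentration to the differences of consecutive nets, with the variance controlled by the cost-proportional term of $\Prob{p}$. Summing over scales gives error $\varepsilon\cost(P,S)$ once $m = \bigOtilde{k\varepsilon^{-2}\min(\sqrt{k},\varepsilon^{-2})}$. A union bound gives constant success probability.

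The main obstacle is the chaining step, specifically obtaining the $\min(\sqrt{k},\varepsilon^{-2})$ net-size/variance trade-off: this requires terminal embeddings and a careful variance bound via the uniform-within-cluster term. For the present paper, I would not redo this step; I would cite \cite{bansal2024sensitivity} after verifying that the distribution matches.
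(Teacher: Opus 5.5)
Your proposal matches the paper. The paper does not reprove this statement; it imports Theorem~1 of Bansal et al., noting only that with an $\bigO{1}$-approximate $A$ of size $k$ supplied as input, Alg.~\ref{alg:sensitivity-sampling-oracle-coreset} uses exactly their four-term mixture distribution and weights $1/(m\Prob{q})$, so it does not matter whether $A$ is computed or received. Your optional reproof sketch is loose in places, but since you defer to the citation this does not affect correctness: the net at scale $2^{-h}$ has log-size of order $\min(2^r + k2^{2h}, 2^t k 2^{2h})$ up to polylogarithmic factors, not $\widetilde{O}(k\min(\sqrt{k},\varepsilon^{-2}))$; the $\min(\sqrt{k},\varepsilon^{-2})$ factor appears only after multiplying by the variance bound; and Bansal et al.\ control that variance via Gaussian symmetrization rather than Bernstein.
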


While Theorem~\ref{thm:bansal-bound} provides an important theoretical result, its practical applicability is limited: 
computing an $O(1)$-approximate $k$-means solution is unfeasible in practice, with all the known constant-factor approximation algorithms requiring superlinear time (in the input size), and being difficult to implement (e.g., see~\cite{charikar2025improved,cohen2022constantapx}).

However, it is not difficult to see that the analysis of Bansal et al.~\cite{bansal2024sensitivity} leads to the same bound (asymptotically) on the coreset size even when set $A$ induces an $\Bicriteria{\bigO{1}}{\bigO{1}}$ bi-criteria approximation (see Section~\ref{sec:introduction}).
%To this extent, we report in Alg.~\ref{alg:sensitivity-sampling-coreset-bicriteria} the modified version which computes such relaxed approximation $A$ (line~\ref{line:assignment-bicriteria}), with $|A| \leq \beta \cdot k$, for constant $\beta \geq 1$. 
In practice, an $\Bicriteria{\bigO{1}}{\bigO{1}}$ bi-criteria approximation can be computed in $\bigTheta{nkd}$ time by using \texttt{kmeans++}~\cite{arthur2006k} with $2k$ centers, with the following guarantees~\cite{makarychev2020improved}:
\begin{equation*} \label{eq:assignment-kmpp-guarantee}
	\frac{\Exp{\cost(P, A)}}{\OPT_k(P)} \leq 5 \cdot \min\left( 2 + \frac{1}{2e} + \ln2 , 1 + \frac{k}{e(k - 1)} \right)
\end{equation*}
which implies, for $k \geq 2$, an approximation factor $\alpha \le 8.68$ in expectation.\footnote{Note that $\alpha$-approximation guarantee is given in expectation when using \texttt{kmeans++} with $2k$ centers.
Applying Markov's inequality and union bound yields the same coreset size bound of Thm.~\ref{thm:bansal-bound}, with constant probability.} (The expectation is over all the possible $2k$ centers reported by \texttt{kmeans++}.)

\subsection{Analysis}
\label{sec:analysis_ss_predictor}
In this section, we describe our analytical results, which are our main theoretical contributions. Due to space constraints, all proofs are in the Appendix.
First, we analyze our algorithm \algname\ (Alg.~\ref{alg:sensitivity-sampling-oracle-coreset}) in terms of coreset size.
We then show how to obtain provably good predictions for \algname\ (i.e., the input set $A$) from historical data in applications where the input arrives as a sequence of snapshots.
We then analyze the time complexity of our algorithm \algname. Finally, we describe a coreset-based clustering algorithm, based on \algname, for $k$-means clustering on sequences of datasets.

\emph{Coreset Size.} Theorem~\ref{thm:sensitivity-sampling-oracle-coreset-size} describes an upper bound to the number  $m$ of samples for which \algname\ provides a coreset (i.e., with rigorous guarantees on the cost of clustering with respect to the whole dataset). Note that the coreset size $m$ is a function of the quality of the set $A$ of (predicted) centers given in input to \algname.

\begin{theorem} \label{thm:sensitivity-sampling-oracle-coreset-size}
	Let $A$ be such that $\AssignmentBound$, and $k \le |A| \le \beta  k$, with \ourassumptions. 
	Then, by setting the coreset size $m = \bigOtilde{k \varepsilon^{-2} \cdot \min \left(\sqrt{k}, \varepsilon^{-2} \right) \cdot \beta \cdot \max \left(1, \alpha^2 \right)}  =  \bigOtilde{k \varepsilon^{-2} \cdot \min \left(\sqrt{k}, \varepsilon^{-2} \right)}$, Alg.~\ref{alg:sensitivity-sampling-oracle-coreset} outputs a coreset for Euclidean $k$-means clustering with constant probability.
\end{theorem}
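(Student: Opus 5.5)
We plan to prove this by a black-box reduction to the analysis of Bansal et al.~\cite{bansal2024sensitivity} (Theorem~\ref{thm:bansal-bound}), keeping track of how $\alpha$ and $\beta$ enter. That analysis uses the approximate solution $A$ in only two places. First, it needs a lower bound on the sampling probabilities $\Prob{p}$ in terms of a quantity that upper bounds the sensitivity. Second, it needs to compare $\cost(P,A)$ with the cost of an arbitrary solution $S$, which is used to control variances and net sizes in the chaining argument. We will re-derive both ingredients for a general bi-criteria $A$ with $\cost(P,A)\le\alpha\OPT_k(P)$ and $|A|\le\beta k$. We then check that every place where the original proof used ``$\alpha=O(1)$, $|A|=k$'' is affected only by multiplicative factors. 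The final claim is that these factors total at most $\beta\max(1,\alpha^2)$, which the assumptions on $\alpha,\beta$ (\ourassumptions) absorb into the $\bigOtilde{\cdot}$.

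\textbf{Step 1 (sensitivity bound).} For any $S$ with $|S|=k$ and $p\in C_j$, let $a_j$ be the center of $C_j$. By the triangle inequality and $(x+y)^2\le 2x^2+2y^2$, we get $\cost(p,S)\le 2\cost(p,A)+2\cost(a_j,S)$. Averaging the same inequality over $C_j$ gives $\cost(a_j,S)\le 2\Delta_j+2\cost(C_j,S)/|C_j|$. Together with $\cost(P,S)\ge\OPT_k(P)\ge\cost(P,A)/\alpha$, this yields
\[
\frac{\cost(p,S)}{\cost(P,S)}\;\lesssim\;\alpha\,\frac{\cost(p,A)+\Delta_j}{\cost(P,A)}+\frac{\cost(C_j,S)}{|C_j|\,\cost(P,S)}.
\]
The first term is at most $4\alpha$ times the corresponding terms of $\Prob{p}$. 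The second term is handled cluster-by-cluster as in the original analysis. That analysis uses the uniform-within-cluster terms $1/(|A||C_j|)$ and $\cost(p,A)/(|A|\cost(C_j,A))$, and these lose exactly a factor $|A|\le\beta k$ in place of $k$. The total sensitivity therefore grows from $O(k)$ to $O(\beta k\max(1,\alpha))$.

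\textbf{Step 2 (plugging into the chaining argument).} We will go through the proof of~\cite{bansal2024sensitivity} lemma by lemma, replacing $k$ by $|A|$ wherever it counts clusters of $A$, and the constant approximation factor by $\alpha$. The clustering $S$ is still a $k$-clustering, so net sizes and dimension-reduction arguments depend only on $k$ and $\varepsilon$ and are unchanged. The variance bounds, however, scale with the ratio of sampling weight to cost. In the split into ``tiny'', ``interesting'' and ``huge'' clusters relative to $\cost(P,A)$, the thresholds shift by a factor $\alpha$, and the variance estimate picks up the weight bound times a cost ratio. This is where we expect the factor $\alpha^2$ rather than $\alpha$. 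The required number of samples then scales by $\beta\max(1,\alpha^2)$.

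\textbf{Main obstacle.} The hard step is Step 2. The original proof is intricate and uses constant-approximation facts implicitly, for instance that points far from $A$ are also expensive for every $S$, or that the number of ``huge'' clusters is $O(k)$. Each such use must be checked to degrade only polynomially in $\alpha,\beta$, and not, for example, through the number of scales in the chaining. The first thing we would try is to isolate a single abstract property the proof uses: $\Prob{p}\ge c\cdot \tilde s(p)/T$ with total sensitivity $T$, together with a bounded cost ratio $\cost(P,A)\le\alpha\cost(P,S)$. We would then rerun the argument with $c=1/(4\alpha)$ and $T=O(\beta k)$. The last step is to confirm that the concentration bound's sample requirement is monotone in these parameters, which gives $m=\bigOtilde{k\varepsilon^{-2}\min(\sqrt k,\varepsilon^{-2})\beta\max(1,\alpha^2)}$. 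The stated assumptions on $\alpha,\beta$ then reduce this to the optimal bound, with constant success probability.
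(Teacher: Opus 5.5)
\textbf{Gap: the argument the bound rests on is deferred, and its description is wrong.} Your Step 2 (redo the analysis of Bansal et al.\ with $k'=|A|$ clusters and approximation factor $\alpha$) is the route the paper takes. But you leave all of that work undone.

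The analysis is not a black box that touches $A$ in two places. The good event $\mathcal{E}$ (cluster-size, ring-size and cluster-cost preservation over all $k'$ clusters), the far/close and low/high-cost splits, bands, types, the sets $B(S)$, the interaction numbers and the cost-vector nets are all defined through the clusters $C_j$ of $A$. For the same reason, the shortcut you would try first, assuming only $\Prob{p}\ge c\,\tilde s(p)/T$, is not enough to run that argument. The argument never uses total sensitivity. It uses the four separate weight bounds of Fact~\ref{fact:weight-bound}, each with $k'$ in place of $k$. In particular, the within-cluster cost term $\cost(p,A)/(|A|\cost(C_j,A))$ can be far larger than any sensitivity upper bound for points of a cheap cluster, so your $\tilde s$ would drop it. Yet this term is what gives property P3 with only $\widetilde{O}(k'\varepsilon^{-2})$ samples, and P3 is used for far clusters, for low-cost clusters and in the chaining. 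Your Step 1 is therefore never needed.

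\textbf{The nets do depend on $|A|$.} You claim the nets are unchanged, but they discretize cost vectors restricted to $B(S)\cap\Omega$, which is a set of $A$-clusters. So the number of signatures becomes $(k'+1)^k$. One also has to guess which of the $k'$ clusters lie in $B(S)$, which adds $k'$ to the log of the net size (Lemma~\ref{lemma:cost-vector-net-size}). The interaction number satisfies $N(S)\le k\,k'\le\beta k^2$, with $r_{max}=\lceil 2\log_2 k'\rceil$. This is where the polylogarithmic bound on $\beta$ is used. Separately, $\beta$ enters the sample size through the factor $k'/k_{B(S)}$ in Lemma~\ref{lemma:variance-bound-second} and Corollary~\ref{corollary:variance-bound-second}.

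\textbf{$\alpha$ does not act by shifting thresholds.} The splits at $\Delta_j\varepsilon^{-2}$ and $T=\varepsilon^3\cost(P,A)/k'$ are unchanged. Instead, $\alpha$ appears in three places:
\begin{itemize}
\item In the low-cost contribution $\varepsilon\cdot O(\alpha)\cost(P,S)$, which holds even on $\mathcal{E}$ and forces a rescaling of $\varepsilon$.
\item In the worst-case bounds $O(\max(1,\alpha)k')$ and $O(\alpha\varepsilon^{-2})$ on $\bar{\mathcal{E}}$. Via the law of total expectation these must be absorbed by $\Pr[\bar{\mathcal{E}}]$ being polynomially small in $k'/\varepsilon$. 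This is one place where an upper bound on $\alpha$ is genuinely needed.
\item Squared, as $(\cost(P,A)/\cost(P,S))^2\le\alpha^2$, in the Gaussian variance bounds, and through the $\min(1,1/\alpha)$ factor in the lower bound on the denominator of $X^{S,Init}$.
\end{itemize}
These pieces, together with event $\mathcal{E}$ for $k'$ clusters, are the actual content of the proof, and your proposal does not supply them.
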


Theorem~\ref{thm:sensitivity-sampling-oracle-coreset-size} shows that \algname\ matches  the state-of-the-art bound from Bansal et al.~\cite{bansal2024sensitivity} (see Theorem~\ref{thm:bansal-bound}) on the coreset size, while extending the range of possible values for $\alpha$ and $\beta$ from $\bigO{1}$ to \ourassumptions. The dependence of coreset size $m$ on $\alpha$ and $\beta$ was already known in the literature (e.g., see~\cite{braverman2016new}); however such dependence is not explicit in the state-of-the-art bound of~\cite{bansal2024sensitivity} (which considers a  probability distribution and analytical tools very different from the ones used in~\cite{braverman2016new}), and, to the best of our knowledge, has not been derived before for the probability distribution we consider.

Note that Theorem~\ref{thm:sensitivity-sampling-oracle-coreset-size} has practical implications for our framework of $k$-means with predictions, by explicitly accounting for possibly inaccurate predictions (i.e., when \ourassumptions\ and $\alpha, \beta = \omega(1)$), while still guaranteeing an asymptotically optimal coreset-size bound.

\emph{Predictions in Sequence of Snapshots.}
In the following, we show that, considering the natural model in which the data appears as a sequence of snapshots from the same distribution, the centers computed on a past snapshot can be used as predictions to derive coresets in subsequent snapshots. 
Formally, we consider a sequence of finite pointsets $P_1, P_2, \dots$ drawn i.i.d. from an (unkwown) distribution $\Distro$ over $\mathbb{R}^d$, where $|P_i| = n_i$.
% disclaimer on range of cost functions
To derive our result, we leverage the statistical clustering framework proposed in~\cite{ben2007framework}. As in~\cite{ben2007framework}, we assume that distances lie in $[0, 1]$, allowing simpler formulas for the convergence bounds. Alternatively, one could assume that distances are bounded by some constant $\Delta > 1$: in this case our results follow with a straightforward rescaling of the bounds by a $\Delta^2$ factor.

Let $\mathcal{A}$ be an algorithm that, on pointset $P$, produces a set $\mathcal{A}(P) \subseteq P$ of centers inducing an $\Bicriteria{\alpha}{\beta}$ bi-criteria approximation.
This corresponds to requiring $\cost(P, \mathcal{A}(P)) \leq \alpha \cdot \OPT_k(P)$, and $|\mathcal{A}(P)| = k' \le \beta k$. Let $\OPT_k(\Distro)$ be the minimum value of  $\Expsub{p \sim \Distro}{\cost(p, S)}$, over the choice of the set $S$ of $k$ centers.\footnote{$\Expsub{p \sim \Distro}{\cost(p, S)}$ is the natural objective function for the problem of statistical clustering (e.g., see~\cite{ben2007framework}).}

% result
The following theorem proves that, for two sufficiently large datasets $P_i, P_j$ from the same underlying (and unknown) distribution $\Distro$, ``good'' centers for $P_i$ are ``good'' centers for $P_j$ as well.
\begin{theorem}
	\label{thm:good_predictions}
	For any $\varepsilon \in (0, 1/2]$, and any distribution $\Distro$ over $\mathbb{R}^d$, let $P_i, P_j \subseteq \mathbb{R}^d$ be two pointsets of size $n_i, n_j$, respectively, drawn i.i.d. from $\Distro$. 
	Let $\mathcal{A}$ be an algorithm returning a set $\mathcal{A}(P_i) \subseteq P_i$ of $k'$ centers that yields an $\Bicriteria{\alpha}{\beta}$ bi-criteria approximation for dataset $P_i$. 
	If $n_i, n_j = \widetilde{O} \left( {\frac{k'}{ \varepsilon^2  \OPT_{k'}(\Distro)}} \right) $
	then, with constant probability (over the choices of $P_i$ and $P_j$), $
	\cost\left(P_j, \mathcal{A}\left(P_i\right)\right) \leq \bigO{\alpha} \cdot \OPT_k\left(P_j\right)$.
\end{theorem}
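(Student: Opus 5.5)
The plan is to pass through the distributional objective $\Expsub{p \sim \Distro}{\cost(p, S)}$ and use uniform convergence of empirical $k'$-means costs, following the statistical clustering framework of~\cite{ben2007framework}. Since distances lie in $[0,1]$, each $\cost(p,S)\in[0,1]$. For a fixed center set $S$, the empirical average $\cost(P,S)/|P|$ concentrates around $\Expsub{p\sim\Distro}{\cost(p,S)}$. The first step is a uniform statement over all sets $S$ of at most $k'$ centers. The function class $\{p\mapsto \cost(p,S)\}$ has pseudo-dimension $\widetilde{O}(k'd)$, or one can use an $\varepsilon$-net over centers in the unit ball. I will aim for the \emph{relative} (multiplicative) form of the bound, e.g.\ via Bernstein-type or relative-deviation bounds:
\[
\left|\frac{\cost(P,S)}{|P|}-\Expsub{p\sim\Distro}{\cost(p,S)}\right| \le \varepsilon\,\Expsub{p\sim\Distro}{\cost(p,S)}
\]
for all $S$ with $\Expsub{p\sim\Distro}{\cost(p,S)}\ge \OPT_{k'}(\Distro)$.

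The required sample size for such a relative bound scales like $\widetilde{O}\bigl(k'/(\varepsilon^2\mu)\bigr)$, where $\mu$ is a lower bound on the expectation. Here $\mu=\OPT_{k'}(\Distro)$ works, since any set of at most $k'$ centers has expected cost at least $\OPT_{k'}(\Distro)$. This matches the hypothesis on $n_i,n_j$; I treat the dimension factor as absorbed by the paper's $\widetilde{O}$ convention. I apply this bound to both $P_i$ and $P_j$ and take a union bound to get constant probability.

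The second step is the chain of inequalities. Let $A=\mathcal{A}(P_i)$ and write $\mathrm{E}(S)$ for $\Expsub{p\sim\Distro}{\cost(p,S)}$.
\begin{itemize}
\item By uniform convergence on $P_j$: $\cost(P_j,A)/n_j\le(1+\varepsilon)\,\mathrm{E}(A)$.
\item By uniform convergence on $P_i$: $\mathrm{E}(A)\le \cost(P_i,A)/((1-\varepsilon)n_i)$.
\item By the bi-criteria guarantee: $\cost(P_i,A)\le\alpha\,\OPT_k(P_i)$.
\item Let $S^*$ be an optimal set of $k$ centers for $\Distro$ (or near-optimal, if the infimum is not attained). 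Then $\OPT_k(P_i)\le\cost(P_i,S^*)\le(1+\varepsilon)n_i\,\OPT_k(\Distro)$, where the last inequality is concentration for this single fixed $S^*$.
\item Finally, I need $\OPT_k(\Distro)\le \OPT_k(P_j)/((1-\varepsilon)n_j)$. This follows from the uniform lower-tail bound on $P_j$ applied to the empirically optimal $k$ centers of $P_j$; those centers have expected cost at least $\OPT_k(\Distro)\ge\OPT_{k'}(\Distro)$, so they are in the range where the relative bound holds.
\end{itemize}
Chaining these with $\varepsilon\le 1/2$ gives $\cost(P_j,A)\le \frac{(1+\varepsilon)^2}{(1-\varepsilon)^2}\,\alpha\,\OPT_k(P_j)=O(\alpha)\,\OPT_k(P_j)$.

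The main obstacle is the relative uniform convergence with the $1/\OPT_{k'}(\Distro)$ dependence. An additive Hoeffding bound would need $n=\widetilde{O}(k'/(\varepsilon^2\OPT^2))$, which is one factor of $\OPT$ too many. My first attempt is a Bernstein bound: since $\cost\in[0,1]$, the variance is at most the mean, so a deviation of $\varepsilon\mu$ needs only about $\log(\text{net size})/(\varepsilon^2\mu)$ samples. I would apply this over a finite net of center sets of size $\exp(\widetilde{O}(k'd))$, then handle the net discretization error. That error is additive of order $\delta$, so I choose $\delta=\varepsilon\,\OPT_{k'}(\Distro)$; this only adds logarithmic factors. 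Failing that, I would cite a relative-deviation VC bound (e.g.\ Li--Long--Srinivasan) directly.
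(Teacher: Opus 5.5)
Your chain of inequalities through $\Expsub{p\sim\Distro}{\cost(p,S)}$ is correct and mirrors the paper's proof. The paper packages your second through fourth steps into an application of Ben-David's Theorem~5, and handles your last step with a coverage set $S'\subseteq P_j$. Routing through $\OPT_k(\Distro)$ is in fact the consistent choice, since the bi-criteria guarantee is relative to $\OPT_k(P_i)$; the paper's step (ii) writes $\OPT_{k'}(\Distro)$ instead.

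\textbf{The gap is the uniform convergence lemma.} Consider a net over center sets in $\mathbb{R}^d$ at scale $\varepsilon\,\OPT_{k'}(\Distro)$: the logarithm of its size is of order $k'd\log(1/(\varepsilon\,\OPT_{k'}(\Distro))))$. A pseudo-dimension or relative-deviation VC bound likewise carries a factor of order $k'd$. So your argument needs samples of size $\widetilde{O}(k'd/(\varepsilon^2\,\OPT_{k'}(\Distro)))$. The factor $d$ is not polylogarithmic, so the paper's $\widetilde{O}$ (which hides only polylog factors) does not absorb it. The stated bound is dimension-free, so as written you prove a weaker theorem.

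\textbf{The missing idea} is the hypothesis $\mathcal{A}(P_i)\subseteq P_i$. Your proposal never uses it, yet it is what makes Ben-David's dimension-free, compression-style bound available.
\begin{itemize}
\item Fix any $k'$-tuple of indices into $P_i$. The other $n_i-k'$ points are i.i.d.\ and independent of the centers named by the tuple.
\item Your own Chernoff/Bernstein estimate (costs in $[0,1]$, mean at least $\OPT_{k'}(\Distro)$), together with a union bound over at most $n_i^{k'}$ tuples, yields $\cost(P_i,A)\ge(1-\varepsilon)(n_i-k')\,\Expsub{p\sim\Distro}{\cost(p,A)}$ once $n_i=\Omega(k'\log n_i/(\varepsilon^2\,\OPT_{k'}(\Distro)))$.
\item The points of $A$ have zero cost and only contribute the harmless factor $(n_i-k')/n_i$.
\item The term $\log n_i$ replaces your $d\log(1/\delta)$.
\end{itemize}

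Your last step applies uniform convergence to the empirical optimum of $P_j$, which is not a subset of $P_j$, so the same trick does not apply to it directly. Replace it by some $S'\subseteq P_j$ with $|S'|=k$ and $\cost(P_j,S')\le 2\,\OPT_k(P_j)$. Such an $S'$ exists because, by averaging, every optimal cluster contains a data point that is a $2$-approximate center for it. Then apply the same tuple union bound on $P_j$ and absorb the constant into $O(\alpha)$; this is exactly the paper's coverage step.

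Your first and fourth steps concern center sets that are independent of the sample in question ($A$ for $P_j$, $S^*$ for $P_i$). A single multiplicative Chernoff bound therefore suffices there, with no uniform convergence needed.
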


Note that the  sample size in Thm.~\ref{thm:good_predictions} depends on $1 / OPT_{k'}(\Distro)$, where $OPT_{k'}(\Distro) \in [0, 1]$ as we are considering normalized costs. This is inherent in the definition of the problem, since we target \emph{multiplicative} approximation guarantees for bi-criteria approximations. 
%Also, remember that $\OPT_{k'}(\Distro) \in [0, 1]$, as we are considering normalized costs.
Note that in practice $\OPT_{k'}(\Distro) > 0$, since $\OPT_{k'}(\Distro) = 0$ means that $\Distro$ is, in essence, defined on $k'$ points only. We therefore assume that $\OPT_{k'}(\Distro) \ge  \nu$ for some value $\nu > 0$.\footnote{Note that assumptions on the data-generating distribution $\Distro$ are a common requirement in $k$-means clustering. Typical assumptions are: (i) bounded distance $\dist \in [0, \Delta]$ (\cite{ben2007framework,czumaj2007sublinear}); (ii) optimal cluster sizes of at least $\Omega(n \varepsilon / k)$ (\cite{huang2023power,meyerson2004k}); (iii) well-clusterable datasets satisfying $\OPT_{k-1}(P) / \OPT_k(P) \ge 1 + \beta$ (\cite{bansal2024sensitivity}).}
With such an assumption, we obtain the following result.
\begin{corollary}
	\label{corollary:bicriteria_from_snapshots}
	For every $\varepsilon \in (0, 1/2]$, and every distribution $\Distro$ over $\mathbb{R}^d$ such that $\OPT_{k'}(\Distro) \ge \nu$, for some integer $k$', let $P_i, P_j \subseteq \mathbb{R}^d$ be two pointsets of size 
	$
	n_i, n_j = \bigOtilde{ \frac{k'}{ \varepsilon^2 \; \nu }}
	$ 
	drawn i.i.d. via $\Distro$.
	If a set $A \subseteq P_i$ of $k'$ centers induces an $\Bicriteria{\alpha}{\beta}$ bi-criteria approximation on $P_i$, then, with constant probability (over the choices of $P_i$ and $P_j$), the same set $A$ induces an $\Bicriteria{\bigO{\alpha}}{\beta}$ bi-criteria approximation on $P_j$. 
\end{corollary}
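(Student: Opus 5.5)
The corollary follows from Theorem~\ref{thm:good_predictions}, after we check that its sample-size hypothesis holds and that both bi-criteria conditions transfer. The plan has three steps.

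\emph{Step 1: the sample size.} Theorem~\ref{thm:good_predictions} asks for $n_i, n_j = \widetilde{O}\left(\frac{k'}{\varepsilon^2 \OPT_{k'}(\Distro)}\right)$. Since $\OPT_{k'}(\Distro) \ge \nu > 0$, we have $\frac{1}{\OPT_{k'}(\Distro)} \le \frac{1}{\nu}$. So any size of order $\bigOtilde{\frac{k'}{\varepsilon^2 \nu}}$, with the hidden polylogarithmic factors chosen as in the theorem, is at least the size the theorem needs. In other words, sizes of order $k'/(\varepsilon^2\nu)$ suffice.

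\emph{Step 2: apply the theorem to $A$.} Treat $A$ as the output $\mathcal{A}(P_i)$ of the trivial algorithm that returns $A$ on $P_i$. By hypothesis $A \subseteq P_i$, $|A| = k' \le \beta k$, and $\cost(P_i, A) \le \alpha \cdot \OPT_k(P_i)$. Theorem~\ref{thm:good_predictions} then gives, with constant probability over $P_i, P_j$,
\[
\cost(P_j, A) \le \bigO{\alpha} \cdot \OPT_k(P_j).
\]
This is the approximation half of an $\Bicriteria{\bigO{\alpha}}{\beta}$ bi-criteria guarantee on $P_j$.

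\emph{Step 3: the cardinality condition.} The size condition $|A| = k' \le \beta k$ does not depend on the dataset. It therefore holds verbatim on $P_j$, and both conditions together give the $\Bicriteria{\bigO{\alpha}}{\beta}$ bi-criteria approximation on $P_j$.

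The main obstacle is Step 2. Theorem~\ref{thm:good_predictions} is stated for an algorithm, whereas here $A$ is a fixed set that may have been chosen after seeing $P_i$. This is harmless only if the theorem's proof uses nothing about $\mathcal{A}$ beyond three facts: its output is a subset of $P_i$ of size $k'$, it satisfies the bi-criteria inequality on $P_i$, and the argument rests on uniform convergence over all $k'$-center sets (as in Ben-David's framework), so that the dependence of $A$ on $P_i$ does not matter. I would check this point when reading the proof of the theorem. If it needs more, I would fall back on applying the theorem's uniform-convergence bound directly to $A$.
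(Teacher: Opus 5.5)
Your proposal is correct and follows the paper's own proof, which applies Theorem~\ref{thm:good_predictions}, restates the size bound using $\nu \le \OPT_{k'}(\Distro)$, and reads off the $\Bicriteria{\bigO{\alpha}}{\beta}$ guarantee from the definition of a bi-criteria approximation. Your worry about $A$ depending on $P_i$ is resolved the way you expected: the underlying argument only uses that $A \subseteq P_i$ with $|A| = k'$ satisfies the approximation inequality on the realized $P_i$, because the uniform-convergence bound of Theorem~\ref{thm:uc_for_fixed_centers_mul_apx} holds for all center sets taken from the sample at once, and the transfer to $P_j$ only needs $A$ to be independent of $P_j$.
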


\emph{Clustering Algorithm.}
The result of Corollary~\ref{corollary:bicriteria_from_snapshots} gives the following natural clustering algorithm for datasets represented as a sequence of (large enough) snapshots. First, on the initial snapshot, compute a bi-criteria approximation $A$ (e.g., by using \texttt{$k$-means++} with $2k$ centers). The, for each subsequent snapshot, reuse $A$ as prediction in \algname\ to build a theoretically optimal coreset (in the sense of small size, see Thm.~\ref{thm:sensitivity-sampling-oracle-coreset-size}), and finally run a clustering algorithm on such coreset. 

\smallskip
\emph{Relaxing the Fixed Distribution Assumption.}
As stated above, we assume that the pointsets are drawn i.i.d. from the same underlying generating distribution $\Distro$.  Such an assumption may be too restrictive in some settings, where the distribution is \emph{dynamic}, that is, it changes over time.
A natural way to extend Corollary~\ref{corollary:bicriteria_from_snapshots} to pointsets that come from a 
dynamic distribution is to view the two pointsets $P_i$ and $P_j$ as being drawn from \emph{similar}, but not necessarily identical, (unknown) distributions $\Distro_i$ and $\Distro_j$. More specifically, $\Distro_i$ is the distribution generating the pointset $P_i$ on which the centers are computed, while $\Distro_j$ is the distribution generating the later pointset $P_j$, for which we seek our generalization guarantees.

Informally speaking, in this setting one needs to account for the \emph{drift} of the distribution~\cite{mazzetto2024center}, by measuring how much the clustering cost changes when moving from $\mathcal{D}_i$ to $\mathcal{D}_j$. 
This viewpoint is closely related to learning theory under different domains (e.g., see~\cite{ben2010theory}), where generalization depends also on a suitable discrepancy between the considered distributions.
As a consequence, our theoretical results can be adapted to dynamic distributions, at the price of an additional additive term depending on the drift  in Theorem~\ref{thm:good_predictions}.
For example, Corollary~\ref{corollary:bicriteria_from_snapshots} holds also in the case of dynamic distributions whenever the additive term due to distribution drift is small compared to the target optimum $\OPT_k \left( P_j \right)$. 

\smallskip
\emph{Time Complexity.}
Given the predicted centers, \algname\ computes the induced assignment (clusters and costs), and then samples accordingly. 
Therefore, the total time complexity of \algname\ is $\bigTheta{nkd}$, matching the complexity of the ``standard'' sensitivity sampling (i.e., the algorithm of~\cite{bansal2024sensitivity} that computes the approximation of sensitivities via \texttt{kmeans++} with $2k$ centers).
However, the assignment step in \algname\ is more efficient in practice than the computation of the initial approximation required by the algorithm of~\cite{bansal2024sensitivity}, which is obtained by running \texttt{kmeans++} with $2k$ centers. This allows \algname\ to scale more effectively on larger datasets compared to its counterpart not leveraging predictions, as we show in our experimental evaluation (Section~\ref{sec:experiments}).

Note that, if $\nu\ge \bigOtilde{ \frac{k'}{k} \cdot 1 / \min \left( \sqrt{k}, \varepsilon^{-2} \right)}$ in Corollary~\ref{thm:good_predictions}, then for dataset size $n$ and coreset size $m$ it follows $n \le m$. In this case, we simply return the entire dataset to serve as coreset (as it is already small enough for our purposes).

\section{Experiments}
\label{sec:experiments}
In this section, we present our experimental evaluation. 

\smallskip
\emph{Evaluation Procedure.}
% -- compression + optimization
Verifying that a (weighted) set of points satisfies the guarantees of a coreset (see Def.~\ref{def:coreset}) is computationally difficult: it is co-NP-hard~\cite{schwiegelshohn2022empirical} to verify even for \emph{weak coreset}\footnote{A weak coreset only guarantees that a $(1+\varepsilon)$-approximation computed on the coreset yields a $(1+\varepsilon)$-approximation on the full dataset.} guarantees. Moreover, when coresets are used for $k$-means clustering, a common practice~\cite{ackermann2012streamkm++,fichtenberger2013bico} for comparing different approaches for computing coresets is to compare the cost of the clusterings obtained by running the same clustering algorithm on such coresets. The coreset that yields the minimum cost is considered the best.

Overall, we assess the performance of our algorithm \algname\ according to the following criteria: (i) clustering quality in terms of clustering a coreset; (ii) \emph{estimated distortion}, defined below, which assesses strong-coreset guarantees; (iii) runtime; (iv) coreset size. 
Notice that for all metrics, it yields the lower the better. 

% -- baselines
\smallskip
\emph{Baselines.}
We compare our algorithm \algname\ (Alg.~\ref{alg:sensitivity-sampling-oracle-coreset}) with the following algorithms. 
First, we consider the state-of-the-art sensitivity-sampling approach described in Bansal et al.~\cite{bansal2024sensitivity} where the initial $\Bicriteria{\bigO{1}}{\bigO{1}}$ bi-criteria approximation is obtained by running \texttt{kmeans++} with $2k$ centers. We refer to this algorithm as \ssstandard.
Then, we include the following practical heuristic, which we denote with \sstwenty. For each point $p \in P$, we define its sampling probability proportional to the maximum of $\cost(p, A) / \cost(P, A)$ over $20$ independent runs of \texttt{kmeans++} using \emph{exactly} $k$ centers. Note that such sampling probabilities do not account for sizes and costs of clusters (while \algname\ and \ssstandard\ do). Therefore, there is no guarantee that they  upper bound true sensitivities (as required to have coresets guarantees~\cite{feldman2011unified}), but they serve as an empirical proxy for such quantities.
We also consider \fastcoreset~\cite{draganov2024settling}, a sensitivity-based algorithm with quadree embeddings and \emph{hierarchically separated tree} (hist) metric. Since  \fastcoreset\ cannot handle duplicated points, following authors' implementation, for every point in input to \fastcoreset\ uniform noise in $[-5, 5]\times 10^{-5}$ is added i.i.d. in each dimension. 
Finally, we consider \uniformsampling, which samples $m$ points uniformly at random and assigns weight $n/m$ to each sampled point. Note that \uniformsampling\ does not provide coreset guarantees (Def.~\ref{def:coreset}), but it is the only method running in sublinear time.

% -- datasets
\smallskip
\emph{Datasets.}
We consider commonly used real-world datasets from related works. Table~\ref{tab:datasets} summarizes the datasets; further details and visualizations appear in Section~\ref{sec:datasets-appendix} in the Appendix.
We converted each dataset into a \emph{sequence of snapshots} by aggregating points according to fixed temporal intervals (i.e., days, months or years).

\begin{table}[htbp]
	\centering
	\setlength{\tabcolsep}{7.5pt} % Default value: 6pt
	\resizebox{\textwidth}{!}{
		\begin{tabular}{ c c c c c c }
			Dataset & Total Points  & \# Snapshots &  Aggregation & $n_{max}$ & $d$  \\
			\toprule
			Twitter 			    & $14M$ & 7		    			  & Daily 						          & $2.2M$            & 3		   \\
			IntelLab 				& $2.2M$ & 34	  	  	        & Daily  							       & $101k$				 & 6			\\
			Taxi 					   & $1.7M$ & 12		  	 	    & Monthly						        & $161k$				 & 2			\\
			NYC TLC (M)		     & $47M$	 & 4			      & Monthly	   			              & $12.7M$		     & 16		\\
			NYC TLC (Y)		     & $743M$ & 10				    & Yearly	   			              & $146M$			 & 14		\\
			\bottomrule	
		\end{tabular}
	}
	\caption{Datasets' statistics: total number of points across snapshots;  number of snapshots; aggregation interval; maximum number $n_{max}$ of points over all the snapshots; number $d$ of features.}
	\label{tab:datasets}
\end{table}

% -- predictor
\emph{Predictions.}
To empirically assess our theoretical insight from Corollary~\ref{corollary:bicriteria_from_snapshots}, we derive predicted centers for our algorithm \algname\ as described at the end of Section~\ref{sec:analysis_ss_predictor}:
for each dataset, on the \emph{first} snapshot we run  \texttt{kmeans++} to obtain a set $A$ of $2k$ centers which is an $\Bicriteria{\bigO{1}}{\bigO{1}}$ bi-criteria approximation (in expectation); we then reuse the \emph{same} set $A$ of centers for \emph{all} subsequent snapshots in the sequence. 
%\cristian{According to Corollary~\ref{corollary:bicriteria_from_snapshots}, such approach would guarantee valid bi-criteria approximation for the future snapshots (assuming all datasets have been drawn i.i.d. from the same data-generating distribution).}

% -- setup
\smallskip
\emph{Setup.}
We implemented \algname, \ssstandard, \sstwenty, and \uniformsampling\ in C{\small++}17, compiled with \texttt{gcc} 9.4.0 and \texttt{O3} optimization.
We used authors' implementation for \fastcoreset.
All the code was executed on a 2.20 GHz Intel Xeon CPU with 1 TB of RAM, on Ubuntu $20.04$, using a \emph{single thread}. The source code can be found at \url{https://github.com/VandinLab/PreSenS}. 
Following prior works~\cite{ackermann2012streamkm++,fichtenberger2013bico,schwiegelshohn2022empirical,draganov2024settling} we evaluate the number $k$ of clusters $k \in \{10, 20, 50\}$, and coreset size $m = f \cdot k$, with $f \in \{50, 200, 500\}$. 
Since results are similar across $m$, we report plots for $m = 50k$, and defer to Section~\ref{sec:additional-results-appendix} in the Appendix for all results. 
We do not report results for algorithms \fastcoreset\ and \sstwenty\ on NYC TLC datasets, since their running time and memory usage are out of scale compared to the other methods. 
For instance, on the first snapshot of NYC TLC (M) with $k = 10$ and $m = 50k$ (i.e., the smallest parameters we consider), \fastcoreset\ exceeded 167GB peak memory and $7200s$ runtime, while \sstwenty\ required 6GB and $231s$ runtime (other algorithms take less than 3.1GB and $18s$).

% -- compression + opt  snap
\CompactLogRatioCompressionOptimization{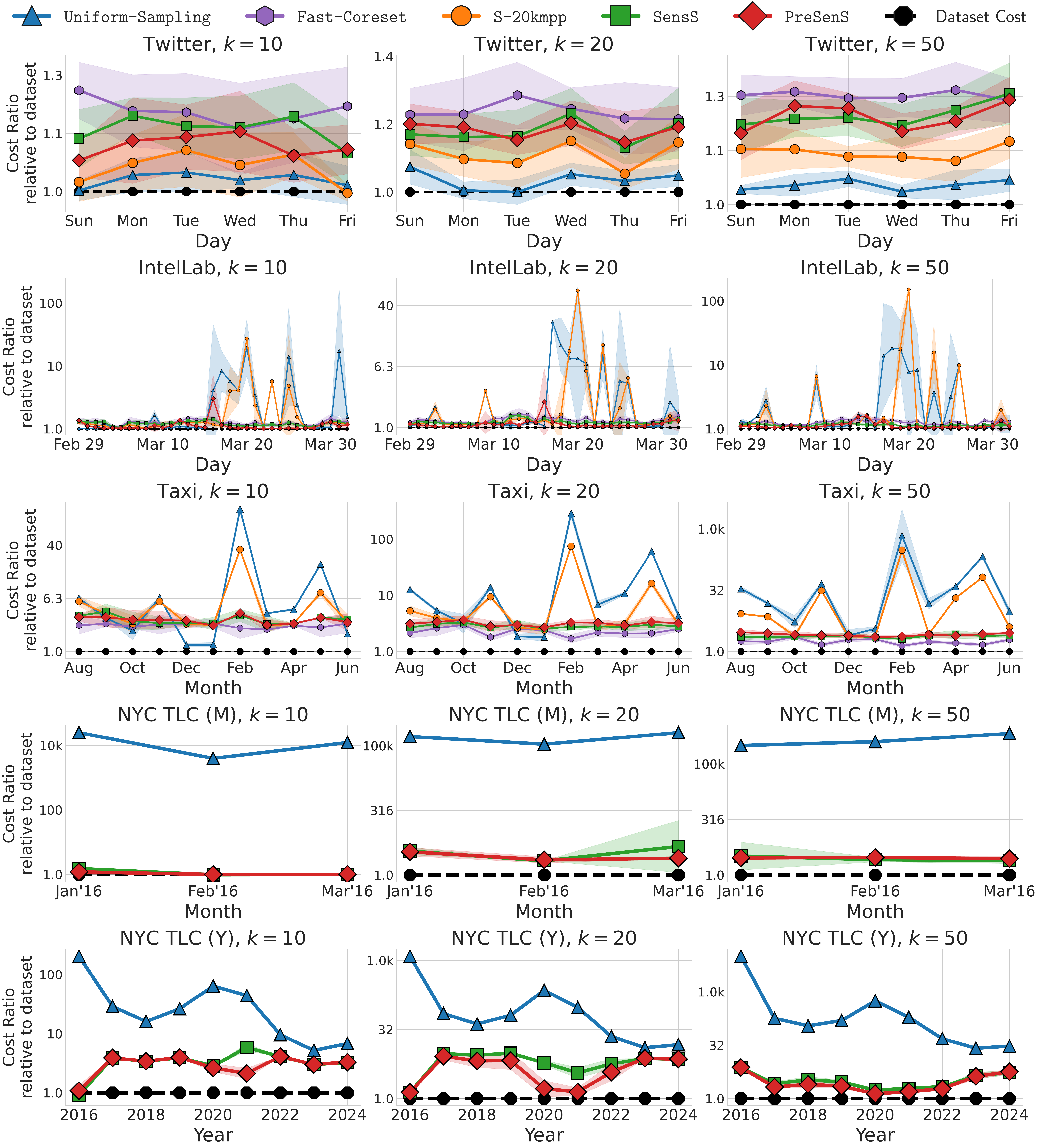}
\smallskip
\emph{Clustering Quality.}
To evaluate the downstream impact of the algorithms to compute coresets, we run \texttt{kmeans++} followed by Lloyd's algorithm until convergence on each coreset, obtaining a set $S$ of centers. 
We then project these centers back onto the full dataset $P$, by running one additional Lloyd's step on $P$, resulting in a set $S_P$ of centroids; we report the final objective value $\cost(P, S_P)$. 
As a reference, we run the same optimization directly on the full dataset and compare the final costs.

Results are reported in Fig.~\ref{fig:log_ratio_compression_optimization_m50}. For readability, we display for each algorithm the ratio between its achieved cost and the cost on the dataset, on a log scale. 
On Twitter, \uniformsampling\ attains the lowest costs, while \sstwenty\ consistently performs the best among sensitivity-based algorithms. 
However, on datasets IntelLab, Taxi, NYC TLC, both \uniformsampling\ and \sstwenty\ show extremely poor performance on several snapshots (on NYC TLC (M), up to $100k$ time worse than \algname), due to uneven distribution of points which are captured only by approximating sensitivities. \fastcoreset\ shows competitive performance on Taxi but slightly worse on IntelLab. 
% Recall that we omit it on NYC TLC due to time and memory constraints. 
Overall, \algname\ matches the performance of \ssstandard, which recomputes a bi-criteria approximation from scratch on each snapshot, suggesting that reusing predicted centers from the first snapshot preserves the quality of coresets to achieve good optimization. 
These observations are consistent with our theoretical insight in Corollary~\ref{corollary:bicriteria_from_snapshots}: a bi-criteria approximation computed on an earlier snapshot can serve as a reliable starting point for producing high-quality coresets on later snapshots. 
Surprisingly, on NYC TLC (Y) our algorithm \algname\ is always comparable or better than \ssstandard: this is due to predicted centers inducing higher-quality approximation of sensitivities with respect to recomputing approximations with \texttt{kmeans++} with 2k centers. 

\smallskip
\sloppy{
\emph{Estimated Distortion on Candidates.} Since (strong) coresets guarantees cannot be efficiently tested, we use the procedure proposed in~\cite{schwiegelshohn2022empirical} to empirically assess the quality of coresets (independently of clustering results). In particular, we empirically estimate the \emph{distortion} of the coreset  $\Omega$ obtained using an algorithm on dataset $P$, where the distortion is
$\sup_{S \subset \mathbb{R}^d \colon |S| = k} \max \left( \frac{\cost(P, S)}{\cost_\Omega(P, S)}, \frac{\cost_\Omega(P, S)}{\cost(P, S)} \right)$. In practice, we use a fixed number of sets $S$ of centers to estimate the distortion, as in~\cite{schwiegelshohn2022empirical}. Concretely, we sample $25$ candidate sets via \texttt{kmeans++}, $25$ candidate sets uniformly at random from the Convex Hull (\texttt{CH}), and $25$ candidate sets uniformly at random in Minimum Enclosing Ball (\texttt{MEB}). 
Additionally, we consider $25$ candidate sets obtained by picking centers uniformly at random.
We apply each of the four sampling methods both to the dataset $P$ and to the coreset $\Omega$, obtaining two sets $\mathcal{C}_P$ and $\mathcal{C}_{\Omega}$. }
Writing $\mathcal{C}_X = \{ S_1, \dots, S_{100}\}$ for $X \in \{P, \Omega \}$, we define the \emph{estimated distortion} as:
\begin{equation*} \label{eq:estimated-distortion}
	\max_{S \in \mathcal{C}_P \cup \mathcal{C}_{\Omega}} \max \left( \frac{ \cost(P, S) }{\cost_{\Omega}(P, S)}, \frac{\cost_{\Omega}(P, S)}{\cost(P, S)} \right).
\end{equation*}

\EstimatedLogRatioDistortion{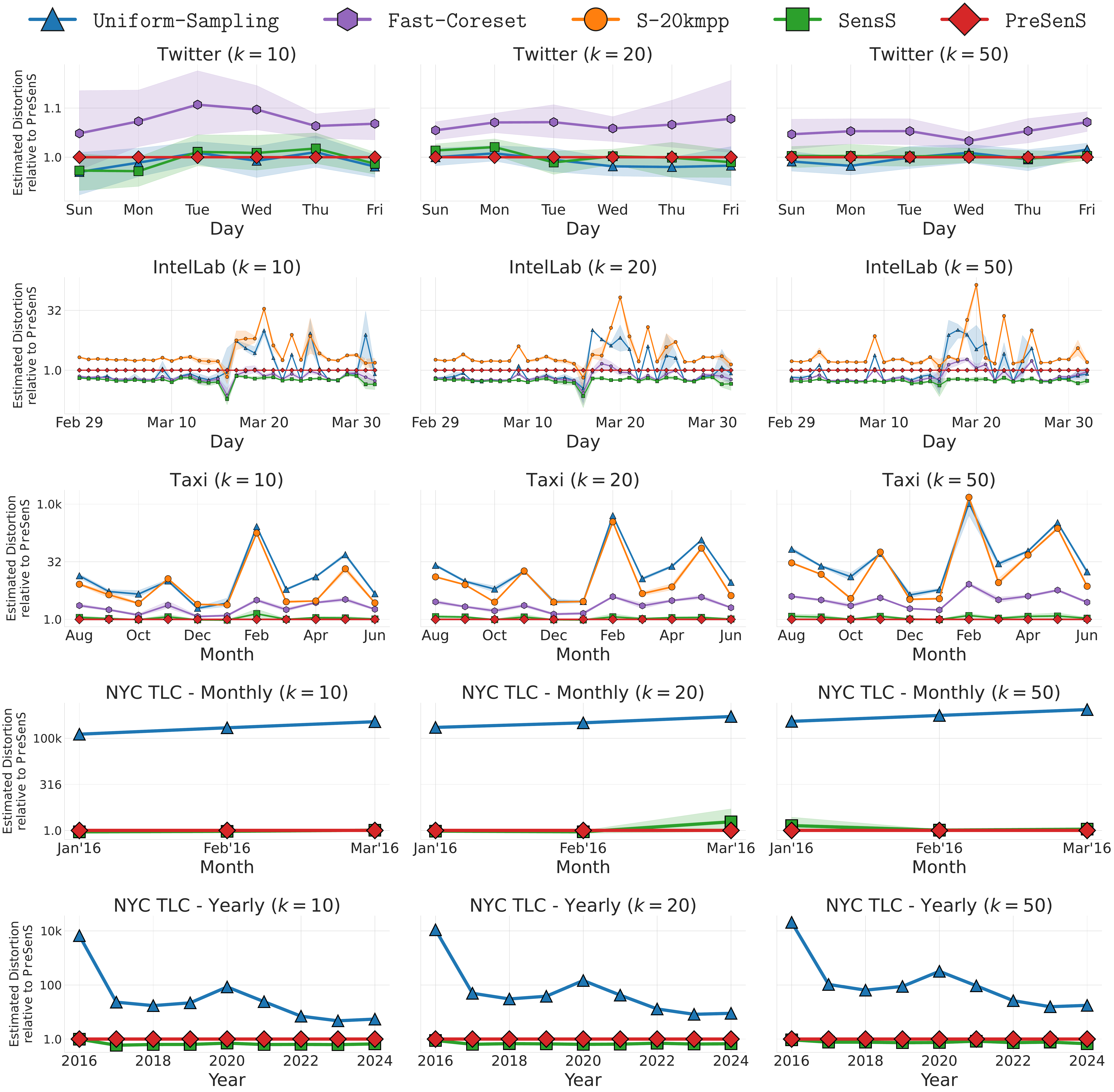}
Fig.~\ref{fig:estimated_log_ratio_distortion_m50} reports the full results.
Again, for readability, we display the ratio between each algorithm's estimated distortion and the estimated distortion of our algorithm \algname, on a log scale.  
On Twitter sequence, estimated distortions are broadly comparable except for \fastcoreset\ and \sstwenty\ (the latter is not displayed for readability, as its distortion is always more than $2.73\times$ that of \algname).

In IntelLab, \fastcoreset\ and \ssstandard\ have better estimated distorsions than \algname, with an estimated distortion ratio of $0.75\times$ and $0.56\times$ compared to the one of \algname, respectively, averaged over all the snapshots and $k$'s. This is due to a drastic shift in the distribution generating the snapshots for IntelLab: let $A$ be the predicted centers (computed via \texttt{kmeans++} with $2k$ centers) from the first snapshot $P_1$. For $k = 10$, the value of $\cost(P_i, A)$ for all $i > 1$ is from $120\times$ and up to $10,7424\times$ the value of $\cost(P_1, A)$. For comparison, such values range from $0.92\times$ and up to $1.06\times$ in Twitter, and from $0.99\times$ and up to $175\times$ in Taxi. Analogous results hold for the other values of $k$ we considered. Note that this matches the theoretical result of our Theorem~\ref{thm:sensitivity-sampling-oracle-coreset-size}: we get ($\varepsilon$-strong) coresets whenever our predicted centers are good enough (in terms of inducing a good bi-criteria approximation).

In Taxi, NYC TLC (M) and NYC TLC (Y), the coresets from our algorithm \algname\  have  distortion comparable to the coresets from \ssstandard; for Taxi, \algname\ is also always better than other methods.
In contrast,  \uniformsampling\ exhibits substantially larger estimated distortion than \ssstandard\ and \algname, similarly to the results of clustering quality (Fig.~\ref{fig:log_ratio_compression_optimization_m50}).

\CompactRuntime{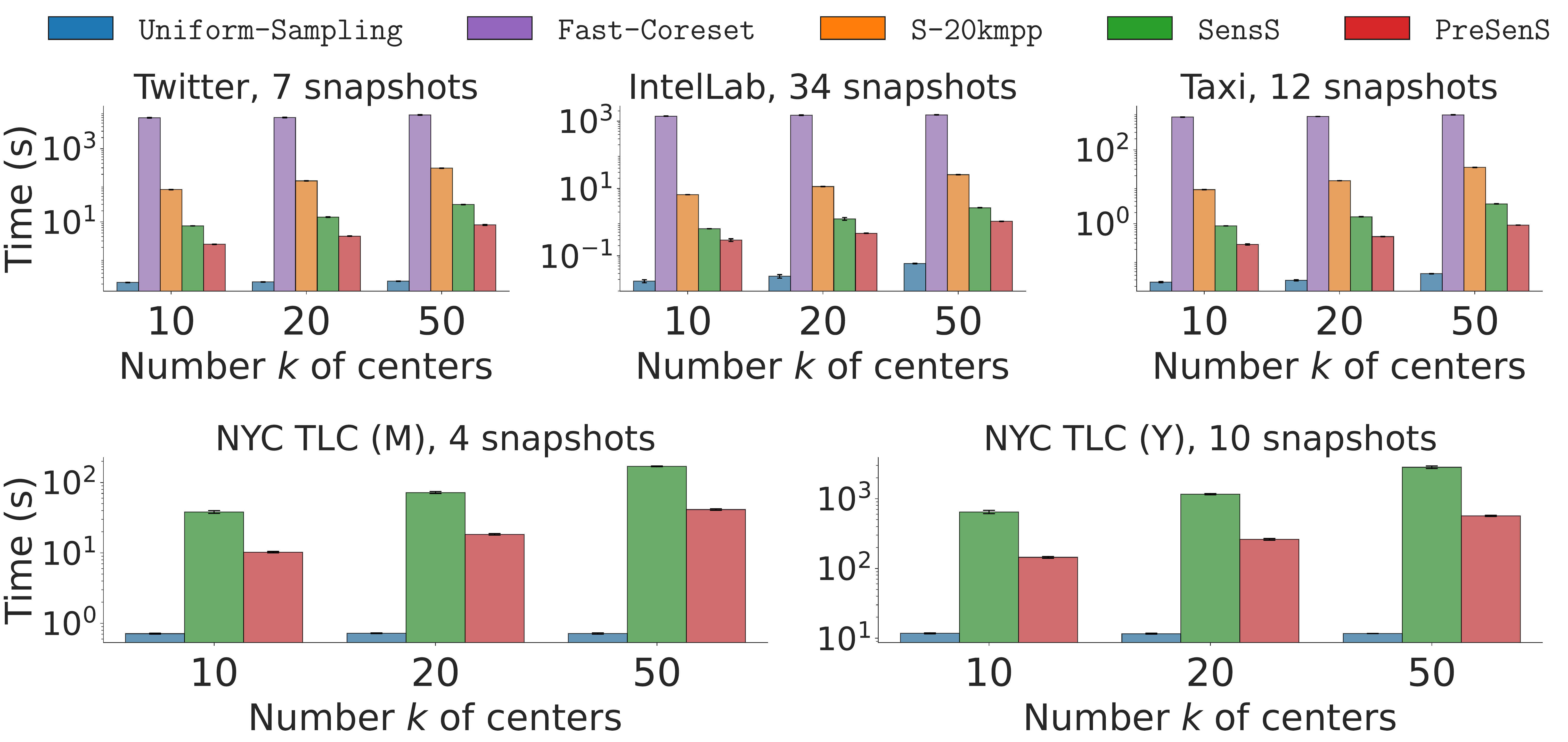}
\emph{Runtimes.}
Fig.~\ref{fig:runtime_m50} displays the sum of runtimes over all the snapshots for each considered sequence. On Twitter, IntelLab and Taxi runtimes for \algname\ are typically well below one second for snapshot.
Across all datasets, using predictions yields a substantial speedup in coreset construction compared to all approaches but \uniformsampling. 
Specifically, comparing our \algname\ with \ssstandard, we observe speedups from $4.1\times$ and up to $5.2\times$ on the largest dataset NYC TLC (Y), and $3.57\times$ on average across all datasets.
%We recall that all experiments have been conducted in a single-thread fashion.  
Notably, \algname\ is not that far from runtimes attained by \uniformsampling\ (the only algorithm with sublinear coreset construction time) on all datasets except the largest ones (NYC TLC), where the gap becomes larger. However, for such datasets  \uniformsampling\ provides extremely poor results (see Fig.~\ref{fig:log_ratio_compression_optimization_m50} and Fig.~\ref{fig:estimated_log_ratio_distortion_m50}). Therefore, our algorithm \algname\ is the fastest approach that provides high-quality coresets across all datasets.

%\vspace{-0.5cm}
\smallskip
\emph{Size of Coresets.}
Finally, we examine the number of \emph{unique} points appearing in the produced coresets.
Since sensitivity-based sampling assigns higher probability to a relatively small set of influential points (the most important points), the resulting multiset (i.e., the coreset) can contain repeated samples; consequently, the number of unique points can be much smaller than $m$, and smaller than what \uniformsampling\ typically yields. Note that a smaller number of unique points implies a lower runtime in practice for processing (e.g., clustering) the set of points.
To quantify this effect, we compute the ratio between the number of unique points in the coresets produced by \algname\ and the number of unique points in the coresets produced by \uniformsampling. 
Table~\ref{tab:ratio_size} shows results for different values of $m$ and $k$.
Interestingly, on datasets Taxi and NYC TLC, \algname\ frequently obtains coresets with fewer than half of the unique points of \uniformsampling. Note that this is not a property of our algorithm itself, but rather of all the sensitivity-based algorithms. 
% table for ratio of coreset size (number of unique points)
\RatioCoresetSize

%\vspace{-1cm}
\section{Conclusion}
In this work, we proposed \algname, an algorithm with predictions for computing sensitivity-based coresets for $k$-means clustering. In particular, we introduced a natural prediction strategy tailored to the common setting where clustering is repeatedly performed on a sequence of (related) datasets. We proved that when these datasets are drawn from the same (unknown) distribution, centers achieving low cost on one dataset can serve as reliable predictors for sensitivity sampling on subsequent datasets, with provable quality guarantees. 
Our experimental results confirm the practical impact of our approach: across a range of dataset sequences, \algname\ consistently achieves a superior trade-off between clustering cost and runtime compared to uniform sampling and state-of-the-art sensitivity-based techniques. Our approach is tailored to Eucliden $k$-means clustering, but the underlying idea of exploiting predictions obtained from clustering previous related datasets is more general: studying how to exploit such predictions and adapting our results to other clustering problems are interesting directions for future research.

\begin{credits}
\subsubsection{\ackname}
Work supported by “National Centre for HPC, Big Data and Quantum Computing” project, CN00000013
(approved under call M42C – Investimento 1.4 – Avvisto “Centri Nazionali” – D.D. n. 3138 of 16.12.2021,
admitted to funding with MUR decree n. 1031 of 06.17.2022).

\subsubsection{\discintname}
The authors have no competing interests to declare that are relevant to the content of this article.

\end{credits}

\bibliographystyle{splncs04}
\bibliography{bibliography}

\newpage
\appendix

% supplementary material / Appendix

\section{Facts and Properties}
\label{sec:facts-appendix}
In the following section, we report some useful properties which we consider in our proofs.

\begin{fact}[Triangle Inequality] \label{fact:triangle-inequalties}
	Let $p_1, p_2, p_3$ be points in $\mathbb{R}^d$. We then have:
	\begin{enumerate}[label = (\roman*)]
		\item $\EuclideanDist{p_1 - p_3}^2 \leq 2 \left( \EuclideanDist{p_1 - p_2}^2 + \EuclideanDist{p_2 - p_3}^2 \right)$; \label{item:tineq-basic}
		% \item For any $\varepsilon > 0, \EuclideanDist{p_1 - p_3}^2 \leq \left( 1 + \varepsilon \right) \EuclideanDist{p_1 - p_2}^2 + \left( 1 + \frac{1}{\varepsilon} \right) \EuclideanDist{p_2 - p_3}^2$;
		\item $\left| \EuclideanDist{p_1 - p_3}^2 - \EuclideanDist{p_1 - p_2}^2 \right| \leq 2 \cdot \EuclideanDist{p_1 - p_2} \cdot \EuclideanDist{p_2 - p_3} + \EuclideanDist{p_2 - p_3}^2$.\label{item:tineq-sqrt}
		%\item $\EuclideanDist{p_1 - p_4}^2 \leq 3 \left( \EuclideanDist{p_1 - p_2}^2 + \EuclideanDist{p_2 - p_3}^2 + \EuclideanDist{p_3 - p_4}^2 \right)$.
	\end{enumerate}
\end{fact}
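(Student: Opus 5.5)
Both items follow from the ordinary triangle inequality for the Euclidean norm plus elementary algebra on nonnegative reals. Throughout, abbreviate $a \coloneqq \EuclideanDist{p_1 - p_2}$, $b \coloneqq \EuclideanDist{p_2 - p_3}$, and $c \coloneqq \EuclideanDist{p_1 - p_3}$. All three are nonnegative, and the triangle inequality gives $c \le a + b$.

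For item~\ref{item:tineq-basic}, square the triangle inequality to get $c^2 \le (a+b)^2$. Then bound $(a+b)^2 \le 2a^2 + 2b^2$, which is equivalent to $0 \le (a-b)^2$ (equivalently, $2ab \le a^2 + b^2$ by AM--GM). Chaining the two inequalities gives exactly $\EuclideanDist{p_1 - p_3}^2 \le 2\left( \EuclideanDist{p_1 - p_2}^2 + \EuclideanDist{p_2 - p_3}^2 \right)$.

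For item~\ref{item:tineq-sqrt}, factor the difference of squares as $c^2 - a^2 = (c - a)(c + a)$. The reverse triangle inequality $|c - a| \le b$ follows by applying the triangle inequality in both orders: $c \le a + b$ and $a \le c + b$. For the second factor, $0 \le c + a \le (a + b) + a = 2a + b$. Hence
\begin{equation*}
	\left| c^2 - a^2 \right| = |c - a| \cdot (c + a) \le b\,(2a + b) = 2ab + b^2,
\end{equation*}
which is the claimed bound once $a$, $b$, $c$ are substituted back.

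There is no genuine obstacle. The only point needing care is in item~\ref{item:tineq-sqrt}: both factors $|c-a|$ and $c+a$ must be bounded using nonnegative quantities, so that multiplying the two inequalities is valid. This is why the reverse triangle inequality is used for the first factor and the forward one for the second.
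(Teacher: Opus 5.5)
Your proof is correct and complete. Item (i) follows from $c \le a+b$ and $2ab \le a^2+b^2$. Item (ii) follows from the factorization $c^2-a^2=(c-a)(c+a)$, with $|c-a|\le b$ and $0\le c+a\le 2a+b$, and you rightly check nonnegativity before multiplying the two bounds.

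The paper states this fact without proof, treating it as standard, so there is no argument to compare against; yours supplies the missing justification.

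One feature of your route is worth noting. It uses only the real inequalities $c\le a+b$ and $a\le c+b$ together with nonnegativity, and never uses that $p_1$ is a single point. So it applies verbatim when $p_1$ is replaced by a finite set $S$ of centers, taking $c=d(p_3,S)$, $a=d(p_2,S)$, $b=\|p_2-p_3\|$ with $d(x,S)=\min_{s\in S}\|x-s\|$. Both needed inequalities then hold because $d(\cdot,S)$ is 1-Lipschitz. That is precisely how the paper uses the fact, for example when bounding $|\mathrm{cost}(p,S)-\mathrm{cost}(a_j,S)|$ or $\mathrm{cost}(q,S)\le 2(\mathrm{cost}(q,a_j)+\mathrm{cost}(a_j,S))$.

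The other common proof of (ii) expands $\|(p_1-p_2)+(p_2-p_3)\|^2$ and applies Cauchy--Schwarz. That version would need a little extra care in the set case, since the nearest centers of $p_2$ and $p_3$ may differ.
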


\begin{theorem}[Berstein's concentration inequality] \label{thm:berstein}
	Let $X_1, ..., X_m$ be independent zero-mean bounded random variables with $|X_i| \leq M$. Let $X = \sum_{i=1}^m X_i$, and $\sigma^2 = \sum_{i=1}^m \Exp{X_i^2}$. Then, for all $t > 0$, we have:
	\begin{equation*}
		\Prob{|X| \geq t} \leq 2 \exp \left( -t^2 / \left( 2\sigma^2 + 2Mt/3 \right)\right)
	\end{equation*}
\end{theorem}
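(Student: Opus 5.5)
We would prove the statement by the standard Chernoff (exponential moment) method: bound the moment generating function of each $X_i$, multiply the bounds using independence, apply Markov's inequality to $e^{\lambda X}$, and optimize over $\lambda$. The two-sided bound then follows by running the same argument on $-X$ and taking a union bound.

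\emph{Step 1 (moment generating function of a single term).} Fix $\lambda$ with $0<\lambda<3/M$ and write $\sigma_i^2=\Exp{X_i^2}$. Expand $\Exp{e^{\lambda X_i}}$ as a Taylor series. The constant term is $1$, and the linear term vanishes because $\Exp{X_i}=0$. For $r\ge 2$ we use $|X_i|\le M$ to get $\Exp{X_i^r}\le M^{r-2}\sigma_i^2$. We also use $r!\ge 2\cdot 3^{r-2}$ for $r\ge 2$. Summing the geometric series gives
\begin{equation*}
\Exp{e^{\lambda X_i}}\le 1+\frac{\lambda^2\sigma_i^2}{2}\sum_{r\ge 2}\left(\frac{\lambda M}{3}\right)^{r-2}
= 1+\frac{\lambda^2\sigma_i^2}{2(1-\lambda M/3)}
\le \exp\left(\frac{\lambda^2\sigma_i^2}{2(1-\lambda M/3)}\right),
\end{equation*}
where the last step uses $1+x\le e^x$. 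We expect this to be the only step needing care: the factor $3$ in the final bound comes precisely from the inequality $r!\ge 2\cdot 3^{r-2}$, which we verify by induction on $r$.

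\emph{Step 2 (combine and apply Markov).} By independence, $\Exp{e^{\lambda X}}=\prod_i \Exp{e^{\lambda X_i}}\le \exp\bigl(\lambda^2\sigma^2/(2(1-\lambda M/3))\bigr)$. Markov's inequality applied to $e^{\lambda X}$ then gives
\begin{equation*}
\Prob{X\ge t}\le \exp\left(-\lambda t+\frac{\lambda^2\sigma^2}{2(1-\lambda M/3)}\right).
\end{equation*}

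\emph{Step 3 (choice of $\lambda$).} Take $\lambda=t/(\sigma^2+Mt/3)$. This choice satisfies $\lambda<3/M$, and it gives $1-\lambda M/3=\sigma^2/(\sigma^2+Mt/3)$. Substituting, the quadratic term equals $t^2/\bigl(2(\sigma^2+Mt/3)\bigr)$ and the linear term equals $-t^2/(\sigma^2+Mt/3)$. The exponent is therefore $-t^2/(2\sigma^2+2Mt/3)$, which is exactly the stated one-sided bound.

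\emph{Step 4 (two-sided bound).} The variables $-X_i$ satisfy the same hypotheses: they are independent, zero-mean, bounded by $M$, and have the same second moments. Hence the same bound holds for $\Prob{-X\ge t}$. A union bound over the events $\{X\ge t\}$ and $\{-X\ge t\}$ yields the factor $2$ and completes the proof.
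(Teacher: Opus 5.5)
Your proof is correct. It is the standard exponential-moment (Chernoff) proof of Bernstein's inequality, and the computations check out:
\begin{itemize}
\item the inequality $r!\ge 2\cdot 3^{r-2}$ and the resulting geometric sum hold;
\item the choice $\lambda=t/(\sigma^2+Mt/3)$ gives an exponent of exactly $-t^2/(2\sigma^2+2Mt/3)$;
\item passing to $-X$ and taking a union bound produces the factor $2$.
\end{itemize}
The paper does not prove this statement. It lists it in the appendix as a standard concentration tool, so there is no other route to compare against.

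One small point needs fixing. Your claim that $\lambda<3/M$ holds only when $\sigma^2>0$. If $\sigma^2=0$, you get $\lambda=3/M$, and the moment-generating-function bound from Step 1 degenerates. Handle this case separately. It is trivial: then every $X_i=0$ almost surely, so the left-hand side is $0$. The case $M=0$ is trivial for the same reason.
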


% -- sum of independent gaussians
\begin{fact} \label{fact:sum-of-independent-gaussians}
	Let $a_1, ..., a_m$ be real coefficients and $g_1, ..., g_m$ be independent Gaussians random variables from $\mathcal{N}(0, 1)$. Then, we have $\sum_{i=1}^m a_i g_i \sim \mathcal{N}(0, \sum_{i=1}^m a_i^2).$
\end{fact}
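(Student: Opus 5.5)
We prove the statement by computing the moment generating function of the weighted sum and matching it with that of a centered Gaussian. Since a distribution on $\mathbb{R}$ whose moment generating function is finite in a neighborhood of the origin is uniquely determined by it, it is enough to identify $\Exp{\exp(\lambda \sum_{i=1}^m a_i g_i)}$ for every real $\lambda$.

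\emph{Step 1: single variable.} For a standard Gaussian $g \sim \mathcal{N}(0,1)$ and any real $c$, complete the square in the exponent. The identity $cx - x^2/2 = -(x-c)^2/2 + c^2/2$ gives
\begin{equation*}
\Exp{e^{c g}} = \frac{1}{\sqrt{2\pi}} \int_{\mathbb{R}} e^{c x - x^2/2}\, dx = e^{c^2/2},
\end{equation*}
because the remaining integral is that of a shifted Gaussian density and equals $1$. Taking $c = \lambda a_i$ yields $\Exp{e^{\lambda a_i g_i}} = e^{\lambda^2 a_i^2/2}$. The degenerate case $a_i = 0$ is included, with the factor equal to $1$.

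\emph{Step 2: independence.} The variables $g_1,\dots,g_m$ are independent, so the variables $e^{\lambda a_i g_i}$ are independent as well. The expectation of their product therefore factorizes:
\begin{equation*}
\Exp{e^{\lambda \sum_{i=1}^m a_i g_i}} = \prod_{i=1}^m e^{\lambda^2 a_i^2/2} = \exp\left( \frac{\lambda^2}{2} \sum_{i=1}^m a_i^2 \right).
\end{equation*}

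\emph{Step 3: identification.} Set $\sigma^2 = \sum_{i=1}^m a_i^2$. The right-hand side of Step 2 is exactly the moment generating function of $\mathcal{N}(0, \sigma^2)$, which we obtain by applying Step 1 to $\sigma g$. It is finite for every $\lambda$. By the uniqueness theorem for moment generating functions, $\sum_{i=1}^m a_i g_i \sim \mathcal{N}(0, \sigma^2)$.

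If all $a_i = 0$, the sum is the constant $0$, which we read as the degenerate Gaussian $\mathcal{N}(0,0)$; the argument covers this case without change.

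\emph{Main obstacle.} The only non-elementary ingredient is the uniqueness theorem in Step 3. To avoid it, we can instead use characteristic functions: the same calculation gives $\Exp{e^{i t g}} = e^{-t^2/2}$, and Lévy's uniqueness theorem applies without any finiteness assumption.
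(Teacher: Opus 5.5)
Your proof is correct. The moment generating function computation, the factorization by independence, and the identification step are all sound. Uniqueness applies because the moment generating function is finite on all of $\mathbb{R}$, and your L\'evy characteristic-function variant works equally well. The degenerate case with all $a_i = 0$ is handled properly.

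The paper gives no proof of this statement: it is listed among the standard facts in the appendix and used as a black box. For example, it is used to treat $\sum_{q_i \in C_j \cap \Omega} g_i w_{q_i}$ and $X^{S,h}$ as centered Gaussians with the stated variances, after the coreset $\Omega$ has been fixed. Your argument supplies the textbook justification. Your deterministic-coefficient version is exactly what those uses need, since the weights $w_{q_i}$ and cost-vector entries are constants once one conditions on $\Omega$.
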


% -- gaussian variance
\begin{fact}[from~\cite{kamath2015bounds}] \label{fact:variance-max-gaussians}
	For $n \geq 2$, let $g_i \sim \mathcal{N}(0, \sigma_i^2)$ be Gaussians random variables such that $\max_i \sigma_i \leq \sigma$, for $i = 1, ..., n$. Then, we have  $\Exp{\max_{i \in [n]} \left| g_i \right| } \leq \sigma \sqrt{2 \log n }.$
\end{fact}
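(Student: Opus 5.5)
Since this is a cited result from~\cite{kamath2015bounds}, the plan is to reprove it by tail integration and a union bound. That argument only uses the marginal laws of the $g_i$, so no independence is needed.

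\emph{Normalization.} First I would normalize. For every $u \ge 0$ and every $i$ we have $\Prob{|g_i| > u} = 2\bar\Phi(u/\sigma_i) \le 2\bar\Phi(u/\sigma)$, where $\bar\Phi$ is the standard Gaussian upper tail. Writing $M = \max_{i \in [n]} |g_i|$ and substituting $u = \sigma v$ shows that it suffices to prove
\begin{equation*}
\int_0^\infty \min\left(1, 2n\bar\Phi(v)\right) dv \le \sqrt{2\log n}.
\end{equation*}
This reduction relies on $\Exp{M} = \int_0^\infty \Prob{M > u}\, du$ and the union bound $\Prob{M > u} \le \min\left(1, \sum_i \Prob{|g_i| > u}\right)$.

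\emph{Splitting the integral.} Next I would split the integral at a threshold $a$. The part on $[0,a]$ contributes at most $a$. On $[a,\infty)$ I would use the Mills-ratio bound $\bar\Phi(v) \le \varphi(v)/v$, with $\varphi$ the standard Gaussian density, together with $\int_a^\infty \varphi(v)/v\, dv \le \varphi(a)/a^2$. This gives
\begin{equation*}
\Exp{M}/\sigma \le a + \frac{2n\,\varphi(a)}{a^2} = a + \frac{2n\, e^{-a^2/2}}{\sqrt{2\pi}\, a^2}.
\end{equation*}
The natural choice is $a$ slightly below $\sqrt{2\log n}$, namely $a^2 = 2\log n - c\log\log n$ or similar. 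Then $n e^{-a^2/2}$ is only polylogarithmic, the second term becomes $O\big((\log n)^{c/2 - 1}\big)$, and the deficit $\sqrt{2\log n} - a \approx c\log\log n / (2\sqrt{2\log n})$ is what must pay for that remainder.

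An alternative route is the moment-generating-function argument. By Jensen,
\begin{equation*}
e^{t\Exp{M}} \le \Exp{e^{tM}} \le \sum_i \Exp{e^{t|g_i|}} = \sum_i 2 e^{t^2\sigma_i^2/2}\Phi(t\sigma_i),
\end{equation*}
and optimizing in $t$ yields $\sigma\sqrt{2\log(2n)}$ directly. To remove the factor $2$, I would keep the exact factor $\Phi(t\sigma) < 1$ instead of bounding it by $1$. With $t = \sqrt{2\log n}/\sigma$ this gives
\begin{equation*}
\Exp{M} \le \frac{\sigma}{2}\sqrt{2\log n} + \frac{\sigma\log\left(2n\,\Phi(\sqrt{2\log n})\right)}{\sqrt{2\log n}},
\end{equation*}
and it remains to check that $\log\left(2\Phi(\sqrt{2\log n})\right) \le 0$. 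That inequality is false, since $\Phi > 1/2$. So this route loses an additive term of order $\sigma/\sqrt{\log n}$, and the first (tail) route is the one I would pursue.

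\emph{Main obstacle.} The hard part is the constants: the clean bound $\sigma\sqrt{2\log n}$, without a $\log(2n)$, must hold for every $n \ge 2$, including small $n$. For large $n$, the asymptotic balancing above works with room to spare. For small $n$ (say $n \le n_0$), I would verify the inequality numerically using $\Exp{M} \le \sigma \int_0^\infty \min\left(1, 2n\bar\Phi(v)\right) dv$. For example, at $n = 2$ this integral must be compared with $\sqrt{2\log 2} \approx 1.18$.

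If the union-bound integral turns out to exceed $\sqrt{2\log n}$ for some small $n$, the fallback is to cite the sharper argument of~\cite{kamath2015bounds} directly. Since the fact is only used as a tool, with $\log n$ large in the applications, any bound of the form $\sigma\sqrt{2\log(2n)}$ would also suffice downstream up to constants.
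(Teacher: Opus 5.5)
\paragraph{Gap: the small-$n$ verification fails, and cannot be repaired in the generality you chose.}
The paper does not prove this Fact; it only cites \cite{kamath2015bounds}. Your main route breaks down exactly at the small-$n$ check you planned, and the reason is structural rather than a matter of constants. The integral you want to evaluate has a closed form. With $v_n = \bar\Phi^{-1}(1/(2n))$ and $2n\bar\Phi(v_n)=1$,
\[
\int_0^\infty \min\bigl(1, 2n\bar\Phi(v)\bigr)\,dv = v_n + 2n\bigl(\varphi(v_n) - v_n\bar\Phi(v_n)\bigr) = 2n\,\varphi(v_n).
\]
This gives the following values:
\begin{itemize}
\item at $n=2$: $4\varphi(0.674)\approx 1.271$, versus $\sqrt{2\log 2}\approx 1.177$;
\item at $n=3$: $6\varphi(0.967)\approx 1.499$, versus $\sqrt{2\log 3}\approx 1.482$;
\item at $n=4$: $8\varphi(1.150)\approx 1.647$, versus $1.665$, so the integral only drops below $\sqrt{2\log n}$ from here on.
\end{itemize}

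Because your argument uses only the marginals, it cannot be sharpened: the union bound is attained by a coupling. Take $n=2$, $U$ uniform on $[0,1]$, $F$ the half-normal CDF, and independent random signs $s_1,s_2$. Then $g_1=s_1F^{-1}(U)$ and $g_2=s_2F^{-1}(1-U)$ are both $\mathcal{N}(0,1)$. Yet $\mathbb{E}\max(|g_1|,|g_2|) = 2\int_{1/2}^1 F^{-1}(u)\,du = 4\varphi(\Phi^{-1}(3/4))\approx 1.271$. Shifting $U$ by $i/n$ modulo $1$ does the same for general $n$. So under your reading (``no independence is needed''), the Fact is false for $n\in\{2,3\}$. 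Your fallback of citing a ``sharper argument'' therefore has nothing to cite in that generality, and citing is not a proof in any case.

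\paragraph{What is missing.}
You need either independence of the $g_i$ (the setting of the cited note) or a weaker conclusion.

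\emph{With independence.} Write $g_i=\sigma_i z_i$; then $\max_i|g_i|\le\sigma\max_i|z_i|$, which reduces to $\sigma_i=\sigma$. The cases $n=2,3$ can be computed directly: $2/\sqrt{\pi}\approx 1.128$ and roughly $1.33$. The union bound then handles $n\ge 4$. Note that the margin at $n=4$ is only about $0.02$, and your split $a+2n\varphi(a)/a^2$ beats $\sqrt{2\log n}$ only once $n$ is a few dozen. So the intermediate range must be checked with the exact value $2n\varphi(v_n)$.

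\emph{Without independence.} State the bound as $\sigma\sqrt{2\log(2n)}$, which your MGF argument over the $2n$ variables $\pm g_i$ gives immediately. As you note, this suffices downstream up to constants. It is also the version the chaining lemma actually needs, since the Gaussians $\sum_i g_i w_{q_i}(v^h_i-v^{h-1}_i)$ there are correlated. In the bound on $X^{S,\mathrm{Init}}$, by contrast, the per-cluster sums involve disjoint sets of $g_i$ and are independent.
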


% -- facts of sensitivity-sampling algo
%We state an intrinsic property of the algorithm, useful in the analysis. 
\begin{fact}
	For any point $q \in \Omega$ from cluster $C_j$, for Alg.~\ref{alg:sensitivity-sampling-oracle-coreset}, given the set $A$ with $|A| = k'$, the following holds:
	\begin{equation*}
		w_q \leq 4 \min \left( \frac{k' |C_j|}{m}, \frac{k' \cost(C_j, A)}{m \cost(p, A)}, \frac{\cost(P, A)}{m \cost(p, A)}, \frac{\cost(P, A)}{m \Delta_j} \right).
	\end{equation*} 
	\label{fact:weight-bound}
\end{fact}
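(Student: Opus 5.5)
The bound follows directly from the sampling rule, since $w_q = 1/(m\,\Prob{q})$. Fix a sampled point $q\in\Omega$ that lies in cluster $C_j$ (the cluster centered at $a_j$, as in line~\ref{line:assignment}), and write $k'=|A|$. By line~\ref{line:prob-distro-oracle}, $\Prob{q}$ is $\frac14$ times a sum of four nonnegative terms:
\begin{equation*}
T_1=\frac{1}{k'|C_j|},\quad T_2=\frac{\cost(q,A)}{k'\cost(C_j,A)},\quad T_3=\frac{\cost(q,A)}{\cost(P,A)},\quad T_4=\frac{\Delta_j}{\cost(P,A)}.
\end{equation*}

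The only step is the elementary observation that a sum of nonnegative terms is at least each of its summands. Hence $\Prob{q}\ge \frac14\max(T_1,T_2,T_3,T_4)$. Taking reciprocals,
\begin{equation*}
w_q=\frac{1}{m\Prob{q}}\le \frac{4}{m\max_i T_i}=4\min_i\frac{1}{mT_i}.
\end{equation*}
Inverting each $T_i$ gives exactly the four quantities in the statement:
\begin{equation*}
\frac{1}{mT_1}=\frac{k'|C_j|}{m},\quad \frac{1}{mT_2}=\frac{k'\cost(C_j,A)}{m\cost(q,A)},\quad \frac{1}{mT_3}=\frac{\cost(P,A)}{m\cost(q,A)},\quad \frac{1}{mT_4}=\frac{\cost(P,A)}{m\Delta_j}.
\end{equation*}
The point written as $p$ in the statement is the sampled point $q$.

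There is essentially no obstacle; the only care needed is with degenerate cases. If $\cost(q,A)=0$, or $\cost(C_j,A)=0$ (so $\Delta_j=0$), then the corresponding terms are interpreted as $+\infty$. They drop out of the minimum, and the bound still holds through $T_1$, which is always positive.
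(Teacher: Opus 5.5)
Your proof is correct and is the same argument the paper relies on: it states this fact without proof, as an immediate consequence of $w_q = 1/(m\cdot\Pr[q])$ and of the sampling probability in line~2 being at least $\frac14$ times each of its four nonnegative summands. You are also right that the $p$ in the statement should read $q$.
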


%\subsection{Properties of Centroids}
%\label{sec:centroids-properties-appendix}
%
%%\begin{claim} \label{claim:cost-Cj-lower-bound}
%%	Let $S$ be a set of $k$ centers. If for a cluster $C_j$ with center $a_j$ we have $\cost(a_j, S) \geq 32 \Delta_j$, then $\cost(C_j, S) \geq \frac{1}{6} |C_j| \cost(a_j, S).$
%%\end{claim}
%%\begin{proof}
%%	Let $p$ be a point with $\cost(a_j, p) \leq 2 \Delta_j$. By Fact~\ref{fact:triangle-inequalties}~\ref{item:tineq-sqrt}:
%%	\begin{align*}
	%%		& \left| \cost(p, S) - \cost(a_j, S) \right| \\
	%%		& \leq 2 \sqrt{ \cost(a_j, S) \cost(a_j, p)} + \cost(a_j, p) \leq 2 \cost(a_j, S) / 3,
	%%	\end{align*}
%%	since $\cost(a_j, p) \le 2\Delta_j \le \cost(a_j, S) / 16$. Rearranging, we get $\cost(p, S) \geq \cost(a_j, S) / 3.$ Note that the number of points $p$ with $\cost(a_j, p) \leq 2 \Delta_j$ is at least $|C_j| / 2$. Thus, $\cost(C_j, S) \geq |C_j| / 2 \cdot \cost(a_j, S) / 3 \geq \frac{1}{6} |C_j| \cost(a_j, S)$. 
%%\end{proof}

\subsection{Properties of Coresets}
\label{appendinx:properties-of-coreset}
In the following, we state ``good'' properties of coresets achieved by \algname\ (Alg.~\ref{alg:sensitivity-sampling-oracle-coreset}).
We introduce the definition of \emph{rings}, that will be useful later on.

\emph{Ring Partitioning.} For each cluster $C_j$ we define:
\begin{myitemize}
	\item $R_j(0) = \{ p \in C_j : \cost(p, a_j) < 2\Delta_j\}$;
	\item $R_j(\ell) = \{ p \in C_j : \cost(p, a_j) \in \left[2^{\ell}\Delta_j, 2^{\ell + 1}\Delta_j \right)\}$, for $\ell \in [1, \ell_{max}]$;
	\item $R_j(\ell_{max} + 1) = \{ p \in C_j : \cost(p, a_j) \geq 2^{\ell_{max} + 1}\Delta_j\}$.
\end{myitemize}
By setting $\ell_{max} \coloneq \lfloor \log_2 \left( 1 / \varepsilon \right) \rfloor$, the rings $R_j(0), ..., R_j(\ell_{max} + 1)$ partition in a disjoint way all the points in $C_j$, i.e., we have that $\bigcup_{\ell \in [0, \ell_{max} + 1]} R_j(\ell) = C_j$, and for any $\ell \neq \ell'$, $R_j(\ell) \cap R_j(\ell') = \emptyset$.

\vspace{\baselineskip}
%\noindent \textbf{Good Properties of Coreset}.
One can show that, with high probability, the coreset $\Omega$ returned by Alg.~\ref{alg:sensitivity-sampling-oracle-coreset} preserves the number of points in each cluster, as well as its cost. Moreover, it ensures that clusters do not over-sample high-cost points. These key properties are summarized by event $\mathcal{E}$.

\begin{definition}[Event $\mathcal{E}$]
	Event $\mathcal{E}$ occurs if the coreset $\Omega$ returned by Alg.~\ref{alg:sensitivity-sampling-oracle-coreset} satisfies the following three properties:
	\begin{myitemize}
		\item \label{item:p1} {P1: Cluster Size Preservation.} For each cluster $C_j$, $j = \{1, 2, ..., k'\}$:
		$$
		\sum_{q \in \Omega \cap C_j} w_q \in (1 \pm \varepsilon) |C_j|;
		$$
		\item \label{item:p2} {P2: Ring Size Preservation.} For each cluster $C_j$ with $j = \{1, 2, ..., k'\}$, and ring $R_j(\ell)$ with $\ell \in [0, \ell_{max} + 1]$:
		$$
		\sum_{q \in \Omega \cap R_j(\ell)} w_q \leq \frac{|C_j|}{2^{\ell - 1}};
		$$
		\item \label{item:p3} {P3: Cluster Cost Preservation.} For each cluster $C_j$, $j = \{1, 2, ..., k'\}$:
		$$
		\cost_\Omega(C_j, A) \in (1 \pm \varepsilon) \cost(C_j, A).
		$$ 
	\end{myitemize}
	\label{def:eventE}
\end{definition}
Recall that $k \leq k' = |A| \leq \beta k$. Notice that event $\mathcal{E}$ depends only on the sample $\Omega$ (and in particular \emph{does not} place any restriction on the set $S$ of centers). 
We have the following lemma:

\begin{lemma}
	If $m = \bigOmegatilde{ k' \varepsilon^{-2} }$, then event $\mathcal{E}$ holds with probability at least $1 -  \NotEventEInline$.
	\label{lemma:event-E}
\end{lemma}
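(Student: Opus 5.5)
We will prove each of P1, P2, P3 separately for a fixed cluster (and ring), using Bernstein's inequality (Theorem~\ref{thm:berstein}). Then we take a union bound over the $k'$ clusters and the $\ell_{max}+2 = O(\log(1/\varepsilon))$ rings per cluster. This accounts for the $k'\log(1/\varepsilon)$-type factor in the failure probability $\NotEventEInline$. All three quantities are sums of $m$ i.i.d.\ terms of the form $X_i = w_{q_i}\,f(q_i)\,\mathbf{1}[q_i\in T]$, with $w_{q_i}=1/(m\Prob{q_i})$, so their expectation is exactly the target: $|C_j|$ for P1, $|R_j(\ell)|$ for P2 and $\cost(C_j,A)$ for P3. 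The work is to bound $M=\max|X_i - \Exp{X_i}|$ and the variance using Fact~\ref{fact:weight-bound}.

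\textbf{P1.} Take $X_i = w_{q_i}\mathbf{1}[q_i\in C_j]$. By the first term of Fact~\ref{fact:weight-bound}, $X_i \le 4k'|C_j|/m$. Hence $M\le 4k'|C_j|/m$ and $\sum_i\Exp{X_i^2}\le M\cdot|C_j| \le 4k'|C_j|^2/m$. Bernstein with $t=\varepsilon|C_j|$ gives failure probability $2\exp(-\Omega(\varepsilon^2 m/k'))$.

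\textbf{P3.} Take $X_i = w_{q_i}\cost(q_i,A)\mathbf{1}[q_i\in C_j]$. The second term of Fact~\ref{fact:weight-bound} gives $X_i\le 4k'\cost(C_j,A)/m$. The same computation with $t=\varepsilon\cost(C_j,A)$ gives the same tail.

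\textbf{P2.} For $\ell\le 1$ the bound $\sum w_q \le 2|C_j|$ or $4|C_j|$ already follows from P1. For $\ell\ge 2$ we use Markov's inequality on distances: every point of $R_j(\ell)$ has $\cost(p,a_j)\ge 2^\ell\Delta_j$, so $|R_j(\ell)|\le |C_j|/2^\ell$. The target $|C_j|/2^{\ell-1}$ therefore leaves an additive slack of at least $|C_j|/2^{\ell}$ above the mean. For points in this ring the second weight term gives $w_q\le 4k'\cost(C_j,A)/(m\,2^\ell\Delta_j) = 4k'|C_j|/(m2^\ell)$. So $M\le 4k'|C_j|/(m2^\ell)$ and the variance is at most $M\cdot|C_j|/2^\ell$. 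Bernstein with $t=|C_j|/2^\ell$ then yields tail $\exp(-\Omega(m/k'))$, which is even stronger than needed. The last ring $\ell_{max}+1$ is handled the same way, using the lower bound $2^{\ell_{max}+1}\Delta_j$.

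\textbf{Union bound.} Each event fails with probability $\exp(-\Omega(\varepsilon^2 m/k'))$. Summing over $k'(\ell_{max}+4)$ events and choosing $m = \Omega(k'\varepsilon^{-2}\log(k'\log(1/\varepsilon)/\delta))$, which is $\bigOmegatilde{k'\varepsilon^{-2}}$, gives the claimed probability $1-\NotEventEInline$. The main technical point is that the weight bounds in Fact~\ref{fact:weight-bound} use $k'=|A|$ rather than $k$. This is what makes the required sample size scale with $k'$, and it works even though $A$ is an arbitrary predicted set: no approximation guarantee on $A$ is needed, because P1–P3 are stated relative to $A$'s own clustering. The only delicate step is P2 for small $\ell$ and for the truncated last ring, where the Markov slack must be checked against the ring definitions. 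I expect that to be routine once the cases $\ell\in\{0,1\}$ are deferred to P1.
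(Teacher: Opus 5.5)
Your proposal is essentially the paper's proof. For each cluster and ring you apply Bernstein's inequality, with range and second-moment bounds taken from Fact~\ref{fact:weight-bound}. For P2 you get additive slack $|C_j|/2^{\ell}$ from the Markov-type bound $|R_j(\ell)|\le |C_j|/2^{\ell}$. You then finish with a union bound over $O(k'\log(1/\varepsilon))$ events, exactly as in Lemmas~\ref{lemma:appendix-event-E-cluster-size-preservation}--\ref{lemma:appendix-event-E-cluster-cost-preservation}.

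There is one small slip. The P2 target for $\ell=1$ is $|C_j|/2^{0}=|C_j|$, not $2|C_j|$. P1 does not imply it, since P1 only gives $(1+\varepsilon)|C_j|$. Your $\ell\ge 2$ Bernstein argument does apply verbatim to $\ell=1$, because $\cost(q,A)\ge 2\Delta_j$ on that ring. So only $\ell=0$, with target $2|C_j|$, should be deferred to P1. That deferral is actually cleaner than the paper's treatment of $R_j(0)$, whose lower bound $\cost(p,A)\ge 2^{\ell}\Delta_j$ does not hold on that ring.
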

We show that each of the three properties of event $\mathcal{E}$ holds with high probability. Then, the above lemma follows by a union bound. 

% -- lemma: cluster size preservation
\begin{lemma} \label{lemma:appendix-event-E-cluster-size-preservation}
	If $m = \bigOmegatilde{k' \varepsilon^{-2}}$, then property $P_1$ holds with probability at least $1 -\NotEventEProb$.
\end{lemma}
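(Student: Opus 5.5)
Fix a cluster $C_j$ and write the total coreset weight falling in $C_j$ as a sum of $m$ i.i.d. terms. For $i=1,\dots,m$ let
\[
Y_i \coloneqq \frac{\mathbf{1}[q_i\in C_j]}{m\,\Prob{q_i}} .
\]
Then $\sum_{q\in\Omega\cap C_j} w_q=\sum_{i=1}^m Y_i$, and since $q_i$ is drawn with probability $\Prob{q_i}$, we get $\Exp{Y_i}=\frac{1}{m}\sum_{p\in C_j}\Prob{p}\cdot\frac{1}{\Prob{p}}=|C_j|/m$. The sum is therefore an unbiased estimator of $|C_j|$, and property P1 for cluster $C_j$ is the event $\left|\sum_i (Y_i-\Exp{Y_i})\right|\le\varepsilon|C_j|$. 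I would prove this with Bernstein's inequality (Theorem~\ref{thm:berstein}) applied to the centered variables $X_i=Y_i-|C_j|/m$, and then take a union bound over the $k'$ clusters.

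The key input is the first term of the sampling distribution. By the first entry of Fact~\ref{fact:weight-bound}, every $p\in C_j$ has $\Prob{p}\ge \frac{1}{4k'|C_j|}$. Hence $0\le Y_i\le 4k'|C_j|/m$, and so $|X_i|\le M\coloneqq 4k'|C_j|/m$. For the variance,
\[
\Exp{Y_i^2}=\sum_{p\in C_j}\Prob{p}\,\frac{1}{m^2\Prob{p}^2}=\frac{1}{m^2}\sum_{p\in C_j}\frac{1}{\Prob{p}}\le \frac{4k'|C_j|^2}{m^2},
\]
which gives $\sigma^2\le m\cdot 4k'|C_j|^2/m^2=4k'|C_j|^2/m$.

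Plugging $t=\varepsilon|C_j|$ into Bernstein, the factors of $|C_j|^2$ cancel:
\[
\Prob{|X|\ge \varepsilon|C_j|}\le 2\exp\left(-\frac{\varepsilon^2|C_j|^2}{8k'|C_j|^2/m+8k'\varepsilon|C_j|^2/(3m)}\right)\le 2\exp\left(-\frac{c\,\varepsilon^2 m}{k'}\right)
\]
for an absolute constant $c$, using $\varepsilon<1$. A union bound over $j=1,\dots,k'$ gives failure probability at most $2k'\exp(-c\varepsilon^2 m/k')$. Choosing $m=C k'\varepsilon^{-2}\log(k'/\delta)$, which is $\bigOmegatilde{k'\varepsilon^{-2}}$, makes this at most $\delta$; this $\delta$ matches the probability $\NotEventEProb$ in the statement once the logarithmic factors hidden in $\bigOmegatilde{\cdot}$ are set accordingly.

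I expect no real obstacle here. The only delicate point is that the bounds on $M$ and $\sigma^2$ must scale exactly as $|C_j|$ and $|C_j|^2$, so that the final bound does not depend on cluster size. This relies entirely on the uniform-within-cluster term $\frac{1}{4|A||C_j|}$ in $\Prob{p}$, and it does not require $A$ to be a good approximation at all, so no assumption on $\alpha$ or $\beta$ is needed beyond $|A|=k'$.
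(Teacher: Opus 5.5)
Your proposal is correct and follows the paper's proof essentially step by step: the same per-cluster sum of weights with mean $|C_j|/m$, the same range bound $4k'|C_j|/m$ and second-moment bound $4k'|C_j|^2/m^2$ coming from the uniform-within-cluster term of the sampling distribution (Fact~\ref{fact:weight-bound}), Bernstein's inequality with $t=\varepsilon|C_j|$, and a union bound over the $k'$ clusters. The only difference is that the paper fixes your $\delta$ concretely: it requires $m \ge 48k'\varepsilon^{-2}\log(6k'\varepsilon^{-1})$, so that each cluster fails with probability at most $\varepsilon^4/(3(k')^4)$.
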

\begin{proof}
	Let $W_j = \sum_{q \in C_j \cap \Omega} w_q$. For a fixed cluster $C_j$ we show: 
	$$
	\Prob{|W_j - |C_j|| > \varepsilon|C_j|} \leq \varepsilon^4 / (3(k')^4).
	$$
	Lemma follows by a union bound over all $k'$ clusters. 
	For $i \in [m]$, let:
	\[X_i = \begin{cases} 
		w_{q_i} & \text{if $i$-th sample $q_i \in C_j$} \\
		0 & \text{otherwise}.
	\end{cases}
	\]
	Note that $X_i$'s are independent, and $\sum_{i = 1}^m X_i = W_j$. By law of total expectation:
	$$
	\Exp{X_i} = \sum_{q \in C_j} \Prob{q} w_q = |C_j|/m,
	$$
	and thus $\Exp{W_j} = |C_j|$. Next, by Fact~\ref{fact:weight-bound}, we have $w_q \leq 4k' |C_j| / m$ for each $q \in C_j$, and thus $X_i~\in~[0, 4k' |C_j| / m]$. 
	Moreover:
	$$
	\Exp{X_i^2} = \sum_{q \in C_j} \Prob{q} w_q^2 \leq 4k' |C_j|^2 / m^2.
	$$
	Let $Y_i \coloneq X_i - \Exp{X_i}$. Then, $\sum_{i=1}^m Y_i = W_j - |C_j|$. 
	We have $\sum_{i=1}^m \Exp{Y_i^2} =  \sum_{i=1}^m \Var{X_i} \leq  \sum_{i=1}^m \Exp{X_i^2}  \leq 4k' |C_j|^2 / m.$
	By Bernstein's inequality (Thm~\ref{thm:berstein}):
	$$
	\Prob{|W_j - |C_j|| \geq \varepsilon |C_j|} \leq 2 \text{exp} \left( \frac{-\varepsilon^2 |C_j|^2 / 2}{4k' |C_j|^2 / m + 4k' \varepsilon |C_j|^2 / 3m} \right),
	$$
	which is $\leq \varepsilon^4 / (3(k')^4)$ for $m \geq 48 k' \varepsilon^{-2} \log \left( 6 k' \varepsilon^{-1} \right)$.
\end{proof}

\begin{lemma} \label{lemma:appendix-event-E-ring-size-preservation}
	If $m = \bigOmegatilde{ k' \varepsilon^{-2}}$, then property $P_2$ holds with probability at least $1 - \NotEventEProb$.
\end{lemma}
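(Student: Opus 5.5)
We follow the template of Lemma~\ref{lemma:appendix-event-E-cluster-size-preservation}: fix a cluster $C_j$ and a ring index $\ell \in [0, \ell_{max}+1]$, bound the failure probability for this pair, and finish with a union bound over all $k'(\ell_{max}+2) = \bigO{k' \log(1/\varepsilon)}$ pairs. The number of pairs is still polynomial in $k'/\varepsilon$, so a per-pair failure probability of, say, $\varepsilon^5/(3(k')^4)$ gives the same overall bound as before.

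For $\ell \in \{0,1,2\}$ the claim follows from property P1. In that range $|C_j|/2^{\ell-1} \ge |C_j|/2 \cdot 2 = $ at least $|C_j|$ up to the factor $2$ or $4$, more precisely $|C_j|/2^{\ell-1} \ge |C_j| \cdot 2^{1-\ell}$. Since $\sum_{q\in\Omega\cap R_j(\ell)} w_q \le \sum_{q \in \Omega\cap C_j} w_q \le (1+\varepsilon)|C_j|$, and $\varepsilon \le 1$, the claim holds for $\ell \le 1$ on the P1 event. For $\ell = 2$ the right-hand side is only $|C_j|/2$, so we treat it with the general argument below.

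For $\ell \ge 1$, Markov's inequality within the cluster gives $|R_j(\ell)| \cdot 2^\ell \Delta_j \le \cost(C_j,A) = |C_j|\Delta_j$, hence $|R_j(\ell)| \le |C_j|/2^\ell$. The target $|C_j|/2^{\ell-1}$ is therefore at least twice the mean. Define $X_i = w_{q_i}$ if $q_i \in R_j(\ell)$ and $0$ otherwise. Then $\Exp{\sum_i X_i} = |R_j(\ell)| \le |C_j|/2^\ell$, and a deviation of $t = |C_j|/2^\ell$ above the mean suffices. For the per-sample bound, points in $R_j(\ell)$ with $\ell\ge1$ satisfy $\cost(p,A)\ge 2^\ell\Delta_j$ (recall $\cost(p,A)=\cost(p,a_j)$ for $p\in C_j$). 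Fact~\ref{fact:weight-bound} then gives $w_q \le 4k'\cost(C_j,A)/(m\cost(p,A)) \le 4k'|C_j|/(m 2^\ell)$. This makes $M = 4k'|C_j|/(m2^\ell)$ and $\sigma^2 \le m \cdot M \cdot |R_j(\ell)|/m \le 4k'|C_j|^2/(m 4^\ell)$. Applying Bernstein (Thm.~\ref{thm:berstein}) with $t = |C_j|/2^\ell$, the $|C_j|^2/4^\ell$ factors cancel. The exponent becomes $-\Omega(m/k')$, which is even stronger than what we need, so $m = \bigOmegatilde{k'\varepsilon^{-2}}$ makes the failure probability at most $\varepsilon^5/(3(k')^4)$.

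For $\ell = 0$ the bound $2|C_j|$ follows from P1 directly, or by the same Bernstein argument using $w_q \le 4k'|C_j|/m$. The last ring $\ell_{max}+1$ is handled identically, since its lower threshold $2^{\ell_{max}+1}\Delta_j$ plays the role of $2^\ell\Delta_j$.

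The main point needing care is the weight bound. We must use the second term of Fact~\ref{fact:weight-bound}, not the first, so that the variance scales as $4^{-\ell}$ and matches the squared target. With the first term the estimate would fail for large $\ell$.
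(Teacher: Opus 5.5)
Your proof is correct and follows the paper's argument: fix $(C_j,\ell)$, bound $|R_j(\ell)|\le |C_j|/2^\ell$ by Markov, use the second term of Fact~\ref{fact:weight-bound} to get $M=4k'|C_j|/(m2^\ell)$ and $\sigma^2\le 4k'|C_j|^2/(m4^\ell)$, apply Bernstein with deviation $|C_j|/2^\ell$, and union bound over the $O(k'\log(1/\varepsilon))$ rings. Your separate treatment of $\ell=0$ (via the first weight term, since points of $R_j(0)$ need not satisfy $\cost(p,A)\ge\Delta_j$) fixes a step the paper glosses over. One correction: P1 alone does not settle $\ell=1$, because there the target is $|C_j|<(1+\varepsilon)|C_j|$. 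This is harmless, since your Bernstein argument already covers every $\ell\ge 1$.
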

\begin{proof}
	Fix cluster $C_j$ and ring $R_j(\ell)$.
	Let $W_{j, \ell} = \sum_{q \in R_j(\ell) \cap \Omega} w_q$. We want to show:
	$
	\Prob{W_{j, \ell} > |C_j| / 2^{\ell-1}} \leq \varepsilon^4 / (3k^4).
	$
	The lemma follows by a union bound over $k' \cdot \log_2(1 / \varepsilon)$ rings. 
	For $i \in [m]$, let:
	\[X_i = \begin{cases} 
		w_{q_i} & \text{if $i$-th sample $q_i \in R_j(\ell)$} \\
		0 & \text{otherwise}.
	\end{cases}
	\]
	Note that $X_i$'s are independent, and $\sum_{i = 1}^m X_i = W_{j, \ell}$. By law of total expectation:
	$$
	\Exp{X_i} = \sum_{q \in R_j(\ell)} \Prob{q} w_q = |R_j(\ell)|/m,
	$$
	and therefore, $\Exp{W_{j, \ell}} = |R_j(\ell)|$. 
	For any point $p \in R_j(\ell)$, it holds:
	$$
	\frac{\cost(p, A)}{\cost(C_j, A)} \ge \frac{2^{\ell} \Delta_j}{\Delta_j |C_j|} = \frac{2^{\ell}}{|C_j|}.
	$$
	Note that $|R_j(\ell)| \leq |C_j| / 2^\ell$ for each ring, or otherwise the total cost induced by points in that ring would exceed $\cost(C_j, A)$. 
	By Fact~\ref{fact:weight-bound}: 
	$$
	w_q \leq 4k' \cost(C_j, A) / (m \cost(q, A)) \le 4k' |C_j| / (2^{\ell} m )
	$$ 
	for each $q \in R_j(\ell)$, and thus $X_i~\in~[0, 4k' |C_j| / ( 2^{\ell} m )]$. Moreover:
	$$
	\Exp{X_i^2} = \sum_{q \in R_j(\ell)} \Prob{q} w_q^2 \le \frac{4k' |C_j| |R_j(\ell)|}{2^{\ell} m^2} \le \frac{4k' |C_j|^2}{2^{2\ell} m^2}.
	$$
	We have:
	\begin{align*}
		& \Prob{W_{j, \ell} > \frac{|C_j|}{2^{\ell - 1}}} = \Prob{ W_{j, \ell} - |R_j(\ell)| > \frac{|C_j|}{2^{\ell - 1}} - |R_j(\ell)|} \\
		& \le \Prob{ W_{j, \ell} - R_j(\ell) > \frac{|C_j|}{2^\ell} }.
	\end{align*}
	By Bernstein's inequality (Thm~\ref{thm:berstein}):
	$$
	\Prob{W_{j, \ell} - |C_j| > \frac{|C_j|}{2^\ell}} \leq 2 \text{exp} \left( \frac{- |C_j|^2 / 2^{2\ell}}{\frac{4k'|C_j|^2} {2^{2\ell} m} + \frac{4k' |C_j|^2}{2^\ell 3m}} \right),
	$$
	which is $\leq \varepsilon^4 / (3(k')^4)$ for $m \geq 48 k' \varepsilon^{-2} \log \left( 6 k' \varepsilon^{-1} \right)$.
\end{proof}

\begin{lemma} \label{lemma:appendix-event-E-cluster-cost-preservation}
	If $m = \bigOmegatilde{ k' \varepsilon^{-2} }$, then property $P_3$ holds with probability at least $1 - \NotEventEProb$.
\end{lemma}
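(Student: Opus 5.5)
We follow the template of Lemmas~\ref{lemma:appendix-event-E-cluster-size-preservation} and~\ref{lemma:appendix-event-E-ring-size-preservation}. The plan is to fix a cluster $C_j$, show by Bernstein's inequality that
$$\Prob{\left|\cost_\Omega(C_j, A) - \cost(C_j, A)\right| > \varepsilon \cost(C_j, A)} \le \varepsilon^4/(3(k')^4),$$
and then take a union bound over the $k'$ clusters. For each sample $i \in [m]$ we define $X_i = w_{q_i}\cost(q_i, A)$ if $q_i \in C_j$, and $X_i = 0$ otherwise. The $X_i$ are independent and $\sum_{i=1}^m X_i = \cost_\Omega(C_j, A)$. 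By the law of total expectation,
$$\Exp{X_i} = \sum_{q \in C_j} \Prob{q}\, w_q \cost(q, A) = \cost(C_j, A)/m,$$
so $\cost_\Omega(C_j, A)$ is unbiased.

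For the range bound we use the second term of Fact~\ref{fact:weight-bound}: $w_q \le 4k'\cost(C_j, A)/(m\cost(q, A))$. Hence $X_i \in [0, 4k'\cost(C_j, A)/m]$, with no dependence on $q$. This point is the key one. The cost-proportional term $\cost(p, A)/(|A|\cost(C_j, A))$ in $\Prob{p}$ was designed precisely to cancel the $\cost(q, A)$ factor, and this cancellation holds for arbitrary $\alpha, \beta$; only $k' = |A|$ enters. For the second moment,
$$\Exp{X_i^2} = \sum_{q \in C_j}\Prob{q}\, w_q^2\cost(q, A)^2 \le \max_{q}\bigl(w_q\cost(q, A)\bigr)\cdot \Exp{X_i} \le 4k'\cost(C_j, A)^2/m^2.$$
Centering with $Y_i = X_i - \Exp{X_i}$ gives $\sum_{i=1}^m \Exp{Y_i^2} \le 4k'\cost(C_j, A)^2/m$, and also $|Y_i| \le 4k'\cost(C_j, A)/m$.

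Applying Theorem~\ref{thm:berstein} with $t = \varepsilon\cost(C_j, A)$, the factor $\cost(C_j, A)^2$ cancels and the failure probability is at most
$$2\exp\!\left(-\frac{\varepsilon^2 m}{8k' + 8k'\varepsilon/3}\right).$$
This is at most $\varepsilon^4/(3(k')^4)$ once $m \ge 48k'\varepsilon^{-2}\log(6k'\varepsilon^{-1})$, which is the same threshold used in the previous two lemmas. A union bound over the $k'$ clusters then yields the claimed probability.

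The only real care needed is the second-moment bound. Bounding $\Exp{X_i^2}$ termwise through $w_q$ alone would give a factor $|C_j|$ times a maximum cost, and this is not comparable to $\cost(C_j, A)^2$. Factoring out the uniform bound on $w_q\cost(q, A)$ instead, as done above, avoids this issue. Degenerate clusters with $\cost(C_j, A) = 0$ satisfy P3 trivially, since every sampled point of such a cluster has zero cost, so they can be excluded up front.
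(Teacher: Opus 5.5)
Your proposal is correct and follows essentially the same route as the paper. It uses the same variables $X_i = w_{q_i}\cost(q_i, A)$ for $q_i \in C_j$ (and $0$ otherwise), the range bound $4k'\cost(C_j, A)/m$ from the second term of Fact~\ref{fact:weight-bound}, the second-moment bound $4k'\cost(C_j, A)^2/m^2$, Bernstein's inequality with the threshold $m \geq 48k'\varepsilon^{-2}\log(6k'\varepsilon^{-1})$, and a union bound over the $k'$ clusters; your justification of the second-moment bound (factoring out $\max_q w_q\cost(q, A)$ against $\Exp{X_i}$) and your treatment of clusters with $\cost(C_j, A) = 0$ fill in details the paper leaves implicit.
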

\begin{proof}
	Fix a cluster $C_j$. For $i \in [m]$, let $X_i$ be $w_q \cost(q_i, A)$ if $q_i \in \Omega \cap C_j$, and $0$ otherwise. Note that $X_i$'s are independent, and $\sum_{i=1}^m X_i = \cost_{\Omega}(C_j, A)$. We have $\Exp{X_i} = \cost(C_j, A) / m$, and $X_i \in [0, 4k' \cost(C_j, A) / m]$ from Fact~\ref{fact:weight-bound}. It holds that $\Exp{X_i^2} \leq 4k' \cost(C_j, A)^2 / m^2$. Let $Y_i = X_i - \Exp{X_i}$. %Then $\sum_{i=1}^m Y_i = \cost_{\Omega}(C_j, A) - \cost(C_j, A)$. 
	By Thm.~\ref{thm:berstein}:
	\begin{align*}
		& \Prob{|\cost_{\Omega}(C_j, A) - \cost(C_j, A)| \geq \varepsilon \cost(C_j, A)} \\
		& \leq 2 \text{exp} \left( \frac{-(\varepsilon \cost(C_j, A))^2 / 2}{4k'\cost(C_j, A)^2 / m + 4k' \varepsilon \cost(C_j, A)^2 / 3m} \right),
	\end{align*}
	which is $\leq \varepsilon^4 / (3(k')^4)$ for $m \geq 48 k' \varepsilon^{-2} \log \left( 6 k' \varepsilon^{-1} \right)$. Main result then follows by a union bound over the $k'$ clusters. 
\end{proof}

We now show how event $\mathcal{E}$ directly affects the total cost of the coreset $\Omega$ (with respect to \emph{any} set $S$ of centers). 
Moreover, we provide an upper bound on the cost of coreset for worst case (when event $\mathcal{E}$ does not hold). 

% -- lemma: lower bound on cost coreset when E holds
\begin{lemma} \label{lemma:appendix-event-E-coreset-cost-lower-bound}
	Consider coreset $\Omega$ from Alg.~\ref{alg:sensitivity-sampling-oracle-coreset}, given set $A$ of (predicted) centers such that $\AssignmentBound$ and $|A| = k' \leq \beta \cdot k$.  If event $\mathcal{E}$ holds, then for any set $S$ of centers, we have $\cost_{\Omega}(P, S) = \bigO{\max(1, \alpha)} \cdot \cost(P, S)$
\end{lemma}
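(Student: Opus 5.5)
We bound $\cost_{\Omega}(P,S)$ cluster by cluster. Fix any set $S$ of $k$ centers and a cluster $C_j$ with center $a_j$. For every coreset point $q \in \Omega \cap C_j$, Fact~\ref{fact:triangle-inequalties}~\ref{item:tineq-basic} gives
\[
\cost(q,S) \le 2\,\cost(q,a_j) + 2\,\cost(a_j,S).
\]
Multiplying by $w_q$ and summing over $q \in \Omega\cap C_j$ yields
\[
\cost_\Omega(C_j,S) \le 2\,\cost_\Omega(C_j,A) + 2\,\cost(a_j,S)\sum_{q\in\Omega\cap C_j} w_q .
\]
Under event $\mathcal{E}$, property P3 bounds the first term by $2(1+\varepsilon)\cost(C_j,A)$. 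Property P1 bounds the second by $2(1+\varepsilon)|C_j|\cost(a_j,S)$. Property P2 is not needed for this direction: the triangle split does not require ring-level control, and P1 already controls the total weight.

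Next we relate $|C_j|\cost(a_j,S)$ back to costs on $P$. Applying Fact~\ref{fact:triangle-inequalties}~\ref{item:tineq-basic} again, now with the roles swapped, gives for each $p\in C_j$
\[
\cost(a_j,S) \le 2\,\cost(p,a_j) + 2\,\cost(p,S).
\]
Summing over $p\in C_j$ gives
\[
|C_j|\cost(a_j,S) \le 2\,\cost(C_j,A) + 2\,\cost(C_j,S).
\]
Here we use that $\cost(p,a_j)=\cost(p,A)$ for $p\in C_j$, since $C_j$ consists of the points assigned to $a_j$. Combining the two steps and summing over all $j$:
\[
\cost_\Omega(P,S) \le c_1\,\cost(P,A) + c_2\,\cost(P,S),
\]
with absolute constants $c_1,c_2$ (roughly $c_1 \approx 6(1+\varepsilon)$ and $c_2 \approx 4(1+\varepsilon)$).

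It remains to bound $\cost(P,A)$ in terms of $\cost(P,S)$. The hypothesis $\AssignmentBound$ gives $\cost(P,A)\le \alpha\cdot \OPT_k(P)$. Since $|S|=k$, we also have $\OPT_k(P)\le \cost(P,S)$. Hence $\cost(P,A)\le \alpha\,\cost(P,S)$, and therefore
\[
\cost_\Omega(P,S) \le (c_1\alpha + c_2)\,\cost(P,S) = \bigO{\max(1,\alpha)}\cdot\cost(P,S).
\]
The $\max(1,\alpha)$ form is needed because the $c_2\,\cost(P,S)$ term survives even when $\alpha<1$. The same argument covers any $|S|\le k$; if $S$ is allowed more centers, the hypothesis should be read with respect to the appropriate optimum.

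The only delicate point is this last step: the $S$-independent term $\cost(P,A)$ must be charged to $\cost(P,S)$. This works only because every $S$ of size $k$ costs at least $\OPT_k(P)$, and it is exactly where the dependence on $\alpha$ enters. The remaining work is bookkeeping of the constants coming from the two triangle-inequality applications.
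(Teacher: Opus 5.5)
Your proposal is correct and follows the paper's proof essentially step for step. Both arguments use the same triangle-inequality split into a $\cost_\Omega(C_j,A)$ term controlled by P3 and a weight term controlled by P1, then a second triangle inequality to charge $|C_j|\cost(a_j,S)$ back to $\cost(C_j,A)+\cost(C_j,S)$, giving the same constants $2(1+\varepsilon)(3\cost(P,A)+2\cost(P,S))$, and close with $\cost(P,A)\le\alpha\,\OPT_k(P)\le\alpha\,\cost(P,S)$.
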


\begin{proof}
	By triangle inequality (Fact~\ref{fact:triangle-inequalties}):
	\begin{align} \label{eq:cost-Omega-triangle-ineq}
		& \cost_{\Omega}(P, S) = \sum_{j = 1}^{k'} \sum_{q \in C_j \cap \Omega} w_q \cost(q, S) \nonumber \\
		& \leq \underbrace{\sum_{j = 1}^{k'} \sum_{q \in C_j \cap \Omega} 2w_q \cost(q, a_j)}_\text{Term 1} + \underbrace{\sum_{j = 1}^{k'} \sum_{q \in C_j \cap \Omega} 2w_q \cost(a_j, S)}_\text{Term 2}
	\end{align} 
	\noindent \textit{Bound on Term 1.}
	We have: 
	\allowdisplaybreaks
	\begin{align} \label{eq:term-1-coreset-lb-event-E}
		& \sum_{j = 1}^{k'} \sum_{q \in C_j \cap \Omega} 2w_q \cost(q, a_j) = 2 \sum_{j = 1}^{k'} \cost_{\Omega}(C_j, A) \nonumber \\
		& \overset{(i)}{\leq} 2 (1 + \varepsilon) \sum_{j = 1}^{k'} \cost(C_j, A) = 2 (1 + \varepsilon) \cost(P, A),
	\end{align}
	where $(i)$ comes from $P_3$ of event $\mathcal{E}$ (Def.~\ref{def:eventE}). 
	
	\vspace{\baselineskip}
	\noindent \textit{Bound on Term 2.} We can write:
	\begin{align} \label{eq:term-2-coreset-lb-event-E}
		& \sum_{j = 1}^{k'} \sum_{q \in C_j \cap \Omega} 2w_q \cost(a_j, S) = 2 \sum_{j = 1}^{k'} \cost(a_j, S) \sum_{q \in C_j \cap \Omega} w_q \nonumber \\ & \overset{(i)}{\leq} 2 \sum_{j = 1}^{k'} \cost(a_j, S) (1 + \varepsilon) |C_j| = 2 (1 + \varepsilon)  \sum_{j = 1}^{k'} \sum_{q \in C_j} \cost(a_j, S) \nonumber \\
		&  \overset{(ii)}{\leq} 2 (1 + \varepsilon)  \sum_{j = 1}^{k'} \sum_{q \in C_j} 2 \left( \cost(q, a_j) + \cost(q, S) \right) \nonumber \\
		& = 4 (1 + \varepsilon) \sum_{j = 1}^{k'} \left( \cost(C_j, a_j) + \cost(C_j, S) \right) = 4 (1 + \varepsilon) \left( \cost(P, A) + \cost(P, S) \right),
	\end{align}
	where $(i)$ comes from property $P_1$ of event $\mathcal{E}$ (Def.~\ref{def:eventE}), and $(ii)$ from triangle inequality. Summing up~\ref{eq:term-1-coreset-lb-event-E} and~\ref{eq:term-2-coreset-lb-event-E}:
	\begin{align*}
		& \cost_{\Omega}(P, S) \leq 2(1 + \varepsilon) (3 \cost(P, A) + 2 \cost(P, S)) = \bigO{\max(1, \alpha)} \cdot \cost(P, S),
	\end{align*}
	since $\AssignmentBound \leq \alpha \cdot \cost(P, S)$, and $\varepsilon \leq 1$.
\end{proof}

% -- lemma: lower bound on cost coreset when E DOES NOT hold
\begin{lemma} \label{lemma:appendix-no-event-E-coreset-cost-lower-bound}
	Consider coreset $\Omega$ from Alg.~\ref{alg:sensitivity-sampling-oracle-coreset}, given set $A$ of (predicted) centers such that $\AssignmentBound$ and $|A| = k' \leq \beta \cdot k$.  Then, in the worst case (when event $\mathcal{E}$ does not hold), for any set $S$ of centers, we have $\cost_{\Omega}(P, S) = \bigO{\max(1, \alpha)} \cdot k'\cdot \cost(P, S)$,
\end{lemma}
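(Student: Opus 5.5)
The plan is to reuse the decomposition of the proof of Lemma~\ref{lemma:appendix-event-E-coreset-cost-lower-bound}, but to replace the two uses of event $\mathcal{E}$ (properties $P_1$ and $P_3$) with deterministic bounds from Fact~\ref{fact:weight-bound}. By Fact~\ref{fact:triangle-inequalties}\ref{item:tineq-basic} we write, exactly as in~\eqref{eq:cost-Omega-triangle-ineq},
\[
\cost_\Omega(P,S) \le \sum_{j=1}^{k'} \sum_{q \in C_j\cap\Omega} 2 w_q \cost(q,a_j) + \sum_{j=1}^{k'} \sum_{q\in C_j\cap\Omega} 2 w_q \cost(a_j,S).
\]
Throughout, $\Omega$ is a multiset of exactly $m$ samples, so each sum over $q \in C_j \cap \Omega$ has at most $m$ terms, counted with multiplicity.

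\emph{Term 1.} For each sample $q \in C_j$, Fact~\ref{fact:weight-bound} gives $w_q \le 4\cost(P,A)/(m\cost(q,A))$. Hence each sampled point contributes $w_q\cost(q,a_j) \le 4\cost(P,A)/m$. Summing over the $m$ samples gives Term 1 $\le 8\cost(P,A) \le 8\alpha\cost(P,S)$, where the last step uses $\AssignmentBound$. The point of using this branch of Fact~\ref{fact:weight-bound} is that the resulting bound is independent of the cluster.

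\emph{Term 2.} For each sample $q\in C_j$, the branch $w_q \le 4k'|C_j|/m$ of Fact~\ref{fact:weight-bound} bounds the total weight landing in cluster $C_j$ by $m\cdot 4k'|C_j|/m = 4k'|C_j|$. This replaces $P_1$ at the price of a factor $k'$. Then Term 2 $\le 8k'\sum_j |C_j|\cost(a_j,S)$. Bounding $|C_j|\cost(a_j,S) \le \sum_{p\in C_j} 2(\cost(p,a_j)+\cost(p,S))$, as in step $(ii)$ of~\eqref{eq:term-2-coreset-lb-event-E}, gives Term 2 $\le 16k'(\cost(P,A)+\cost(P,S)) \le 16k'(\alpha+1)\cost(P,S)$.

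Adding the two terms yields $\cost_\Omega(P,S) = \bigO{\max(1,\alpha)}\cdot k'\cdot\cost(P,S)$ for every $S$. This holds deterministically, so in particular it holds when $\mathcal{E}$ fails.

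The main obstacle is Term 2. Crudely bounding the number of samples per cluster by $m$ is what forces the extra factor $k'$, and care is needed so that no factor of $m$ survives. The bound $4k'|C_j|/m$ on a single weight must be multiplied by at most $m$ samples, which is exactly what cancels the $m$. The $\cost(P,A)$ contributions in both terms are converted to $\cost(P,S)$ through $\alpha$, which is why the result carries the factor $\max(1,\alpha)$.
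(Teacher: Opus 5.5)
Your proof is correct. It shares the paper's skeleton: the same split into Term 1 and Term 2, the same branches of Fact~\ref{fact:weight-bound}, and the same final bounds $8\cost(P,A)$ and $16k'(\cost(P,A)+\cost(P,S))$.

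The difference is in Term 2, and it matters. The paper first applies the triangle inequality at each sampled point, $\cost(a_j,S)\le 2(\cost(q,a_j)+\cost(q,S))$, and then uses $w_q\le 4k'|C_j|/m$. It then replaces $|C_j|\,(\cost(q,a_j)+\cost(q,S))$ for the sampled $q$ by $\sum_{p\in C_j}(\cost(p,a_j)+\cost(p,S))$. That equality fails for an individual sample: a single sampled point far from $S$ can have $|C_j|\cost(q,S)\gg\cost(C_j,S)$, and sensitivity sampling favors exactly such high-cost points. The pointwise route also cannot be repaired directly. After that step the expression is $4\cost_\Omega(P,A)+4\cost_\Omega(P,S)$, which contains the very quantity being bounded, with coefficient larger than one.

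You instead pull $\cost(a_j,S)$ out of the inner sum and bound the total sampled weight in $C_j$ by $4k'|C_j|$. Only then do you apply the triangle inequality, summed over all points of $C_j$ rather than only the sampled ones. This is what legitimately cancels the $m$ and produces the factor $k'$. Your ordering is therefore the sound derivation of the same bound, and the lemma holds deterministically as you state.
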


\begin{proof}
	The proof is similar to the one of Lemma~\ref{lemma:appendix-event-E-coreset-cost-lower-bound}. Starting from Eq.\ref{eq:cost-Omega-triangle-ineq}: 
	
	\vspace{\baselineskip}
	\noindent \textit{Bound on Term 1.} From Fact~\ref{fact:weight-bound}:
	\begin{align} \label{eq:term-1-coreset-lb-not-event-E}
		& \sum_{j = 1}^{k'} \sum_{q \in C_j \cap \Omega} 2w_q \cost(q, a_j) \overset{}{\leq} \frac{8}{m} \sum_{j = 1}^{k'} \sum_{q \in C_j \cap \Omega} \cost(P, A) \leq 8 \cost(P, A),
	\end{align}
	
	\vspace{\baselineskip}
	\noindent \textit{Bound on Term 2.} 
	\allowdisplaybreaks
	\begin{align} \label{eq:term-2-coreset-lb-not-event-E}
		& \sum_{j = 1}^{k'} \sum_{q \in C_j \cap \Omega} 2w_q \cost(a_j, S) \overset{(i)}{\leq} 4 \sum_{j = 1}^{k'} \sum_{q \in C_j \cap \Omega} w_q \left( \cost(q, a_j) + \cost(q, S) \right) \nonumber \\
		& \overset{(ii)}{\leq} \frac{16k'}{m} \sum_{j = 1}^{k'} \sum_{q \in C_j \cap \Omega} |C_j| ( \cost(q, a_j) + \cost(q, S) ) \nonumber \\
		&=  \frac{16k'}{m} \sum_{j = 1}^{k'} \sum_{q \in C_j \cap \Omega} \sum_{q \in C_j} \left( \cost(q, a_j) + \cost(q, S) \right) \nonumber \\
		& \overset{(iii)}{\leq}  \frac{16k'}{m} \sum_{j = 1}^{k'} m \left( \cost(C_j, A) + \cost(C_j, S) \right) \leq 16k'  \left( \cost(P, A) + \cost(P, S) \right),
	\end{align}
	where $(i)$ comes from triangle ineq., $(ii)$ from Fact~\ref{fact:weight-bound}, and $(iii)$ since $\left| C_j \cap \Omega \right| \leq \left| \Omega \right| = m$.		
	Since $\AssignmentBound \leq \alpha \cdot \cost(P, S)$, summing up~\ref{eq:term-1-coreset-lb-not-event-E} and~\ref{eq:term-2-coreset-lb-not-event-E}:
	\begin{align}
		& \cost_{\Omega}(P, S) = \bigO{\max(1, \alpha)} \cdot k' \cdot \cost(P, S).
	\end{align}
\end{proof}

\section{Analysis: Coreset-Size Bound}
\label{sec:analysis_coreset_size}
The following section is dedicated to the proof of Thm.~\ref{thm:sensitivity-sampling-oracle-coreset-size}, which we restate for completeness.
\begin{manualtheorem}{Theorem~\ref{thm:sensitivity-sampling-oracle-coreset-size}.}
	Let $A$ be such that $\AssignmentBound$, and $k \le |A| \le \beta  k$, with \ourassumptions. 
	Then, by setting the coreset size $m = \bigOtilde{k \varepsilon^{-2} \cdot \min \left(\sqrt{k}, \varepsilon^{-2} \right) \cdot \beta \cdot \max \left(1, \alpha^2 \right)}  =  \bigOtilde{k \varepsilon^{-2} \cdot \min \left(\sqrt{k}, \varepsilon^{-2} \right)}$, Alg.~\ref{alg:sensitivity-sampling-oracle-coreset} outputs a coreset for Euclidean $k$-means clustering with constant probability.
\end{manualtheorem}

The proof is closely related to that of~\cite{bansal2024sensitivity}.
Let $\mathcal{S}$ the set of all possible choices of $k$ centers in $\mathbb{R}^d$.
The goal is to show that Alg.~\ref{alg:sensitivity-sampling-oracle-coreset} produces a (strong) $\varepsilon$-coreset, that is for any set $S \in \mathcal{S}$ of centers it holds (simultaneously):
$$
(1 - \varepsilon) \cost(P, S) \leq \cost_\Omega(P, S) \leq (1 + \varepsilon) \cost(P, S),
$$ 
with constant probability.
Equivalently, we show that the coreset $\Omega$ yields a maximum relative error over all possible placements of centers $S \in \mathcal{S}$ that is bounded by $\varepsilon$:
\begin{equation} \label{eq:eps_coreset_equivalence}
	\Expsub{\Omega}{ \text{sup}_{S \in \mathcal{S}} \left| \frac{\cost(P, S) - \cost_\Omega(P, S)}{\cost(P, S)} \right| } \leq \varepsilon.
\end{equation} 
Then, Thm. 2 follows by standard application of Markov's inequality to Eq.~\ref{eq:eps_coreset_equivalence}.

% As done by~\cite{bansal2024sensitivity}, we proceed by first dividing clusters based on their cost with respect to assignment induced by the (predicted) centers. 
Let $A = \AssignmentCentersBeta$ inducing clusters $\{C_1, ..., C_{k'} \}$. We have, by assumption, $\cost(P, A) \leq \alpha \cdot \OPT_k(P) \leq \alpha \cdot \cost(P, S)$ (for any set $S$ of $k$ centers), and $|A| = k'$ such that $k \leq k' \leq \beta \cdot k$.  Note that $\alpha$ and $\beta$ are not necessarily constants.
To bound the coreset error (Eq.~\ref{eq:eps_coreset_equivalence}), we partition clusters $C_j$'s into groups of similar clusters, and separately bound errors of the coreset on each group. 

\vspace{\baselineskip}
\noindent \textbf{Partitioning Clusters into Far and Close Clusters.}
\label{sec:clusters-partitioning}
For a given $S \in \mathcal{S}$, we first partition $k'$ clusters $C_j$'s (induced by $A$) into \emph{far} and \emph{close} clusters depending on the distance of centroids $a_j$'s from $S$. 
The high-level idea is that, since coresets (approximately) preserve the size of clusters $C_j$'s with high probability (Property 1 of Def.~\ref{def:eventE}), if a cluster $C_j$ is far from $S$, then its centroid $a_j$ is so far away from $S$ that the cost of any point of $C_j$ will essentially be $\cost(a_j, S)$. 
% More precisely, we will show that $\cost(C_j, S) \approx |C_j| \cost(a_j, S)$, and $\cost_{\Omega}(C_j, S) \approx \sum_{q \in C_j \cap \Omega} w_q \cdot \cost(a_j, S)$. 

Let $\Delta_j \coloneq \cost(C_j, A) / |C_j|$ be the average cost of a point in cluster $C_j$, for $j \in [k']$. We say that cluster $C_j$ is \emph{far} from $S$ if $\cost(a_j, S) > \Delta_j \varepsilon^{-2}$, otherwise we say $C_j$ is \emph{close} to $S$. A point $p \in P$ that lies in a far (close) cluster is called a far (close) point with respect to $S$. Let $P_F(S)$ and $P_C(S)$ denote the set of far and close points with respect to $S$. 
% In the following, we separately bound errors of far and close points. 

% -- lemma: cost preservation of farclusters
\begin{lemma}[\textbf{Cost Preservation of Far Clusters}] \label{lemma:cost-preservation-far-points}
	Consider coreset $\Omega$ from Alg.~\ref{alg:sensitivity-sampling-oracle-coreset}, given the set $A$ of (predicted) centers such that $\AssignmentBound$ and $|A| = k' \leq \beta \cdot k$, with \ourassumptions. 
	If the coreset size is $\bigOmegatilde{k \varepsilon^{-2}}$, then
	$
	\Expsub{\Omega}{\sup_{S \in \mathcal{S}} \left| \frac{\cost(P_F(S), S) - \cost_{\Omega}(P_F(S), S)}{\cost(P, S)}\right|} \leq \varepsilon / 2.
	$
\end{lemma}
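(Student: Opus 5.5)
Condition on event $\mathcal{E}$ and handle its complement separately. By Lemma~\ref{lemma:event-E}, $\mathcal{E}$ fails with probability at most $\NotEventEInline$. On that failure event, Lemma~\ref{lemma:appendix-no-event-E-coreset-cost-lower-bound} gives $\cost_\Omega(P_F(S),S)\le \cost_\Omega(P,S) = \bigO{\max(1,\alpha)}\, k'\, \cost(P,S)$, and trivially $\cost(P_F(S),S)\le\cost(P,S)$. So the contribution of $\neg\mathcal{E}$ to the expectation is at most $\bigO{\max(1,\alpha)\beta k}$ times the failure probability. Under \ourassumptions, and since the failure probability is polynomially small in $\varepsilon/k'$, this is at most $\varepsilon/4$. (If needed, strengthen the Bernstein tails in Lemma~\ref{lemma:event-E} by a larger log factor in $m$.) It therefore suffices to show that, deterministically under $\mathcal{E}$, the supremum is at most $\varepsilon/4$.

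Under $\mathcal{E}$, fix any $S$ and a far cluster $C_j$, so $\cost(a_j,S)>\Delta_j\varepsilon^{-2}$. Fact~\ref{fact:triangle-inequalties}\ref{item:tineq-sqrt} gives, for every $p\in C_j$,
\[
|\cost(p,S)-\cost(a_j,S)| \le 2\sqrt{\cost(p,a_j)\cost(a_j,S)}+\cost(p,a_j).
\]
Split the error as
\[
\cost(C_j,S)-\cost_\Omega(C_j,S) = \Bigl(|C_j|-\sum_{q\in\Omega\cap C_j}w_q\Bigr)\cost(a_j,S) + E_P - E_\Omega,
\]
where $E_P=\sum_{p\in C_j}(\cost(p,S)-\cost(a_j,S))$ and $E_\Omega$ is its weighted coreset analogue. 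We bound the three pieces in turn.

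\emph{First term.} By $P_1$ it is at most $\varepsilon|C_j|\cost(a_j,S)$.

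\emph{$E_P$.} By Cauchy--Schwarz,
\[
|E_P|\le 2\sqrt{|C_j|\cost(C_j,A)\cost(a_j,S)}+\cost(C_j,A) = 2|C_j|\sqrt{\Delta_j\cost(a_j,S)}+|C_j|\Delta_j \le 3\varepsilon|C_j|\cost(a_j,S),
\]
using farness.

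\emph{$E_\Omega$.} The same Cauchy--Schwarz step works with weights: $\sum w_q\le(1+\varepsilon)|C_j|$ by $P_1$ and $\sum w_q\cost(q,a_j)\le(1+\varepsilon)\cost(C_j,A)$ by $P_3$. This gives $|E_\Omega|=\bigO{\varepsilon}|C_j|\cost(a_j,S)$. Property $P_2$ can serve as a finer ring-by-ring alternative, but it should not be needed.

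Summing over far clusters, the total error is $\bigO{\varepsilon}\sum_{j\text{ far}}|C_j|\cost(a_j,S)$. It remains to relate this to $\cost(P,S)$. For each $p\in C_j$, $\cost(a_j,S)\le 2\cost(p,a_j)+2\cost(p,S)$, so
\[
|C_j|\cost(a_j,S)\le 2\cost(C_j,A)+2\cost(C_j,S).
\]
Hence $\sum_{\text{far}}|C_j|\cost(a_j,S)\le 2\cost(P,A)+2\cost(P,S)\le 2(1+\alpha)\cost(P,S)$. This is the main obstacle: it costs an extra factor $\alpha$, giving error $\bigO{\varepsilon\max(1,\alpha)}$.

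Two options remove this factor. The first is to use farness once more: in a far cluster, $\cost(C_j,A)=|C_j|\Delta_j<\varepsilon^2|C_j|\cost(a_j,S)$. Then the displayed inequality rearranges to $|C_j|\cost(a_j,S)\le 2\cost(C_j,S)/(1-2\varepsilon^2)$, which removes $\alpha$ entirely. The second, used only if that fails, is to run the argument with $\varepsilon'=\varepsilon/(c\max(1,\alpha))$; this is absorbed since $m$ already carries the $\max(1,\alpha^2)$ factor and \ourassumptions\ keeps everything $\bigOmegatilde{k\varepsilon^{-2}}$. Either way, choosing constants so the $\mathcal{E}$ part is at most $\varepsilon/4$ and adding the $\neg\mathcal{E}$ part at most $\varepsilon/4$ yields the claimed $\varepsilon/2$.
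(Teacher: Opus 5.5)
Your proof is correct, and its overall structure matches the paper's. You split the expectation over $\mathcal{E}$ and its complement, and you use Lemma~\ref{lemma:appendix-no-event-E-coreset-cost-lower-bound} for the $O(\max(1,\alpha)\,k')$ worst case. Under $\mathcal{E}$ you show deterministically that every far cluster has error $O(\varepsilon)\cost(C_j,S)$. Your three-piece decomposition is also the paper's in disguise: its Term~1 is your $|E_P|$, and its Term~2 is bounded by your first term plus $|E_\Omega|$. Your ``first option'' for removing $\alpha$ is exactly how the paper converts $|C_j|\cost(a_j,S)$ into $O(\cost(C_j,S))$ using farness. The paper does this in the self-referential form of Eq.~\ref{eq:cost-aj-S-bound}; yours is the non-circular version, valid for $\varepsilon<1/\sqrt{2}$, which is harmless after rescaling. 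Your fallback of rescaling $\varepsilon$ by $\max(1,\alpha)$ is therefore unnecessary.

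The genuine difference is in how you bound $E_P$ and $E_\Omega$. The paper partitions $C_j$ into rings $R_j(\ell)$. On the inner rings it applies the square-root triangle inequality together with the geometric decay $|R_j(\ell)|\le|C_j|/2^{\ell-1}$, and for the coreset side it also uses property $P_2$. The outermost ring is handled separately, via a crude triangle inequality and the fact that it carries at most an $O(\varepsilon)$ fraction of the (weighted) mass. You replace all of this with a single weighted Cauchy--Schwarz step, $\sum_q w_q\sqrt{\cost(q,a_j)}\le\sqrt{\sum_q w_q}\,\sqrt{\sum_q w_q\cost(q,a_j)}\le(1+\varepsilon)|C_j|\sqrt{\Delta_j}$. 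This needs only $P_1$ and $P_3$. The paper's ring sum $\sum_\ell 2^{-\ell/2}=O(1)$ is in effect a dyadic version of that inequality and yields no sharper bound. Your route is shorter and shows that $P_2$ is superfluous for the far-cluster lemma. The ring machinery is inherited from the framework of~\cite{bansal2024sensitivity}, where the ring structure is used elsewhere, but it buys nothing here.
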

The proof of Lemma~\ref{lemma:cost-preservation-far-points} is presented in Section~\ref{sec:appendix-far-points}.

% -- lemma: cost preservation of close clusters
\begin{lemma}[\textbf{Cost Preservation of Close Clusters}] \label{lemma:cost-preservation-close-points}
	Consider coreset $\Omega$ from Alg.~\ref{alg:sensitivity-sampling-oracle-coreset}, given the set $A$ of (predicted) centers such that $\AssignmentBound$ and $|A| = k' \leq \beta \cdot k$, with \ourassumptions. 
	If the coreset size is:
	\begin{align*}
		m = \ourcoresetsize,
		%& = \bigOmegatilde{\frac{k}{\varepsilon^2} \cdot \min(\sqrt{k}, \varepsilon^{-2})},
	\end{align*}
	then
	$
	\Expsub{\Omega}{\sup_{S \in \mathcal{S}} \left| \frac{\cost(P_C(S), S) - \cost_{\Omega}(P_C(S), S)}{\cost(P, S)}\right|} \leq \varepsilon / 2.
	$
\end{lemma}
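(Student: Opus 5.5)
Following the analysis of Bansal et al.~\cite{bansal2024sensitivity}, the plan is to bound the error on close points by a symmetrization-and-chaining argument, tracking where $\alpha$ and $\beta$ enter. First, symmetrize. Let $\Omega'$ be an independent copy of the sample and attach Rademacher (or, via standard comparison, Gaussian) signs $g_i$ to the $m$ draws. Then it suffices to bound
\[
\Expsub{\Omega, g}{\sup_{S \in \mathcal{S}} \left| \sum_{i=1}^m g_i \frac{w_{q_i} \cost(q_i, S) \mathbb{1}[q_i \in P_C(S)]}{\cost(P, S)} \right|}
\]
by $\varepsilon/4$. We condition on event $\mathcal{E}$ (Lemma~\ref{lemma:event-E}). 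On $\neg\mathcal{E}$, Lemma~\ref{lemma:appendix-no-event-E-coreset-cost-lower-bound} bounds the relative error by $\bigO{\max(1,\alpha)k'}$. Since $\Prob{\neg\mathcal{E}} \le \NotEventEInline$ is polynomially small in $\varepsilon/k'$, this contributes at most $\varepsilon/8$, provided $\alpha k' \cdot \varepsilon^{3}/k'^{3} \lesssim \varepsilon$. This is exactly where the standing assumptions on $\alpha,\beta$ (\ourassumptions) are used.

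On $\mathcal{E}$, we decompose close points by ring $R_j(\ell)$, and group clusters by the ratio $\cost(C_j,A)/\cost(P,A)$. For a close point $q\in C_j$, the triangle inequality (Fact~\ref{fact:triangle-inequalties}) gives $\cost(q,S)\lesssim \cost(q,a_j)+\Delta_j\varepsilon^{-2}$, so each summand is controlled by the weight bounds in Fact~\ref{fact:weight-bound}. The variance of the Gaussian process $\sum_i g_i w_{q_i}\cost(q_i,S)/\cost(P,S)$ is then bounded using P1--P3 of $\mathcal{E}$. Specifically, $\sum_i w_{q_i}^2\cost(q_i,S)^2 \le \max_i\big(w_{q_i}\cost(q_i,S)\big)\cdot \cost_\Omega(P,S)$, and Lemma~\ref{lemma:appendix-event-E-coreset-cost-lower-bound} bounds $\cost_\Omega(P,S)=\bigO{\max(1,\alpha)}\cost(P,S)$. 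The weight bound $w_q\le 4\cost(P,A)/(m\cost(q,A))$ together with $\cost(P,A)\le\alpha\cost(P,S)$ gives the max term another factor $\alpha/m$. Together the variance proxy is $\bigO{\max(1,\alpha^2)}\cdot(\text{Bansal's proxy})$, with $k'\le\beta k$ replacing $k$.

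Next, we chain. We replace $\sup_S$ by a chaining sum over nested nets of cost vectors $(\cost(q,S))_{q}$ restricted to close points, at scales $2^{-h}$. We use the same net-size bounds as~\cite{bansal2024sensitivity}: $\exp(\bigOtilde{k\,\min(\sqrt k,\varepsilon^{-2})\,2^{2h}})$ via terminal embedding / dimension reduction. These bounds depend only on $k$ centers in $S$ and on the ring/cluster structure of $A$, so $k'$ enters at most logarithmically and linearly through the union over clusters. Fact~\ref{fact:variance-max-gaussians} bounds each level's expected maximum by $\sigma_h\sqrt{\log|\mathcal N_h|}$. Summing over $h$ then yields
\[
\bigOtilde{\sqrt{k\beta\max(1,\alpha^2)\min(\sqrt k,\varepsilon^{-2})/m}}\cdot \varepsilon^{-1}\cdot\varepsilon^{2}.
\]
This is $\le\varepsilon/4$ for $m=\ourcoresetsize$.

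The main obstacle is the chaining step. The per-level variance bound in~\cite{bansal2024sensitivity} uses a constant-factor approximation to compare $\cost(q,S)$ for close points with $\cost(P,S)$. We must redo it carefully so that $\alpha$ appears only squared overall, and not at every scale, which would multiply by $\log(1/\varepsilon)$. We also need the net size to depend on $k$ rather than $k'$. To handle this, we would first try normalizing by $\cost(P,S)+\cost(P,A)/\alpha$ and using P2 to absorb the per-ring weights uniformly in $h$.
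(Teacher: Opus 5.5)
\textbf{There is a genuine gap.} You set up the right framework: Gaussian symmetrization, splitting on $\mathcal{E}$, and chaining over cost-vector nets as in Bansal et al. But the steps that carry the content are missing or wrong, and your last paragraph concedes that the per-scale variance bound is unresolved.

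First, your variance estimate does not hold. Fact~\ref{fact:weight-bound} gives $w_q\cost(q,S)\le 4\cost(P,A)\cost(q,S)/(m\cost(q,A))$, and the ratio $\cost(q,S)/\cost(q,A)$ is unbounded for close points: take $q$ near $a_j$ with $\cost(a_j,S)$ close to $\Delta_j\varepsilon^{-2}$. The best the weight bounds give is $w_q\cost(q,S)=\bigO{\alpha\varepsilon^{-2}\cost(P,S)/m}$. So the raw process has variance of order $\alpha^2\varepsilon^{-2}/m$, far too large to pair with any net.

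The paper never chains $\cost(q,S)$ from zero. The chain starts at $u^{S,0}_i=\cost(a_j,S)$ for $q_i\in C_j\in B(S)$ (and $0$ otherwise), which is constant on each cluster. Using $\cost(P,S)\ge\frac14\sum_{C_j\in B(S)}|C_j|\cost(a_j,S)\min(1,1/\alpha)$, the term $X^{S,Init}$ is dominated by $4\max(1,\alpha)\max_j\bigl|\sum_{q_i\in C_j\cap\Omega}g_i w_{q_i}\bigr|/|C_j|$. This is a maximum of at most $k'$ Gaussians of variance at most $8k'/m$ (by P1), and it no longer depends on $S$ (Lemma~\ref{lemma:bound-on-X-init-appendix}). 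Only the increments $u^{S,h}-u^{S,h-1}$ are chained, and their size is measured by $\textup{err}(q,S)$ rather than $\cost(q,S)$. This is what lets Fact~\ref{fact:weight-bound} give variance $2^{-2h}\bigO{\max(1,\alpha^2)}/m$. The last step, at scale $\varepsilon^2$, is handled by Cauchy--Schwarz.

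Second, the factor $\min(\sqrt{k},\varepsilon^{-2})$ cannot come from a net-size bound alone. No net of size $\exp(\bigOtilde{k\min(\sqrt{k},\varepsilon^{-2})\cdot 2^{2h}})$ covering all close clusters is available. Lemma~\ref{lemma:cost-vector-net-size} gives $\exp(\bigOtilde{\min(2^r+k\cdot 2^{2h},\,2^t k\cdot 2^{2h})})$ for a fixed type $t$ and interaction class $r$, and $2^r$ can be as large as $\beta k^2$. The $\sqrt{k}$ branch requires the sharper variance bound of Lemma~\ref{lemma:variance-bound-second}, namely $2^{-2h-t}k'\max(1,\alpha^2)/(m\,k_{B(S)})\le 2^{-2h-t-r}\beta k^2\max(1,\alpha^2)/m$. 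That bound needs two restrictions:
\begin{itemize}
\item a single band, so that the values $\cost(C_j,A)$ agree within a factor $2$;
\item a single type, so that for $t\ge 6$ one has $\cost(C_j,S)\ge\frac{1}{12}|C_j|\cdot 2^t\Delta_j$.
\end{itemize}
Only then is $\sigma_h^2\log|M_{2^{-h}}|$ at most $\min(k\cdot 2^t,k^2\cdot 2^{-t})\cdot\beta\max(1,\alpha^2)\,\mathrm{poly}\log(k\varepsilon^{-1})/m$. Balancing at $2^t=\sqrt{k}$ then gives $k\min(\sqrt{k},\varepsilon^{-2})$. Your proposal has neither types nor interaction classes.

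Two further points. Grouping clusters by $\cost(C_j,A)/\cost(P,A)$ yields only $\bigO{\log(k'/\varepsilon)}$ groups if you cut off at $T=\varepsilon^3\cost(P,A)/k'$. Clusters below $T$ must be handled separately by the direct, net-free bound of Lemma~\ref{lemma:cost-preservation-low-cost-clusters}. Also, on $\bar{\mathcal{E}}$ you need a bound on the symmetrized quantity $\Expsub{g}{\sup_S|X^S|}$; the paper gets $\bigO{\alpha\varepsilon^{-2}}$ from Lemma~\ref{lemma:worst-case-bound-X}. The unsymmetrized relative-error bound of Lemma~\ref{lemma:appendix-no-event-E-coreset-cost-lower-bound} is not the right quantity there.
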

\noindent Finally, Eq.~\ref{eq:eps_coreset_equivalence} will follow from a union bound over Lemma~\ref{lemma:cost-preservation-far-points} and~\ref{lemma:cost-preservation-close-points}. 

\vspace{\baselineskip}
\noindent\textbf{Partitioning Close Clusters based on Cost.}
\label{sec:close-clusters-partitioning}
We now perform an additional step by classifying \emph{close clusters} into \emph{high-cost} and \emph{low-cost} clusters. The latter have a negligible effect on the total cost and can be safely ignored, while the former require a more careful analysis. 

Let $T \coloneq \varepsilon^3 \cdot \cost(P, A) / k'$. 
We say that a close cluster $C_j$ is of low-cost (resp. high-cost) if $\cost(C_j, A)$ is less (resp. greater or equal) than $T$. 
Recall that the notion of "close" cluster is intended with respect to set $S$ of $k$ centers (for which we want to preserve the cost).
We denote as $J_L(S)$ and $J_H(S)$ the set of low-cost and high-cost clusters, respectively. Since low-cost clusters $J_L(S)$ and high-cost clusters $J_H(S)$ partition the set of close clusters $P_C(S)$, preserving the costs for both $J_L(S)$ and $J_H(S)$ implies Lemma~\ref{lemma:cost-preservation-close-points} after rescaling $\varepsilon$.

% -- lemma: cost preservation of low-cost close clusters
\begin{lemma}[\textbf{Cost Preservation of Low-Cost Clusters}] \label{lemma:cost-preservation-low-cost-clusters}
	Consider coreset $\Omega$ from Alg.~\ref{alg:sensitivity-sampling-oracle-coreset}, given the set $A$ of (predicted) centers such that $\AssignmentBound$ and $|A| = k' \leq \beta \cdot k$, with \ourassumptions. 
	If the coreset size is $\bigOmegatilde{k \varepsilon^{-2}}$, then
	$
	\Expsub{\Omega}{\sup_{S \in \mathcal{S}} \left| \frac{\cost(J_L(S), S) - \cost_{\Omega}(J_L(S), S)}{\cost(P, S)}\right|} \leq \varepsilon,
	$
\end{lemma}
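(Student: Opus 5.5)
The plan is to bound the numerator crudely by the cost of low-cost clusters under $A$ plus a term controlled by $\cost(P,S)$. The key observation is that every cluster in $J_L(S)$ is close, and has small $A$-cost. For a close cluster $C_j$, Fact~\ref{fact:triangle-inequalties}~\ref{item:tineq-basic} gives, for every $p\in C_j$,
\[
\cost(p,S)\le 2\cost(p,a_j)+2\cost(a_j,S)\le 2\cost(p,a_j)+2\varepsilon^{-2}\Delta_j .
\]
Summing over $C_j$ yields $\cost(C_j,S)\le 2(1+\varepsilon^{-2})\cost(C_j,A)$. Summing over $j\in J_L(S)$ then gives
\[
\cost(J_L(S),S)\le 4\varepsilon^{-2}\sum_{j\in J_L(S)}\cost(C_j,A)\le 4\varepsilon^{-2}\,k'\,T = 4\varepsilon\,\cost(P,A).
\]
Since $\cost(P,A)\le\alpha\,\OPT_k(P)\le\alpha\cost(P,S)$, the true cost is at most $4\alpha\varepsilon\cost(P,S)$. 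The bound is uniform in $S$; the only dependence on $S$ is through which clusters are close. Because of the factor $\alpha$, I would tune the constant in $T$ (or rescale $\varepsilon$ by $\max(1,\alpha)$, which the assumption on $\alpha$ absorbs into polylog factors) so that this is at most $\varepsilon/4\cdot\cost(P,S)$.

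For the coreset side I repeat the same argument with weights: for a close $C_j$,
\[
\cost_\Omega(C_j,S)\le 2\cost_\Omega(C_j,A)+2\varepsilon^{-2}\Delta_j\sum_{q\in\Omega\cap C_j}w_q .
\]
On event $\mathcal{E}$, properties P1 and P3 turn this into at most $2(1+\varepsilon)\cost(C_j,A)+2(1+\varepsilon)\varepsilon^{-2}\cost(C_j,A)$. The bound depends only on the clusters and not on $S$, so after summing over low-cost clusters the supremum over $S$ is harmless: it is dominated by the sum over \emph{all} clusters with $\cost(C_j,A)<T$, a set independent of $S$. This gives again $O(\varepsilon)\cost(P,A)\le O(\alpha\varepsilon)\cost(P,S)$. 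The absolute difference is at most the sum of the two nonnegative quantities, so on $\mathcal{E}$ the ratio inside the expectation is $O(\max(1,\alpha)\,\varepsilon)$ deterministically.

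Off $\mathcal{E}$ I use Lemma~\ref{lemma:appendix-no-event-E-coreset-cost-lower-bound}, which bounds the ratio by $O(\max(1,\alpha)k')$. Lemma~\ref{lemma:event-E}, with $m=\widetilde\Omega(k'\varepsilon^{-2})=\widetilde\Omega(k\varepsilon^{-2})$ up to the $\beta$ factor, makes $\Pr[\neg\mathcal{E}]\le\varepsilon^4/(k')^4$ up to constants, so this contributes $O(\max(1,\alpha)\varepsilon^4/(k')^3)\le\varepsilon/2$. Combining the two parts bounds the expectation by $\varepsilon$ after rescaling.

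The main obstacle is the $\alpha$ factor. The threshold $T$ is defined via $\cost(P,A)$ rather than $\OPT$, so converting to $\cost(P,S)$ costs $\alpha$. I would handle this by running the argument with $\varepsilon'=\varepsilon/\max(1,\alpha)$ in the definition of $T$ and in $\mathcal{E}$. This only multiplies $m$ by $\max(1,\alpha)^2$, which is consistent with the size stated in Theorem~\ref{thm:sensitivity-sampling-oracle-coreset-size}.
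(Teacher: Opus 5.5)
Your proposal is correct and follows essentially the same route as the paper. You bound the absolute difference by the sum of the two nonnegative costs, use closeness, the low-cost threshold $T$, and properties P1 and P3 on $\mathcal{E}$ to get a uniform-in-$S$ bound $O(\varepsilon)\cost(P,A)\le O(\alpha\varepsilon)\cost(P,S)$, handle the complement of $\mathcal{E}$ with a crude $O(\max(1,\alpha)k')$ ratio times the small failure probability, and rescale $\varepsilon$ by $\alpha$. The only cosmetic difference is that off $\mathcal{E}$ you invoke the global bound of Lemma~\ref{lemma:appendix-no-event-E-coreset-cost-lower-bound}, whereas the paper uses the per-cluster bound of Lemma~\ref{lemma:claim_low-cost-clusters}~(ii); both give the same worst-case ratio.
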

\noindent The proof of Lemma~\ref{lemma:cost-preservation-low-cost-clusters} is presented in Section~\ref{sec:appendix-low-cost-clusters}. %For high-cost cluster, we have the following lemma.

% -- lemma: cost preservation of high-cost close clusters
\begin{lemma}[\textbf{Cost Preservation of High-Cost Clusters}] \label{lemma:cost-preservation-high-cost-clusters}
	Consider coreset $\Omega$ from Alg.~\ref{alg:sensitivity-sampling-oracle-coreset}, given the set $A$ of (predicted) centers such that $\AssignmentBound$ and $|A| = k' \leq \beta \cdot k$, with \ourassumptions. 
	If the coreset size is
	$$ 
	m = \bigOmegatilde{k \varepsilon^{-2} \log(k \varepsilon^{-1}) \cdot \min(\sqrt{k}, \varepsilon^{-2})\cdot \beta \cdot \max(1, \alpha^2)},
	$$
	then
	$
	\Expsub{\Omega}{\sup_{S \in \mathcal{S}} \left| \frac{\cost(J_H(S), S) - \cost_{\Omega}(J_H(S), S)}{\cost(P, S)}\right|} \leq \varepsilon.
	$
\end{lemma}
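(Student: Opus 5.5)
We follow the chaining argument of Bansal et al.~\cite{bansal2024sensitivity} for high-cost close clusters, tracking explicitly where $\alpha$ and $\beta$ enter. First we symmetrize: bound the expected supremum by $\sqrt{2\pi}$ times a Gaussian process, namely the expected supremum over $S$ of $\frac{1}{\cost(P,S)}\bigl|\sum_{i=1}^m g_i\, w_{q_i} v_{q_i}(S)\bigr|$, restricted to $q_i \in J_H(S)$, where the $g_i$ are independent standard Gaussians. We condition on event $\mathcal{E}$, which holds with probability $1-\NotEventEInline$ by Lemma~\ref{lemma:event-E}. On the complement, Lemma~\ref{lemma:appendix-no-event-E-coreset-cost-lower-bound} bounds the error by $\bigO{\max(1,\alpha)k'}$. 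This contribution is negligible after multiplying by the small failure probability; we may need to strengthen the exponent in Lemma~\ref{lemma:event-E}, and that costs only logarithmic factors in $m$. Under $\mathcal{E}$ we also use Lemma~\ref{lemma:appendix-event-E-coreset-cost-lower-bound}, which gives $\cost_\Omega(P,S)=\bigO{\max(1,\alpha)}\cost(P,S)$. This lets us normalize by coreset cost instead of true cost, at the price of a factor $\max(1,\alpha)$.

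Next we replace each close point's cost $\cost(p,S)$ by its offset $v_p(S)=\cost(p,S)-\cost(a_j,S)$, and handle the term $\cost(a_j,S)\sum w_q$ separately. Property P1 gives error at most $\varepsilon|C_j|\cost(a_j,S)$ for that term. Since the cluster is close, $\cost(a_j,S)\le\Delta_j\varepsilon^{-2}$, and we need to charge this to $\cost(C_j,S)+\cost(C_j,A)$. For the offsets, Fact~\ref{fact:triangle-inequalties}\ref{item:tineq-sqrt} gives $|v_p(S)|\le 2\sqrt{\cost(p,a_j)\cost(a_j,S)}+\cost(p,a_j)$. We then group points by ring $R_j(\ell)$. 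We also group high-cost clusters into $\bigO{\log(k'/\varepsilon)}$ classes by the value of $\cost(C_j,A)$ relative to $\cost(P,A)$; this range is bounded below by the threshold $T=\varepsilon^3\cost(P,A)/k'$. We further bucket by the ratio $\cost(C_j,S)/\cost(C_j,A)$, so that within a group the weights are controlled by Fact~\ref{fact:weight-bound} and property P2.

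Within each group we build nested nets $\mathcal{N}_h$ of $S$-configurations at scales $2^{-h}$, approximating the offset vectors $(v_p(S))_p$. We use the two net-size bounds from \cite{bansal2024sensitivity,cohen2022improved}. The first is $\exp(\tilde O(k\,2^{2h}))$ via terminal embedding to dimension $\tilde O(2^{2h})$; the second is the $\sqrt{k}$-type bound, which yields the $\min(\sqrt{k},\varepsilon^{-2})$ factor. Along the chain, the variance of $\sum_i g_i w_{q_i}(v^{h+1}-v^h)$ is bounded using Fact~\ref{fact:sum-of-independent-gaussians}, together with $w_q\le 4k'\cost(C_j,A)/(m\cost(q,A))$ and $w_q \le 4\cost(P,A)/(m\Delta_j)$. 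The resulting bound is $\bigO{2^{-2h}\,k'\,\cost(P,A)\cost_\Omega(\cdot,S)/m}$. Fact~\ref{fact:variance-max-gaussians} then bounds each level's expected maximum by $\sqrt{\text{variance}\cdot\log|\mathcal{N}_{h+1}|}$.

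Normalizing, each $\cost(P,A)$ factor is replaced by $\alpha\cost(P,S)$, and $k'$ is replaced by $\beta k$. Converting $\cost_\Omega$ back to $\cost$ contributes one further $\max(1,\alpha)$. Summing the $\log(1/\varepsilon)$ levels therefore gives an error of order $\sqrt{\beta k\,\max(1,\alpha^2)\min(\sqrt k,\varepsilon^{-2})\,\mathrm{polylog}/m}$, and this is at most $\varepsilon$ at the stated $m$.

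The main obstacle is the variance bound in the chaining step with the normalization. Bansal et al.\ absorb $\cost(P,A)/\cost(P,S)=O(1)$ and $k'=O(k)$ silently, so we must re-derive carefully that exactly two factors of $\alpha$ appear and no more. The first comes from the weight-versus-sensitivity ratio; the second comes from Lemma~\ref{lemma:appendix-event-E-coreset-cost-lower-bound}, used when replacing $\cost_\Omega$ by $\cost$ inside the square root. Both must end up under the square root, yielding $\alpha^2$ in $m$ rather than a higher power. Similarly, $\beta$ must enter only through the number of clusters $k'$ in the weight bounds and union bounds, and not through the net sizes, which depend on $|S|=k$.
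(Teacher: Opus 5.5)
\textbf{There is a genuine gap at the step that should produce $\min(\sqrt{k},\varepsilon^{-2})$.} The ingredients you list do not yield that factor, and the accounting does not close.

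Normalized, your chaining variance is $O(2^{-2h}\beta k\max(1,\alpha^2)/m)$, and it depends neither on the type of the clusters nor on $S$. Combined with your first net bound $\log|\mathcal{N}_h|=\tilde O(k2^{2h})$, it only gives $m\approx\beta k^2\max(1,\alpha^2)\varepsilon^{-2}$. Your final formula instead tacitly uses a net of log-size about $2^{2h}\min(\sqrt{k},\varepsilon^{-2})$, with no factor $k$.

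The $t$-independent bound $\exp(\tilde O(k2^{2h}))$ is also not available in general. It holds only for center sets with interaction number $N(S)=O(k2^{2h})$, or for $t=O(1)$. For typical points of a type-$t$ cluster, $\mathrm{err}(q,S)\approx 2^{-t/2}\cost(q,S)$, so precision $2^{-h}\mathrm{err}(q,S)$ means relative precision $2^{-h-t/2}$, and the net has log-size $\tilde O(2^t k 2^{2h})$. As a check: if the $t$-free bound held for all $S$, the baseline variance $O(2^{-2h}\max(1,\alpha^2)/m)$ would give $m=\tilde O(k\varepsilon^{-2})$, below the lower bound of \cite{huang2024optimal}. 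No net bound on its own produces the $\sqrt{k}$ branch.

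\textbf{The missing idea} is the interaction-number stratification the paper takes from \cite{bansal2024sensitivity}.
\begin{enumerate}
\item Split $\mathcal{S}$ into classes $\mathcal{S}(r)$ with $N(S)\in[2^r,2^{r+1})$. On each class the net has log-size $O(\min(2^r+k2^{2h},\,2^tk2^{2h})\,\mathrm{polylog})$.
\item Pair this with a variance bound that decays in both $t$ and $N(S)$. This is the only place where the bands and the weight bound $w_q\le 4k'\cost(C_j,A)/(m\cost(q,A))$ are really needed. For $t\ge 6$, Claim 2.29 of \cite{bansal2024sensitivity} gives $\cost(C_j,S)\ge\frac16|C_j|\cost(a_j,S)$, hence $\cost(P,S)=\Omega(k_{B(S)}2^{b+t}T)$. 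Using $2^r\le N(S)\le k\,k_{B(S)}$, this yields $\mathrm{Var}(X^{S,h})=O(2^{-2h-t}k'/(m\,k_{B(S)}))\max(1,\alpha^2)\le O(2^{-2h-t-r}\beta k^2/m)\max(1,\alpha^2)$.
\item The baseline bound $O(2^{-2h}\max(1,\alpha^2)/m)$ carries no $k'$, because it uses $w_q\cost(q,A)\le 4\cost(P,A)/m$ and $w_q\Delta_j\le 4\cost(P,A)/m$.
\item Multiplying variance by log net size gives at most $\min(k2^t,k^2 2^{-t})\,\beta\max(1,\alpha^2)\,\mathrm{polylog}/m$ when $2^r\ge 2k$, and $k\max(1,\alpha^2)\,\mathrm{polylog}/m$ otherwise. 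Balancing at $2^t\approx\sqrt{k}$, with $2^t\le 2\varepsilon^{-2}$, is what gives $k\min(\sqrt{k},\varepsilon^{-2})$.
\end{enumerate}

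\textbf{On where $\beta$ enters.} It enters through $k'/k_{B(S)}$. Contrary to your last sentence, it also enters the net size, through guessing which of the $k'$ clusters form $B(S)$ and through the $(k'+1)^k$ signatures. The paper absorbs this by assuming $\beta=O(\mathrm{polylog}(k\varepsilon^{-1}))$.

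\textbf{Two smaller points on order of operations.}
\begin{itemize}
\item After symmetrization, the center term is $\sum_i g_i w_{q_i}\cost(a_j,S)$, which P1 does not control. The paper bounds it as a maximum of $k'$ Gaussians of variance at most $8k'|C_j|^2/m$. Your deterministic P1 argument is valid only if you peel that term off before symmetrizing.
\item On $\bar{\mathcal{E}}$, after symmetrization, you need a worst-case bound on $\mathbb{E}_g\sup_S|X^S|$; the paper proves $O(\alpha\varepsilon^{-2})$. Lemma~\ref{lemma:appendix-no-event-E-coreset-cost-lower-bound} does not suffice, since it bounds the coreset cost rather than the Gaussian process.
\end{itemize}
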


\noindent Proof of Lemma~\ref{lemma:cost-preservation-high-cost-clusters} is handled in Section~\ref{sec:appendix-high-cost-clusters}.

\subsection{Analysis of Far Clusters}
\label{sec:appendix-far-points}
In the following, we prove Lemma~\ref{lemma:cost-preservation-far-points}, which says that for any set $S$ of $k$ centers, the cost of coreset points from \emph{far clusters} approximates, up to $\pm\ \varepsilon \cdot \cost(P, S)$, the true cost of far clusters. More formally, we show that for $m = \bigOmegatilde{k' \varepsilon^{-2}}$:
\begin{equation*}
	\Expsub{\Omega}{\sup_{S \in \mathcal{S}} \left| \frac{\cost(P_F(S), S) - \cost_{\Omega}(P_F(S), S)}{\cost(P, S)}\right|} \leq \varepsilon / 2,
\end{equation*}
provided that \ourassumptions. Above, $P_F(S)$ is the set of clusters $C_j$ such that $\cost(a_j, S) > \Delta_j \varepsilon^{-2}$.
First, we give a bound on the cost difference for far clusters when coreset $\Omega$ from Alg.~\ref{alg:sensitivity-sampling-oracle-coreset} satisfies event $\mathcal{E}$ (Def.~\ref{def:eventE}).
% -- lemma: cost difference event E far clusters
\begin{lemma} \label{lemma:appendix-far-clusters-event-E}
	If coreset $\Omega$ satisfies event $\mathcal{E}$, then for any set $S$ of centers, any cluster $C_j$ that is (deterministically) far from $S$ satisfies:
	$$
	\left| \cost(C_j, S) - \cost_{\Omega}(C_j, S) \right| = \bigO{\varepsilon \cdot \cost(C_j, S)}.
	$$
\end{lemma}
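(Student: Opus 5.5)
The idea is to compare every point of the far cluster $C_j$ with its centroid $a_j$. Both $\cost(C_j,S)$ and $\cost_\Omega(C_j,S)$ should be close to $\cost(a_j,S)$ times the (true or weighted) cluster size. The quantities $\cost(q,a_j)$ are controlled through the rings $R_j(\ell)$, using properties P1--P3 of event $\mathcal{E}$.

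\textbf{Per-point comparison.} Fix $S$ and a far cluster $C_j$, so $\cost(a_j,S) > \Delta_j\varepsilon^{-2}$. Fact~\ref{fact:triangle-inequalties}~\ref{item:tineq-sqrt} gives, for every $p\in C_j$,
\[
\left|\cost(p,S)-\cost(a_j,S)\right| \le 2\sqrt{\cost(p,a_j)\cost(a_j,S)}+\cost(p,a_j).
\]
Write $D_j := \sum_{q\in C_j\cap\Omega} w_q$. Summing with unit weights and then with the weights $w_q$ yields
\[
\left|\cost(C_j,S)-|C_j|\cost(a_j,S)\right|
\quad\text{and}\quad
\left|\cost_\Omega(C_j,S)-D_j\,\cost(a_j,S)\right|,
\]
each bounded by $2\sqrt{\cost(a_j,S)}\,\Sigma^{1/2}+\Sigma^{1}$. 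Here $\Sigma^{1/2}$ is the unit- or $w$-weighted sum of $\sqrt{\cost(q,a_j)}$, and $\Sigma^{1}$ is the corresponding sum of $\cost(q,a_j)$.

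\textbf{True cluster.} On the true side, $\Sigma^1=\cost(C_j,A)=|C_j|\Delta_j$. By Cauchy--Schwarz, $\Sigma^{1/2}\le|C_j|\sqrt{\Delta_j}$. Hence the error is at most
\[
2|C_j|\sqrt{\Delta_j\cost(a_j,S)}+|C_j|\Delta_j \le (2\varepsilon+\varepsilon^2)\,|C_j|\cost(a_j,S).
\]
Consequently $\cost(C_j,S)=(1\pm 3\varepsilon)|C_j|\cost(a_j,S)$, and in particular $|C_j|\cost(a_j,S)=O(\cost(C_j,S))$.

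\textbf{Coreset side.} P3 gives $\Sigma^1=\cost_\Omega(C_j,A)\le(1+\varepsilon)|C_j|\Delta_j$. For $\Sigma^{1/2}$, I would split by rings. On $R_j(\ell)$ we have $\sqrt{\cost(q,a_j)}<2^{(\ell+1)/2}\sqrt{\Delta_j}$, and P2 bounds the ring's weight by $|C_j|/2^{\ell-1}$. Summing over $\ell\le \ell_{max}$ gives a geometric series $\sum_\ell 2^{(\ell+1)/2}2^{1-\ell}=O(1)$, so those rings contribute $O(|C_j|\sqrt{\Delta_j})$.

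The last ring $R_j(\ell_{max}+1)$ has no upper bound on its distances, so it needs a separate argument. There I would use weighted Cauchy--Schwarz:
\[
\sum w_q\sqrt{\cost(q,a_j)}\le \Bigl(\sum w_q\Bigr)^{1/2}\Bigl(\sum w_q\cost(q,a_j)\Bigr)^{1/2}.
\]
P2 gives $\sum w_q\le |C_j|\cdot O(\varepsilon)$, and P3 gives $\sum w_q\cost(q,a_j)\le 2|C_j|\Delta_j$, so this term is $O(\sqrt{\varepsilon}\,|C_j|\sqrt{\Delta_j})$. Alternatively, one can simply apply Cauchy--Schwarz on all of $C_j\cap\Omega$, using P1 and P3: $\Sigma^{1/2}\le\sqrt{(1+\varepsilon)|C_j|\cdot(1+\varepsilon)|C_j|\Delta_j}=O(|C_j|\sqrt{\Delta_j})$. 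That bypasses the rings entirely, and I would choose it as the cleanest route.

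Either way, the coreset-side error is $O(\varepsilon)|C_j|\cost(a_j,S)$, exactly as for the true side.

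\textbf{Conclusion.} By P1, $D_j=(1\pm\varepsilon)|C_j|$. Chaining the estimates gives
\[
\cost_\Omega(C_j,S)=D_j\cost(a_j,S)\pm O(\varepsilon)|C_j|\cost(a_j,S)=|C_j|\cost(a_j,S)\pm O(\varepsilon)|C_j|\cost(a_j,S).
\]
Comparing with $\cost(C_j,S)=(1\pm 3\varepsilon)|C_j|\cost(a_j,S)$ yields $|\cost(C_j,S)-\cost_\Omega(C_j,S)|=O(\varepsilon)\,|C_j|\cost(a_j,S)=O(\varepsilon\cost(C_j,S))$.

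The main obstacle is bounding the weighted sum of $\sqrt{\cost(q,a_j)}$ on the coreset. The Cauchy--Schwarz argument with P1 and P3 handles it; the rest is routine algebra.
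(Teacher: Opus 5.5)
Your argument is correct, and it takes a genuinely different and simpler route than the paper. Both proofs split the error around $|C_j|\cost(a_j,S)$ into a true-cluster term and a coreset term, and both use the per-point bound from Fact~\ref{fact:triangle-inequalties}~\ref{item:tineq-sqrt}. The difference is how the resulting sum of $\sqrt{\cost(q,a_j)}$ terms is controlled.

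The paper works ring by ring:
\begin{itemize}
\item On each inner ring $R_j(\ell)$, $\ell\le\ell_{max}$, it uses the per-point bound $O(2^{\ell/2}\varepsilon\cost(a_j,S))$. It multiplies this by the ring's size, which is $|R_j(\ell)|\le|C_j|/2^{\ell-1}$ by Markov on the true side and the bound from property P2 on the coreset side. The result is a geometric series.
\item The outer ring $R_j(\ell_{max}+1)$ is handled separately. It uses the crude bound $\cost(p,S)\le 2\cost(p,a_j)+2\cost(a_j,S)$, the fact that this ring has (weighted) size $O(\varepsilon|C_j|)$, and the far condition $\cost(C_j,A)<\varepsilon^2|C_j|\cost(a_j,S)$. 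On the coreset side it also uses P3.
\end{itemize}

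Your single weighted Cauchy--Schwarz step, $\sum_q w_q\sqrt{\cost(q,a_j)}\le(\sum_q w_q)^{1/2}(\sum_q w_q\cost(q,a_j))^{1/2}$, replaces all of this. It needs only P1 and P3, via $\cost(q,A)=\cost(q,a_j)$ for $q\in C_j$. It treats the true and coreset sides uniformly and gives explicit constants. A side benefit: as far as I can see, P2 is used nowhere else in the paper, so your route would make the ring-size-preservation lemma unnecessary altogether. The ring decomposition follows the template of Bansal et al.\ and gives per-scale information, but nothing in this lemma needs it.

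One remark applies to both proofs. Turning $O(\varepsilon)|C_j|\cost(a_j,S)$ into $O(\varepsilon\cost(C_j,S))$ needs $\varepsilon$ below a fixed constant. In your proof this enters through $\cost(C_j,S)\ge(1-3\varepsilon)|C_j|\cost(a_j,S)$; the paper does the analogous absorption of Term~1 into itself. This is the usual convention, not a gap.
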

\begin{proof}
	% Remember that set $S$ contains $k$ centers, while clusters $C_j$ are $k' \in [k, \beta k]$ partitions induced by the (predicted) set $A$ of centers in Alg.~\ref{alg:sensitivity-sampling-oracle-coreset}.
	By adding and subtracting $|C_j| \cdot \cost(a_j, S)$ and by applying the triangle inequality, we have:
	\begin{align*}
		& \left| \cost(C_j, S) - \cost_{\Omega}(C_j, S) \right|  \nonumber \\
		% & = \left| \cost(C_j, S) - |C_j| \cdot \cost(a_j, S) + \cost_{\Omega}(C_j, S) + |C_j| \cdot \cost(a_j, S)  \right|  \\
		& \leq \underbrace{\left| \cost(C_j, S) - |C_j| \cdot \cost(a_j, S)  \right|}_\text{Term 1} + \underbrace{\left| \cost_{\Omega}(C_j, S) - |C_j| \cdot \cost(a_j, S)  \right|}_\text{Term 2},
	\end{align*}
	and the claim follows if both Term 1 and Term 2 are $ \bigO{\varepsilon \cdot \cost(C_j, S)}.$
	
	\vspace{\baselineskip}
	\noindent\textit{Bounding Term 1.} Given that rings partition all the points in each cluster $C_j$:
	\allowdisplaybreaks
	\begin{align} \label{eq:ring-partition-cost-bound-event-E}
		& \left| \cost(C_j, S) - |C_j| \cdot \cost(a_j, S)  \right| \leq \sum_{\ell = 0}^{\ell_{max} + 1} \left| \cost(R_j(\ell), S) - |R_j(\ell)| \cdot \cost(a_j, S)  \right| \nonumber \\
		&\leq \sum_{\ell = 0}^{\ell_{max} + 1} \sum_{p \in R_j({\ell})} \left| \cost(p, S) - \cost(a_j, S)  \right|
	\end{align}
	Remember that for each ring it holds $| R_j(\ell) | \leq | C_j | / 2^{\ell - 1}$. Consider $\ell = \ell_{max} + 1 = \lceil \log_2 (1 / \varepsilon) \rceil$. Then:
	\allowdisplaybreaks
	\begin{align} \label{eq:ring-partition-lmax}
		& \sum_{p \in R_j(\ell_{max} + 1)} \left| \cost(p, S) - \cost(a_j, S)\right| \nonumber \\
		& \overset{(i)}{\leq} \sum_{p \in R_j(\ell_{max} + 1)} \left| 2\cost(p, a_j) + 2\cost(a_j, S) - \cost(a_j, S)\right| \nonumber \\
		& = \sum_{p \in R_j(\ell_{max} + 1)} \left| 2\cost(p, a_j) + \cost(a_j, S)\right| \nonumber \\ 
		& \overset{(ii)}{\leq} 2  \sum_{p \in R_j(\ell_{max} + 1)} \cost(p, a_j) + \sum_{p \in R_j(\ell_{max} + 1)} \cost(a_j, S) \nonumber \\
		& \overset{(iii)}{\leq} 2  \sum_{p \in R_j(\ell_{max} + 1)} \cost(p, a_j) + \varepsilon |C_j| \cost(a_j, S) \nonumber \\
		& \overset{(iv)}{\leq} 2  \cost(C_j, a_j) + \varepsilon |C_j| \cost(a_j, S) \nonumber \\
		& \overset{(v)}{\leq} 2  \cost(C_j, a_j) + 2 \varepsilon \left( \cost(C_j, a_j) + \cost(C_j, S) \right) \nonumber \\
		& \overset{(vi)}{<} 2\varepsilon^2 |C_j| \cost(a_j, S) + 2 \varepsilon \left( \varepsilon^2 |C_j| \cost(a_j, S) + \cost(C_j, S)\right) \nonumber \\
		& = \bigO{\varepsilon \cdot \cost(C_j, S)},
	\end{align}
	where $(i), (ii)$ comes from triangle inequalities, $(iii)$ since we have $\left| R_j(\ell_{max} + 1) \right| \leq \varepsilon | C_j |$, $(iv)$ since $R_j(\ell_{max} + 1) \subseteq C_j$, $(v)$ from:
	\begin{align*}
		& |C_j| \cost(a_j, S) = \sum_{q \in C_j} \cost(a_j, S) \\
		& \leq 2 \sum_{q \in C_j} \left( \cost(q, a_j) + \cost(q, S) \right) = 2 \left( \cost(C_j, a_j) + \cost(C_j, S) \right),
	\end{align*}
	and $(vi)$ by the def. of \emph{far clusters}, $\cost(C_j, a_j) = \cost(C_j, A) < \varepsilon^2 |C_j| \cost(a_j, S) $.
	
	Now, consider $\ell \in [0, \ell_{max}]$. We can write:
	\begin{align}\label{eq:cost-p-S-cost-a_j-S-bound}
		& |\cost(p, S) - \cost(a_j, S)| \overset{(i)}{\leq} 2 \sqrt{\cost(p, a_j) \cost(a_j, S)} + \cost(p, a_j) \nonumber \\
		& \overset{(ii)}{<} 2 \sqrt{2^{\ell + 1} \varepsilon^2 \cost(a_j, S)^2} + 2^{\ell + 1} \varepsilon^2 \cost(a_j, S) \leq \bigO{2^{\ell / 2} \varepsilon \cdot \cost(a_j, S)},
	\end{align}
	where $(i)$ comes from Fact~\ref{fact:triangle-inequalties}-\ref{item:tineq-sqrt}, $(ii)$ since, by definition of rings for \emph{far clusters}, $\cost(p, a_j) \leq 2^{\ell + 1} \Delta_j < 2^{\ell + 1} \varepsilon^2 \cost(a_j, S).$ We have:
	\begin{align}  \label{eq:ring-partition-l}
		& \sum_{\ell = 0}^{\ell_{max}} \sum_{p \in R_j(\ell)} \left| \cost(p, S) - \cost(a_j, S) \right| \leq \sum_{\ell = 0}^{\ell_{max}} \sum_{p \in R_j(\ell)} \bigO{2^{\ell / 2} \varepsilon \cdot \cost(a_j, S)} \nonumber \\
		& \overset{(i)}{\leq} \sum_{\ell = 0}^{\ell_{max}} \frac{|C_j|}{2^{\ell - 1}} \ \bigO{2^{\ell / 2} \varepsilon \cdot \cost(a_j, S)} \overset{(ii)}{=} |C_j|  \ \bigO{ \varepsilon \cdot \cost(a_j, S)},
	\end{align}
	where $(i)$ comes from $\left| R_j(\ell) \right| \leq | C_j | / 2^{\ell - 1}$, and $(ii)$ follows since $\sum_{\ell = 0}^{\ell_{max}} 2^{-\ell/2} = \bigO{1}$.
	Gathering~\ref{eq:ring-partition-lmax} and~\ref{eq:ring-partition-l}, we get:
	\begin{align}
		& \text{Term 1} = \left| \cost(C_j, S) - |C_j| \cdot \cost(a_j, S)  \right| \nonumber \\
		& \leq \bigO{\varepsilon \cdot \cost(C_j, S) + |C_j| \varepsilon \cdot \cost(a_j, S) } \nonumber \\
		&\overset{(i)}{\leq} \bigO{\varepsilon \cdot \cost(C_j, S) +  \varepsilon \left( \cost(C_j, S) +  \underbrace{\left| \cost(C_j, S) - |C_j| \cdot \cost(a_j, S)  \right|}_\text{Term 1}  \right) } \nonumber \\
		& = \bigO{\varepsilon \cdot \cost(C_j, S)},
	\end{align}
	where $(i)$ holds since:
	\begin{align} \label{eq:cost-aj-S-bound}
		& |C_j| \cost(a_j, S) = \cost(C_j, S) - \cost(C_j, S) + |C_j| \cost(a_j, S) \nonumber \\
		& \leq \cost(C_j, S) + \left| \cost(C_j, S) - |C_j| \cost(a_j, S) \right|
	\end{align}
	
	\vspace{\baselineskip}
	\noindent\textit{Bounding Term 2.} 
	Let $W_\ell \coloneq \sum_{q \in R_j(\ell) \cap \Omega} w_q$, and $W \coloneq \sum_{\ell = 0}^{\ell_{max} + 1} W_\ell.$ 
	We have $\cost_{\Omega}(C_j, S) = \sum_{\ell = 0}^{\ell_{max} + 1} \cost_{\Omega}(R_j(\ell), S)$. 
	From Term 2:
	\begin{align}\label{eq:term-2-event-E-first-part}
		& \left| \cost_{\Omega}(C_j, S) - |C_j| \cdot \cost(a_j, S)  \right| \nonumber \\
		% &= \left| \cost_{\Omega}(C_j, S) - W \cdot \cost(a_j, S) - |C_j| \cdot \cost(a_j, S)  + W \cdot \cost(a_j, S) \right| \nonumber \\
		& \leq \left| \cost_{\Omega}(C_j, S) - W \cdot \cost(a_j, S) \right| + \left|  W \cdot \cost(a_j, S) - |C_j| \cdot \cost(a_j, S)  \right| \nonumber \\
		& = \left| \cost_{\Omega}(C_j, S) - W \cdot \cost(a_j, S) \right| + \left|  (W - |C_j|) \cdot \cost(a_j, S) \right| \nonumber \\
		& \overset{(i)}{\leq} \left| \cost_{\Omega}(C_j, S) - W \cdot \cost(a_j, S) \right| + \varepsilon |C_j| \cdot \cost(a_j, S)  \nonumber \\
		& \overset{(ii)}{\leq} \left| \cost_{\Omega}(C_j, S) - W \cdot \cost(a_j, S) \right| + \nonumber \\
		& + \varepsilon \left( \underbrace{\cost(C_j, S) + \left| \cost(C_j, S) - |C_j| \cost(a_j, S) \right| }_\text{Term 1} \right) \nonumber \\
		%& \leq \sum_{\ell = 0}^{\ell_{max} + 1} \left| \cost_{\Omega}(R_j(\ell), S) - W_\ell \cdot \cost(a_j, S) \right| + \bigO{\varepsilon^2 \cdot \cost(C_j, S)} \nonumber \\
		& {\le} \sum_{\ell = 0}^{\ell_{max} + 1} \sum_{p \in R_j(\ell) \cap \Omega} \left| w_p \cdot \left( \cost(p, S) - \cost(a_j, S) \right) \right| + \bigO{\varepsilon^2 \cdot \cost(C_j, S)},
	\end{align}
	where $(i)$ uses that if event $\mathcal{E}$ holds, then $\left| W - |C_j| \right| \leq \varepsilon \cdot |C_j|$, and $(ii)$ by Eq.~\ref{eq:cost-aj-S-bound}. We now bound the first term of~\ref{eq:term-2-event-E-first-part}. For $\ell = \ell_{max} + 1 \coloneq \lceil \log_2(1 / \varepsilon) \rceil$:
	\allowdisplaybreaks
	\begin{align}\label{eq:term-2-far-points-lmax}
		& \sum_{p \in R_j(\ell_{max} + 1) \cap \Omega} \left| w_p \left(\cost(p, S) - \cost(a_j, S) \right) \right| \nonumber \\
		& \leq \sum_{p \in R_j(\ell_{max} + 1) \cap \Omega} \left( \left| w_p \cost(p, S) \right| + \left| w_p \cost(a_j, S) \right| \right) \nonumber \\
		& \overset{(i)}{\leq} \sum_{p \in R_j(\ell_{max} + 1) \cap \Omega} \left( 2w_p \cost(p, a_j) + 3 w_p \cost(a_j, S) \right) \nonumber\\
		& \overset{(ii)}{\leq} 2 \cost_{\Omega}(C_j, A) + \sum_{p \in R_j(\ell_{max} + 1) \cap \Omega} 3 w_p \cost(a_j, S) \nonumber \\
		& \overset{(iii)}{\leq}  2 \cost_{\Omega}(C_j, A) + 3 \varepsilon |C_j| \cost(a_j, S) \overset{(iv)}{\leq} 2 \cost_{\Omega}(C_j, A) + \bigO{\varepsilon \cdot \cost(C_j, S)} \nonumber \\
		& \overset{(v)}{=} \bigO{\cost(C_j, A)} + \bigO{\varepsilon \cdot \cost(C_j, S)} \nonumber \\
		& \overset{(vi)}{<}  \bigO{\varepsilon^2 \cdot |C_j| \cost(a_j, S)} + \bigO{\varepsilon \cdot \cost(C_j, S)} = \bigO{\varepsilon \cdot \cost(C_j, S)},
	\end{align}
	where $(i)$ is from Fact~\ref{fact:triangle-inequalties}-\ref{item:tineq-basic}, $(ii)$ since $R_j(\ell_{max} + 1) \subseteq C_j$, $(iii)$ by property $P_2$ of event $\mathcal{E}$, $(iv)$ from Eq.~\ref{eq:cost-aj-S-bound}, $(v)$ from property $P_3$ of event $\mathcal{E}$, and $(vi)$ since for far clusters it holds $\cost(C_j, A) < \varepsilon^2 |C_j| \cost(a_j, S)$. 
	For $\ell \in [0, \ell_{max}]$, we have:
	\begin{align}\label{eq:term-2-far-points-l}
		& \sum_{\ell = 0}^{\ell_{max}} \sum_{p \in R_j(\ell) \cap \Omega} \left| w_p \left(\cost(p, S) - \cost(a_j, S) \right) \right| \nonumber \\
		& \overset{(i)}{=} \sum_{\ell = 0}^{\ell_{max}}  \sum_{p \in R_j(\ell) \cap \Omega} w_p\ \bigO{2^{\ell / 2} \varepsilon \cdot \cost(a_j, S)} \overset{(ii)}{\leq}  \sum_{\ell = 0}^{\ell_{max}} |C_j|/ 2^{\ell - 1} \ \bigO{2^{\ell / 2} \varepsilon \cdot \cost(a_j, S)} \nonumber \\ 
		& \overset{(iii)}{\leq} \sum_{\ell = 0}^{\ell_{max}} 2^{- \ell/2} \ \bigO{\varepsilon \cdot \cost(C_j, S)} = \bigO{\varepsilon \cdot \cost(C_j, S)},
	\end{align}
	where $(i)$ follows from~\ref{eq:cost-p-S-cost-a_j-S-bound}, $(ii)$ by property $P_2$ of event $\mathcal{E}$, $(iii)$ from~\ref{eq:cost-aj-S-bound}. Summing up~\ref{eq:term-2-far-points-lmax} and~\ref{eq:term-2-far-points-l}, we obtain that Term 2 is $\bigO{\varepsilon \cdot  \cost(C_j, S)}$ as well, which concludes the proof for Lemma~\ref{lemma:appendix-far-clusters-event-E}.
\end{proof}

We are now ready to prove Lemma~\ref{lemma:cost-preservation-far-points}, that we restate for completeness.
\allowdisplaybreaks
\begin{manualtheorem}{Lemma~\ref{lemma:cost-preservation-far-points}}[\textbf{Cost Preservation of Far Clusters}] 
	Consider coreset $\Omega$ from Alg.~\ref{alg:sensitivity-sampling-oracle-coreset}, given the set $A$ of (predicted) centers such that $\AssignmentBound$ and $|A| = k' \leq \beta \cdot k$, with \ourassumptions. 
	If the coreset size is $\bigOmegatilde{k \varepsilon^{-2}}$, then:
	\begin{equation*}
		\Expsub{\Omega}{\sup_{S \in \mathcal{S}} \left| \frac{\cost(P_F(S), S) - \cost_{\Omega}(P_F(S), S)}{\cost(P, S)}\right|} \leq \varepsilon / 2.
	\end{equation*}
\end{manualtheorem}
\begin{proof}
	Let $ \Psi(S) \coloneq \left| \frac{\cost(P_F(S), S) - \cost_{\Omega}(P_F(S), S)}{\cost(P, S)} \right| $. When coreset, of size $\bigOmegatilde{k' \varepsilon^{-2}}$ satisfies event $\mathcal{E}$ (with probability at least $1 -\NotEventEInline$), we have that, for each set $S$ of centers, it holds (deterministically) $|\cost(C_j, S) - \cost_{\Omega}(C_j, S)| \leq \varepsilon \cdot \cost(C_j, S)$ for all far clusters $C_j$'s (by Lemma~\ref{lemma:appendix-far-clusters-event-E}). 
	%Hence, by rescaling $\varepsilon$, we have that $\Expsub{\Omega}{\sup_{S \in \mathcal{S}} \Psi(S) \ | \ \mathcal{E}} \leq \varepsilon / 2$. 
	When $\mathcal{E}$ does not hold (with probability less than $\NotEventEInline$), we have, for any set $S$ of centers, $\cost_\Omega(P, S) = \bigO{\max(1, \alpha)} \cdot k' \cdot \cost(P, S)$ by Lemma~\ref{lemma:appendix-no-event-E-coreset-cost-lower-bound}. Then: $\Psi(S)\leq \bigO{\max(1, \alpha)} \cdot k' $. By law of total expectation, we can write $\Expsub{\Omega}{\sup_{S \in \mathcal{S}} \Psi(S)}$ as: 
	\begin{align}
		& = \Expsub{\Omega}{\sup_{S \in \mathcal{S}} \Psi(S) \ \bigg\rvert \ \mathcal{E} } \cdot \Prob{\mathcal{E} } + \Expsub{\Omega}{\sup_{S \in \mathcal{S}} \Psi(S) \ \bigg\rvert \ \bar{\mathcal{E}} } \cdot \Prob{ \bar{\mathcal{E}} } \nonumber \\
		& \leq \bigO{\varepsilon} \cdot (1 - \NotEventEInline) + \bigO{\max(1, \alpha)} \cdot k' \cdot \NotEventEInline,
	\end{align}
	that is $\leq \bigO{\varepsilon}$ for \ourassumptions, concluding the proof of Lemma~\ref{lemma:cost-preservation-far-points} by properly rescaling $\varepsilon$.
\end{proof}

% -- low-cost (close) clusters
\subsection{Analysis of Low-Cost Clusters}
\label{sec:appendix-low-cost-clusters}
Before proving cost preservation for low-cost (close) clusters (stated in Lemma~\ref{lemma:cost-preservation-low-cost-clusters}), we introduce the following results:
\begin{lemma} \label{lemma:claim_low-cost-clusters}
	Fix coreset $\Omega$ and set $S$ of $k$ centers. For any low-cost cluster $C_j \in J_L(S)$:
	\begin{enumerate} [label = (\roman*)]
		\item $\cost(C_j, S) = \varepsilon / k' \cdot \bigO{\cost(P, A)}$; \label{item:i-claim-low-clusters}
		\item If event $\mathcal{E}$ does not hold, $\cost_\Omega(C_j, S) = \bigO{\cost(P, A)}$; \label{item:ii-claim-low-clusters}
		\item If event $\mathcal{E}$ holds, $\cost_\Omega(C_j, S) = \varepsilon / k' \cdot \bigO{\cost(P, A)}$; \label{item:iii-claim-low-clusters}
	\end{enumerate}
\end{lemma}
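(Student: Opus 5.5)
All three items rest on the two defining properties of a low-cost close cluster: it is close, so $\cost(a_j, S) \le \Delta_j \varepsilon^{-2}$, and it is low-cost, so $\cost(C_j, A) < T = \varepsilon^3 \cost(P, A)/k'$. The plan is to combine these through the triangle inequality (Fact~\ref{fact:triangle-inequalties}\ref{item:tineq-basic}), which bounds the cost of each point $p\in C_j$ by $\cost(p,S) \le 2\cost(p,a_j) + 2\cost(a_j,S)$. The only real care needed is to bound everything by $\cost(P,A)$ and never by $\cost(P,S)$.

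\emph{Item \ref{item:i-claim-low-clusters}.} Summing the triangle inequality over $C_j$ gives
$\cost(C_j,S) \le 2\cost(C_j,A) + 2|C_j|\cost(a_j,S)$.
Closeness gives $|C_j|\cost(a_j,S) \le |C_j|\Delta_j\varepsilon^{-2} = \varepsilon^{-2}\cost(C_j,A)$. Hence $\cost(C_j,S) \le (2+2\varepsilon^{-2})\cost(C_j,A) = \bigO{\varepsilon^{-2}} \cdot T = \bigO{\varepsilon \cost(P,A)/k'}$, using the low-cost bound on $\cost(C_j,A)$.

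\emph{Item \ref{item:iii-claim-low-clusters}.} Apply the same pointwise inequality to the weighted sum:
$\cost_\Omega(C_j,S) \le 2\cost_\Omega(C_j,A) + 2\cost(a_j,S)\sum_{q\in C_j\cap\Omega} w_q$.
Property $P_3$ bounds the first term by $2(1+\varepsilon)\cost(C_j,A)$. Property $P_1$ bounds the weight sum by $(1+\varepsilon)|C_j|$. The computation then reduces to the one in item \ref{item:i-claim-low-clusters}, up to a factor $(1+\varepsilon)\le 2$.

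\emph{Item \ref{item:ii-claim-low-clusters}.} Without $\mathcal{E}$ the plan is to use only Fact~\ref{fact:weight-bound}, as in Lemma~\ref{lemma:appendix-no-event-E-coreset-cost-lower-bound}. For each $q \in C_j\cap\Omega$:
\begin{itemize}
\item $w_q\cost(q,a_j) \le 4\cost(P,A)/m$, from the bound $w_q \le 4\cost(P,A)/(m\cost(q,A))$;
\item $w_q\cost(a_j,S) \le w_q\Delta_j\varepsilon^{-2} \le 4\varepsilon^{-2}\cost(P,A)/m$, from closeness together with $w_q \le 4\cost(P,A)/(m\Delta_j)$.
\end{itemize}
Summing over at most $m$ sampled points (counted with multiplicity) gives $\cost_\Omega(C_j,S) \le 8\cost(P,A) + 8\varepsilon^{-2}\cost(P,A)$.

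This bound carries an $\varepsilon^{-2}$ factor, and removing it is the expected obstacle. The first thing to try is the bound $w_q \le 4k'|C_j|/m$ on the second term, which gives $w_q\cost(a_j,S) \le 4k'\varepsilon^{-2}\cost(C_j,A)/m < 4\varepsilon\cost(P,A)/m$ because $C_j$ is low-cost. Summing over at most $m$ points makes this contribution $\bigO{\varepsilon\cost(P,A)}$, so $\cost_\Omega(C_j,S) \le 8\cost(P,A) + \bigO{\varepsilon \cost(P,A)} = \bigO{\cost(P,A)}$, as claimed.
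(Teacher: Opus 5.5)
Your proposal is correct and follows the paper's proof essentially line by line: the triangle inequality combined with closeness and the low-cost threshold for (i), properties $P_1$ and $P_3$ for (iii), and for (ii) the weight bounds $w_q \le 4\cost(P,A)/(m\cost(q,A))$ and $w_q \le 4k'|C_j|/m$ from Fact~\ref{fact:weight-bound}. That last choice is exactly the paper's, and it yields $8\cost(P,A) + 8\varepsilon\cost(P,A)$; your first attempt in (ii), which produced an $\varepsilon^{-2}$ factor, is unnecessary but harmless since you discard it.
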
 
\begin{proof}
	Consider any low-cost cluster $C_j$, induced by any set $S$ of $k$ centers. For any $j \in [k]$:
	
	\noindent \textit{Item~\ref{item:i-claim-low-clusters}.} We have:
	\begin{align}
		& \cost(C_j, S) = \sum_{p \in C_j} \cost(p, S) \overset{(i)}{\leq} 2 \sum_{p \in C_j} \left( \cost(p, a_j) + \cost(a_j, S) \right) \nonumber \\
		& \overset{(ii)}{\leq} 2 \sum_{p \in C_j} \left( \cost(p, a_j) +\Delta_j \varepsilon^{-2} \right) \nonumber \\
		& = 2 \left( \cost(C_j, A) + \varepsilon^{-2} \cost(C_j, A) \right) \leq 4\varepsilon^{-2} \cost(C_j, A) \overset{(iii)}{\leq} \frac{4\varepsilon}{k'} \cost(P, A),
	\end{align}
	where $(i)$ follows from Fact~\ref{fact:triangle-inequalties}-\ref{item:tineq-basic}, $(ii)$ since for close clusters $\cost(a_j, S) \leq \Delta_j \varepsilon^{-2}$, and $(iii)$ from low-cost clusters $\cost(C_j, A) < \varepsilon^3 / k' \; \cost(P, A)$.
	
	\noindent \textit{Item~\ref{item:ii-claim-low-clusters}.} Similarly, 
	\allowdisplaybreaks
	\begin{align}
		& \cost_\Omega(C_j, S) = \sum_{p \in C_j \cap \Omega} w_p \cost(p, S) \nonumber \\
		&\leq 2 \sum_{p \in C_j  \cap \Omega} \left( w_p \cost(p, a_j) \right) + 2 \cost(a_j, S)  \sum_{p \in C_j  \cap \Omega} w_p \nonumber \\
		& \overset{(i)}{\leq} 2 \sum_{p \in C_j  \cap \Omega} \frac{4 \cost(P, A)}{m }  +  2 \cost(a_j, S)  \sum_{p \in C_j \cap \Omega} \frac{4k' |C_j|}{m} \nonumber \\ 
		& \overset{(ii)}{\leq}  8 \cost(P, A)+  8k' \cost(a_j, S) |C_j| \nonumber \overset{(iii)}{\leq} 8 \cost(P, A) + 8k' \varepsilon^{-2} \cost(C_j, A) \nonumber \\
		& \overset{(iv)}{\leq} 8 \cost(P, A) + 8 \varepsilon \cost(P, A) = \bigO{\cost(P, A)},
	\end{align}
	where $(i)$ follows from Fact~\ref{fact:weight-bound}, $(ii)$ since $|C_j  \cap \Omega| \leq |\Omega| = m$, $(iii)$ from definition of close clusters, $(iv)$ from definition of low-cost clusters. 
	
	\noindent \textit{Item~\ref{item:iii-claim-low-clusters}.} When $\mathcal{E}$ occurs, we can tighten bounds: 
	\allowdisplaybreaks
	\begin{align}
		& \cost_\Omega(C_j, S) = \sum_{p \in C_j \cap \Omega} w_p \cost(p, S) \nonumber \\
		& \leq 2 \sum_{p \in C_j  \cap \Omega} \left( w_p \cost(p, a_j) \right) + 2 \cost(a_j, S)  \sum_{p \in C_j  \cap \Omega} w_p \nonumber \\
		& \overset{(i)}{\leq} 2 (1 + \varepsilon) \cost(C_j, A) + 2 \cost(a_j, S) (1 + \varepsilon) |C_j|\nonumber \\
		& \overset{(ii)}{\leq} 2 (1 + \varepsilon) \cost(C_j, A) + 2 \Delta_j \varepsilon^{-2} (1 + \varepsilon) |C_j| \overset{(iii)}{\le} \varepsilon / k' \cdot \bigO{\cost(P, A)},
	\end{align}
	where $(i)$ follows from $P_3$, for which $\sum_{p \in C_j \cap \Omega} w_p \cost(p, A) = \cost_{\Omega}(C_j, A) \leq (1 + \varepsilon) \cost(C_j, A)$ and $P_1$, for which $\sum_{p \in C_j \cap \Omega} w_p \leq (1 + \varepsilon) |C_j|$, $(ii)$ by definition of close clusters, and $(iii)$ by definition of low-cost clusters. 
\end{proof}

Now we can prove Lemma~\ref{lemma:cost-preservation-low-cost-clusters}, which we restate for completeness.
% -- lemma: cost preservation of low-cost close clusters
\begin{manualtheorem}{Lemma~\ref{lemma:cost-preservation-low-cost-clusters}}[\textbf{Cost Preservation of Low-Cost Clusters}]
	Consider coreset $\Omega$ from Alg.~\ref{alg:sensitivity-sampling-oracle-coreset}, given the set $A$ of (predicted) centers such that $\AssignmentBound$ and $|A| = k' \leq \beta \cdot k$, with \ourassumptions. 
	If the coreset size is $\bigOmegatilde{k \varepsilon^{-2}}$, then:
	\begin{equation*}
		\Expsub{\Omega}{\sup_{S \in \mathcal{S}} \left| \frac{\cost(J_L(S), S) - \cost_{\Omega}(J_L(S), S)}{\cost(P, S)}\right|} \leq \varepsilon,
	\end{equation*}
\end{manualtheorem}
\begin{proof}
	Let:
	\begin{align*}
		& \Psi(\Omega, S) = \left| \cost(J_L(S), S) - \cost_\Omega(J_L(S), S)) \right| \nonumber \\
		& \leq \sum_{C_j \in J_L(S)} \left| \cost(C_j, S) - \cost_\Omega(C_j, S) \right| \leq \sum_{C_j \in J_L(S)} \left( \cost(C_j, S) + \cost_\Omega(C_j, S) \right) 
	\end{align*}
	If event $\mathcal{E}$ holds, we have $\Psi(\Omega, S) \leq k' \cdot \varepsilon / k' \cdot \bigO{ \cost(P, A)} \leq \varepsilon \cdot \bigO{ \alpha \cdot \cost(P, S)}$ by Lemma~\ref{lemma:claim_low-cost-clusters}. If event $\mathcal{E}$ does not hold, then $\Psi(\Omega, S) \leq k' \cdot \bigO{\alpha \cdot \cost(P, S)}$ (for \emph{any} set of centers). By LTE, we can write $\Expsub{\Omega}{\sup_{S \in \mathcal{S}}  \frac{\Psi(\Omega, S)}{\cost(P, S)} }$ as:
	\begin{align}
		& \Expsub{\Omega}{\sup_{S \in \mathcal{S}}  \frac{\Psi(\Omega, S)}{\cost(P, S)} \ \bigg\rvert \ \mathcal{E}} \Prob{\mathcal{E} } + \Expsub{\Omega}{\sup_{S \in \mathcal{S}}  \frac{\Psi(\Omega, S)}{\cost(P, S)} \ \bigg\rvert \ \bar{\mathcal{E}} } \cdot  \Prob{\bar{\mathcal{E}} } \nonumber \\
		& \leq \bigO{\alpha \cdot \varepsilon} \cdot \left( 1 - \NotEventE \right) + \bigO{\alpha \cdot k'} \cdot \NotEventE = \bigO{\varepsilon'};
	\end{align}
	Lemma~\ref{lemma:cost-preservation-low-cost-clusters} follows by rescaling $\varepsilon = \varepsilon' / \alpha$.
\end{proof}

% -- low-cost (close) clusters
\subsection{Analysis of High-Cost Clusters}
\label{sec:appendix-high-cost-clusters}
The following section is the most technical part of the entire proof, and is entirely dedicated to proving Lemma~\ref{lemma:cost-preservation-high-cost-clusters}. %We first partition again high-cost (close) clusters.

\subsection{Partitioning High-Cost Clusters: Bands and Types}
High-cost clusters are further partitioned such that any two different clusters $C_i, C_j$ in the same partition satisfy $\cost(C_i, A) \approx \cost(C_j, A)$, and $\cost(C_i, S) \approx \cost(C_j, S)$, given the set $A$ of $k'$ (predicted) centers, and a fixed set $S$ of $k$ centers. In this way, we can finely characterize contributions of estimated costs.
To this end, we first introduce \emph{bands} as partitions of clusters with similar cost induced by $A$, and \emph{types} as partitions of clusters with similar cost induced by $S$.

% -- definition: bands
\begin{definition} [\textbf{Bands}] \label{def:bands}
	For each integer $b \in [0, b_{max}]$, $b_{max} \coloneq \lfloor \log_2(k' \varepsilon^{-3}) \rfloor$, we denote as \emph{Band-$b$} the set of clusters $C_j$ with $\cost(C_j, A) \in \left[ 2^b T, 2^{b + 1} T \right)$, where $T \coloneq \varepsilon^3 / k' \ \cost(P, A)$.
\end{definition}
\noindent Since any high-cost cluster $C_j$ has $\cost(C_j, A) \in [T, \cost(P, A)]$, the $(b_{max} + 1)$ bands defined above form a partition of high-cost (close) clusters. 
While clusters in the same band have similar cost induced by $A$, their cost with respect to an arbitrary set $S$ of centers can be very different. %Next, we group clusters based on their cost induced by $S$.
% -- definition: types
\begin{definition} [\textbf{Types}] \label{def:types}
	Let $S$ be a set of $k$ centers. For each integer $t \in [1, t_{max}]$, where $t_{max} \coloneq \lceil \log_2( \varepsilon^{-2}) \rceil$, we denote as \emph{Type-$t$} the set of \emph{close} clusters $C_j$ with $\cost(a_j, S) \in \left[ 2^{t-1} \Delta_j, 2^{t} \Delta_j \right)$, and as \emph{Type-$0$} the set of \emph{close} clusters $C_j$ with $\cost(a_j, S) < \Delta_j$, where $\Delta_j = \cost(C_j, A) / |C_j|$.
\end{definition}
\noindent Since any close cluster $C_j$ satisfies $\cost(a_j, S) \leq \Delta_j \varepsilon^{-2}$, the $(t_{max} + 1)$ types defined above form a partition of the close clusters, inducing a partition of high-cost (close) clusters as well.

For a fixed set $S$ of $k$ centers, let $B_{b, t}(S)$ be the clusters from Band-$b$ and Type-$t$. In total, we can have $(b_{max} + 1) \times (t_{max} + 1) = \bigO{\log^2 (k' / \varepsilon) }$ possible pairs of sets $B_{b, t}(S)$. As such sets $B_{b, t}(S)$ partition points of clusters in $J_H(S)$, Lemma~\ref{lemma:cost-preservation-high-cost-clusters} follows if, for each pair $(b, t) \in [b_{max}] \times [t_{max}]$, the following holds:
\begin{equation} \label{eq:cost-preservation-bands-types}
	\Expsub{\Omega}{\sup_{S \in \mathcal{S}} \left| \frac{\cost(B_{b, t}(S), S) - \cost_{\Omega}(B_{b, t}(S), S)}{\cost(P, S)}\right|} \leq \bigO{ \frac{\varepsilon}{\log^2(k'  \varepsilon^{-1})}};
\end{equation}
then, main result follows from a union bound over all the possible pairs. 
Thus, from now on, we fix band $b$ and type $t$, and we aim to prove Eq.~\ref{eq:cost-preservation-bands-types}.  Also, when clear from the context, we use $B(S)$ to indicate $B_{b, t}(S)$. We denote as $k_{B(S)}$ the number of \emph{clusters} in $B(S)$ (we trivially have $k_{B(S)} \leq k' \leq \beta \cdot k$).

\subsubsection{Defining Cost Vectors and Nets.}
In order to (approximately) discretize the infinitely many centers in $\mathcal{S}$, we proceed to define nets. Instead of discretizing the whole space of centers, we focus on the set of \emph{cost vectors} induced by a given set $S \in \mathcal{S}$.
\begin{definition}[\textbf{Cost Vectors}] \label{def:cost-vectors}
	Let $\Omega = \{ q_1, ..., q_m \}$ be the coreset produced by Alg.~\ref{alg:sensitivity-sampling-oracle-coreset}, and $S$ be a set of centers in $\mathcal{S}$. The cost vector of $\Omega$ induced by $S$ is the vector $\CostVector(\Omega) = \{ \CostVectorEntry{1}(\Omega), ..., \CostVectorEntry{m}(\Omega)\}$, where for each $i \in [m]$:
	\begin{equation*}
		\CostVectorEntry{i}(\Omega)= \begin{cases}
			\cost(q_i, S) & \text{if } q_i \in B(S) \\
			0 & \text{otherwise}.
		\end{cases}
	\end{equation*}
\end{definition}
\noindent Note that $\CostVector(\Omega)$ is a random vector over the choice of $\Omega$, having mutually-independent coordinates. 

For a fixed $\Omega$ and $\mathcal{T} \subseteq \mathcal{S}$, let $\CostVectorNet{}{\Omega}{\mathcal{T}} \coloneq \{ \CostVector(\Omega) | S \in \mathcal{T} \}$ be the set of cost vectors induced by the sets of centers in $\mathcal{T}$. Clearly, if $\mathcal{T}$ has infinitely many centers, then the set $\CostVectorNet{}{\Omega}{\mathcal{T}}$ is unbounded.  
We now define \emph{cost vector nets}, which are discretizations of  $\CostVectorNet{}{\Omega}{\mathcal{T}}$. Formally, a cost vector net  $\CostVectorNet{\gamma}{\Omega}{\mathcal{T}}$ at scale $\gamma$ is a finite set of representative vectors such that any cost vector $\CostVector(\Omega)$ in  $\CostVectorNet{}{\Omega}{\mathcal{T}}$ is $\gamma$-approximated by some vector in the net. 
\begin{definition}[\textbf{Cost Vector Nets}] \label{def:cost-vector-net}
	Let $\gamma \in (0, 1/2]$. For a fixed coreset $\Omega$, and $\mathcal{T} \subseteq \mathcal{S}$, a $(\gamma, \mathcal{T})$-cost vector net, denoted as $\CostVectorNet{\gamma}{\Omega}{\mathcal{T}}$, is a finite subset of $\mathbb{R}^m$ such that, for any $S \in \mathcal{T}$ there exist some $v$ in the net with:
	\begin{equation}
		\left| v_i - \CostVectorEntry{i}(\Omega) \right| = \begin{cases}
			\left| v_i -\cost(q_i, S) \right| \leq \gamma \cdot \textup{err}(q_i, S) & \text{ if } q_i \in B(S) \\
			\left| v_i - 0 \right| = 0, \text{i.e., } v_i = 0 & \text{ otherwise}
		\end{cases}
	\end{equation}
	where $ \textup{err}(q, S) \coloneq \sqrt{\cost(q, S)\cost(q, A)} + \sqrt{\cost(q, S) \Delta_j} + \cost(q, A) + \Delta_j$ for any $q \in C_j$ and $S \in \mathcal{S}$.
\end{definition}

\subsubsection{Grouping Centers based on Interactions}
We proceed to group sets of ``similar'' centers, and construct nets for each group to bound the size of the cost vector nets (Def.~\ref{def:cost-vector-net}). Specifically, for each set $S \in \mathcal{S}$, we define a parameter called \emph{interaction number}, indicating how much the centers in $S$ ``interact'' with clusters in $B(S)$. Roughly, a center point interacts with a cluster if it is significantly far away from the center of the cluster, while still being approximately the nearest center to it. 
\begin{definition}[\textbf{Cluster-Center Interaction}]
	Let $S = \{ x_1, ..., x_k \}$ be a set of $k$ centers. For a center $x_i$, we say that a cluster $C_j$ in $B(S)$ having center $a_j$ \emph{interacts} with $x_i$ if:
	\begin{myitemize}
		\item Point $x_i$ is enough outside average cost ball of $C_j$, i.e., $\cost(a_j, x_i) \geq 32\Delta_j$, and
		\item Point $x_i$ is an approximate nearest center to $a_j$, i.e., $\cost(a_j, x_i) \leq 16 \cost(a_j, S)$
	\end{myitemize}
\end{definition}

%Furthermore, we define the \emph{signature} of $S$ to be:
%\begin{equation*}
%	\textup{Sign}(S) \coloneq \left( \left| \mathcal{I}(x_1) \right|, ..., \left| \mathcal{I}(x_k) \right| \right).
%\end{equation*}

\begin{definition}[\textbf{Interaction Number}]
	Let 
	$$\mathcal{I}(x_i) \coloneq \{ C_j \in B(S) \ | \ C_j \text{ interacts with } x_i\}
	$$ 
	for each $S = \{ x_1, ..., x_k\}$ in $\mathcal{S}$ and $x_i \in S$. We define the interaction number $N(S)$ of $S$ as $N(S) \coloneq \sum_{i=1}^k \left| \mathcal{I}(x_i) \right|$.
\end{definition}
\noindent Our goal is to group centers with \emph{similar interaction number}. Note that each center of $S$ can interact with at most $|B(S)| = k_{B(S)}$ clusters. So, for any set $S$, we have $N(S) \leq k \cdot k_{B(S)} \leq k \cdot k' \leq \beta \cdot k^2$.

\begin{definition}[\textbf{Center Class}] \label{def:center-class-N(S)}
	For an integer $r \in [0, r_{max}]$, $r_{max} \coloneq \lceil 2\log_2 k' \rceil$, the center class $\mathcal{S}(r)$ is the collection of all sets $S$ of $k$ centers with $N(S) \in [2^r, 2^{r+1})$. 
\end{definition}
%\noindent Note that the interaction number $N(S)$ is defined with respect to a specific set $B_{b, t}(S)$ of clusters. As $b$ and $t$ vary, the interaction number and thus the center class of any set $S$ may change. 
Note that center classes partition the collection of all possible sets $S \in \mathcal{S}$. Thus, Eq.~\ref{eq:cost-preservation-bands-types} follows if, for a fixed $r \in [r_{max}]$:
\begin{equation} \label{eq:cost-preservation-center-class}
	\Expsub{\Omega}{\sup_{S \in \mathcal{S}(r)} \left| \frac{\cost(B(S), S) - \cost_{\Omega}(B(S), S)}{\cost(P, S)}\right|} \leq \bigO{ \frac{\varepsilon}{\log^3(k' \varepsilon^{-1})}},
\end{equation} 
and applying a union bound over $\bigO{\log k'}$ many center classes.

% -- bound on cost vector nets size
\vspace{\baselineskip}
The following lemma, taken from~\cite{bansal2024sensitivity} and adapted to our setting with $k'$ clusters, bounds the size of the cost vector net (Def.~\ref{def:cost-vector-net}) for approximating cost vectors of $B(S) \cap \Omega$, given the fixed center class $\mathcal{S}(r)$.
\begin{lemma}[Size of Cost Vector Nets]
	\label{lemma:cost-vector-net-size}
	For any $\gamma \in (0, 1/2]$, there is a $(\gamma, \mathcal{S}(r))$-net $\CostVectorNet{\gamma}{B(S) \cap \Omega}{\mathcal{S}(r)}$ with cardinality:
	$$
	\exp \left( \bigO{\min(2^r + k \gamma^{-2}, 2^t k \gamma^{-2}) \cdot poly \log{ \left( k \gamma^{-1} \varepsilon^{-1} \right) }}\right).
	$$
\end{lemma}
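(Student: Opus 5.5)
We prove the two bounds in the minimum separately, following the net constructions of Cohen-Addad et al.~\cite{cohen2022improved} and Bansal et al.~\cite{bansal2024sensitivity}, with the $k$ original clusters replaced by the $k_{B(S)} \le k'$ clusters of $B(S)$. In both cases we fix $\Omega$ and consider only the coordinates $q_i \in B(S)\cap\Omega$. For $q \in C_j$ we use the identity
\[
\cost(q,S) = \min_{x\in S}\bigl(\EuclideanDist{q-a_j}^2 + \EuclideanDist{a_j-x}^2 + 2\langle q-a_j, a_j - x\rangle\bigr).
\]
This reduces approximating $\cost(q,S)$ within $\gamma\,\textup{err}(q,S)$ to approximating the offsets $\EuclideanDist{a_j - x}^2$ and the inner products $\langle q - a_j, a_j - x\rangle$. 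The additive slack $\cost(q,A) + \Delta_j$ in $\textup{err}$ absorbs errors on the scale of the cluster radius. The slack $\sqrt{\cost(q,S)\cost(q,A)} + \sqrt{\cost(q,S)\Delta_j}$ absorbs cross-term errors.

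\emph{The $2^t k\gamma^{-2}$ bound.} Since $B(S)$ is Type-$t$, each relevant cluster has $\cost(a_j,S) < 2^t\Delta_j$. Every coordinate is therefore a function of where the $k$ centers lie inside balls of radius $O(\sqrt{2^t\Delta_j})$ around the $a_j$. We apply a terminal Johnson--Lindenstrauss embedding (or dimension reduction to $O(\gamma^{-2}\log(m))$ dimensions, as in~\cite{cohen2022improved}) of $\Omega\cup A$, which preserves the relevant inner products up to additive $\gamma\sqrt{\cost(q,A)\cost(a_j,S)}$. We then discretize each center on a grid of mesh $\gamma\sqrt{\Delta_j}$ inside these balls. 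Counting gives $(2^t/\gamma)^{O(k\gamma^{-2})}$ choices per center family. Since distances of centers from $a_j$ are at most $2^{t/2}$ times the grid scale, the exponent is $2^t k\gamma^{-2}$ up to polylog factors. Aligning the relative scales $\Delta_j$ across different clusters uses that all clusters lie in a single band, so their costs agree up to a factor $2$.

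\emph{The $2^r + k\gamma^{-2}$ bound.} Here we use the interaction structure of $S\in\mathcal{S}(r)$. A cluster $C_j$ that interacts with no center has its nearest center within $32\Delta_j$ of $a_j$ or handled otherwise, so its coordinates are determined up to $\gamma\,\textup{err}$ by $O(1)$ bits of information plus a shared low-dimensional description. For interacting pairs $(x_i,C_j)$, of which there are at most $2^{r+1}$, we record $\EuclideanDist{a_j-x_i}^2$ rounded to a $(1+\gamma)$ grid, costing $O(\log(1/\gamma\varepsilon))$ bits per pair and hence $2^r\cdot\mathrm{polylog}$ in total. The direction-dependent inner products are handled by projecting onto a $O(\gamma^{-2}\log)$-dimensional sketch per center, contributing $k\gamma^{-2}$. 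Since at most $k$ distinct centers occur, the total number of encodings is $\exp(O((2^r + k\gamma^{-2})\,\mathrm{polylog}))$.

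The main obstacle is verifying that clusters with no interacting center still yield $v_i$ within $\gamma\,\textup{err}(q_i,S)$, and that the additional $k'$ versus $k$ clusters do not raise the exponent beyond polylog. For the first point we would rely on the case analysis of~\cite{bansal2024sensitivity}: a near center with $\cost(a_j,x)<32\Delta_j$ gives costs within $O(\Delta_j + \cost(q,A))$, while a far non-approximate-nearest center is irrelevant. For the second point, $k'$ appears only through the number of coordinates and clusters inside logarithms, namely $\log m$ in the JL step and $k_{B(S)}\le k'$ in the rounding of pairs, and this is absorbed by $\mathrm{poly}\log(k\gamma^{-1}\varepsilon^{-1})$ under \ourassumptions.
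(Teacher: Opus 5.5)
\textbf{Gap 1: $B(S)$ is not fixed.} The paper does not rebuild the net. It takes Lemma E.1 of Bansal et al.\ verbatim and only tracks what changes when $A$ has $k'\le\beta k$ clusters, and that bookkeeping is where your argument goes wrong. You claim $k'$ enters only inside logarithms, but you treat $B(S)\cap\Omega$ as fixed. In fact $B(S)=B_{b,t}(S)$ depends on $S$, since both closeness and the type are defined through $\cost(a_j,S)$. A net vector must be exactly $0$ on every $q_i\notin B(S)$, so the net must also encode which of the (up to $k'$) clusters of band $b$ lie in $B(S)$. This costs a factor $2^{k'}$, i.e.\ an additive $k'\le\beta k$ in the exponent, which is linear rather than logarithmic in $k'$. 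Together with the $(k'+1)^k$ signatures, this is exactly what the paper records, obtaining size $\exp\bigl(O(\min(\cdot)\log(k'\gamma^{-1}\varepsilon^{-1}))+k'\bigr)$. It is absorbed only because $\min(2^r+k\gamma^{-2},2^tk\gamma^{-2})\ge 4k$ and $\beta=O(\mathrm{polylog}(k\varepsilon^{-1}))$. This, not the $\log m$ or pair-rounding terms you cite, is where the assumption on $\beta$ is actually needed. Your conclusion survives only because you invoke the same assumption for the wrong reason.

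\textbf{Gap 2: the type-based bound.} Your derivation of the $2^tk\gamma^{-2}$ term fails as written. You grid centers inside an $O(\gamma^{-2}\log m)$-dimensional (terminal) JL image with distortion $\gamma$. This determines $\|q-x\|^2$, and in particular the offset $\|a_j-x\|^2\approx 2^t\Delta_j$, only up to about $\gamma 2^t\Delta_j$. For a typical $q$ with $\cost(q,A)\approx\Delta_j$, the allowed error $\gamma\,\textup{err}(q,S)$ is only $O(\gamma 2^{t/2}\Delta_j)$. Preserving the inner products alone does not help unless the offsets are recorded separately, and recording them is the interaction-number route, not this one. The type-based bound needs distortion $\gamma 2^{-t/2}$, i.e.\ dimension $2^t\gamma^{-2}\log m$, and that is the real source of the factor $2^t$ in the exponent.

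Your own count $(2^t/\gamma)^{O(k\gamma^{-2})}$ has exponent $k\gamma^{-2}\log(2^t/\gamma)$, because the radius-to-mesh ratio only enters the base. If it were valid, it would give exponent $k\gamma^{-2}\,\mathrm{polylog}$ for every type. Through the chaining argument that would yield coresets of size $\tilde O(k\varepsilon^{-2})$, below the subset-coreset lower bound quoted in the related work, so the step cannot be right. Finally, lying in one band makes $\cost(C_j,A)$ comparable across clusters, not $\Delta_j=\cost(C_j,A)/|C_j|$, so your scale-alignment remark is false.
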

\begin{proof}
	The proof is identical to the one of Lemma E.1 from~\cite{bansal2024sensitivity}. Defining signature of $S$ as $\left( |\mathcal{I}(x_1), ..., \mathcal{I}(x_k) |\right)$, we have that the total number of signatures is at most $(k' + 1)^k = \exp(O(k \log k'))$. Moreover, in Lemma E.3 from~\cite{bansal2024sensitivity}, we need to ``guess'' which clusters are in $B(S)$, whose size is $\le k'$. 
	Hence, we obtain $(\gamma, \mathcal{S}(r))$-net of size $\exp \left( \bigO{\min(2^r + k \gamma^{-2}, 2^t k \gamma^{-2}) \cdot \log{ \left( k' \gamma^{-1} \varepsilon^{-1} \right) }} + k'\right)$; 
	requiring $\beta = \bigO{poly \log(k \varepsilon^{-1})}$ ends the proof. 
\end{proof}

\subsubsection{Applying Symmetrization}
To control the expected supremum of Eq.~\ref{eq:cost-preservation-center-class}, we union bound over the size of cost vector nets (Lemma~\ref{lemma:cost-vector-net-size}) at different scales.
The next step is to bound the variance of the estimator (for different scales of the net). 
We first apply a standard symmetrization argument to fix the randomness of the coreset $\Omega$, and reduce Eq.~\ref{eq:cost-preservation-center-class} to a Gaussian process. Let $\{ g_i \}_{i \in [m]}$ be $m$ independent Gaussians from $\mathcal{N}(0, 1)$. For a fixed $S \in \mathcal{S}(r)$, we define:
\begin{equation} \label{eq:X-rv-definition}
	X^S(\Omega, g) \coloneq \sum_{i \in [m]} \frac{g_i w_{q_i} \CostVectorEntry{i}(\Omega)}{\cost(P, S)}.
\end{equation}
Applying the symmetrization technique of Lemma D.4 from~\cite{bansal2024sensitivity}, we have:
\begin{align*}
	& \Expsub{\Omega}{\sup_{S \in \mathcal{S}(r)} \left| \frac{\cost(B(S), S) - \cost_\Omega(B(S), S)}{\cost(P, S)} \right|} \leq \sqrt{2\pi} \ \Expsub{\Omega}{\Expsub{g}{\sup_{S \in \mathcal{S}(r)} \left| X^S(\Omega, g) \right|}},
\end{align*}
\noindent Thus, Eq.~\ref{eq:cost-preservation-center-class} follows by proving 
\begin{equation} \label{eq:bound-on-X}
	\Expsub{\Omega}{\Expsub{g}{\sup_{S \in \mathcal{S}(r)} \left| X^S(\Omega, g) \right|}} \leq \varepsilon
\end{equation}
and rescaling $\varepsilon$ by $\Theta \left( \log^3 (k' \varepsilon^{-1}) \right)$ factors. 

\subsubsection{Fixing the Randomness of $\Omega$}
The randomness of $X^S(\Omega, g)$ arises from both the choice of $\Omega$ and $g$, and also the expectation of Eq.~\ref{eq:bound-on-X} depends both on $\Omega$ and $g$. 
We now proceed by proving Eq.~\ref{eq:bound-on-X} by fixing $\Omega$ based on whether the coreset $\Omega$ satisfies event $\mathcal{E}$ (Def.~\ref{def:eventE}). 

\begin{lemma}[\textbf{Worst Case Bound}] \label{lemma:worst-case-bound-X}
	For any fixed coreset $\Omega$, we have that $\Expsub{g}{\sup_{S \in \mathcal{S}(r)} \left| X^S(\Omega, g) \right|} \leq \bigO{\alpha \cdot \varepsilon^{-2}}.$
\end{lemma}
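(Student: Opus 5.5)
The plan is to bound the Gaussian supremum pointwise in $\Omega$ by a deterministic quantity that does not depend on $g$ beyond a single Gaussian magnitude. First I remove the supremum over $S$ from the randomness in $g$. For every $S \in \mathcal{S}(r)$,
\begin{equation*}
\left| X^S(\Omega, g) \right| \le \max_{i \in [m]} |g_i| \cdot \frac{\sum_{i \in [m]} w_{q_i} \CostVectorEntry{i}(\Omega)}{\cost(P, S)} \le \max_{i \in [m]} |g_i| \cdot \frac{\cost_\Omega(B(S), S)}{\cost(P, S)} .
\end{equation*}
Taking expectations, Fact~\ref{fact:variance-max-gaussians} gives $\Expsub{g}{\max_i |g_i|} = \bigO{\sqrt{\log m}}$. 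This polylogarithmic factor is either absorbed into the $\varepsilon^{-2}$ slack or avoided by a sharper argument, as discussed below. It therefore suffices to show that $\cost_\Omega(B(S), S) \le \bigO{\alpha \varepsilon^{-2}} \cost(P, S)$ deterministically, for every fixed $\Omega$ and every $S$.

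To prove this deterministic bound I use that all clusters in $B(S)$ are close to $S$, so $\cost(a_j, S) \le \varepsilon^{-2}\Delta_j$. For $q \in C_j$, Fact~\ref{fact:triangle-inequalties}\ref{item:tineq-basic} gives $\cost(q, S) \le 2\cost(q, A) + 2\varepsilon^{-2}\Delta_j$. I then bound the two weighted terms with Fact~\ref{fact:weight-bound}. The first uses $w_q \le 4\cost(P,A)/(m \cost(q,A))$, so each sample contributes $w_q\cost(q,A) \le 4\cost(P,A)/m$. The second uses $w_q \le 4\cost(P,A)/(m\Delta_j)$, so each sample contributes $w_q \Delta_j \le 4\cost(P,A)/m$. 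Summing over the at most $m$ samples gives
\begin{equation*}
\cost_\Omega(B(S), S) \le 8\cost(P,A) + 8\varepsilon^{-2}\cost(P,A) = \bigO{\varepsilon^{-2}} \cost(P,A) .
\end{equation*}
This holds for every realization of $\Omega$, with no appeal to event $\mathcal{E}$. Finally, $\cost(P,A) \le \alpha\,\OPT_k(P) \le \alpha \cost(P,S)$ yields the ratio $\bigO{\alpha\varepsilon^{-2}}$.

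The main obstacle is the $\sqrt{\log m}$ factor from the maximum of the Gaussians, since the statement claims $\bigO{\alpha\varepsilon^{-2}}$ with no logarithm. I would first try to avoid the crude $\max_i|g_i|$ step. The idea is to keep the per-sample bound $w_{q_i}\CostVectorEntry{i}(\Omega) \le c_i$, where $c_i \le 8\cost(P,A)\varepsilon^{-2}/m$ uniformly in $S$. Then
\begin{equation*}
|X^S| \le \sum_i |g_i|\, c_i / \cost(P,S) \le \frac{\bigO{\alpha\varepsilon^{-2}}}{m} \sum_i |g_i| ,
\end{equation*}
which no longer depends on $S$. Its expectation is $\bigO{\alpha\varepsilon^{-2}} \cdot \Exp{|g_1|} = \bigO{\alpha\varepsilon^{-2}}$, which removes the logarithm entirely. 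So I would use this averaged form rather than the maximum.
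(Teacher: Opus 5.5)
Your final (averaged) argument is correct and is essentially the paper's proof. The paper uses the same per-sample bound $w_q\cost(q,S) = O(\varepsilon^{-2}\cost(P,A)/m)$, obtained from the triangle inequality, closeness of clusters in $B(S)$, and the last two weight bounds of Fact~\ref{fact:weight-bound}, together with $\cost(P,A)\le\alpha\cost(P,S)$. It pairs this with Cauchy--Schwarz and $\Expsub{g}{\norm{g}}\le\sqrt{m}$, where you use an $\ell_1$/$\ell_\infty$ bound with $\Exp{|g_1|}=\sqrt{2/\pi}$.

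Two minor points do not affect the conclusion. The initial $\max_i|g_i|$ detour is unnecessary, and its $\sqrt{\log m}$ factor could not have been ``absorbed into the $\varepsilon^{-2}$ slack'' anyway. Your per-sample constant should be $16$ rather than $8$.
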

\noindent However, if the coreset satisfies event $\mathcal{E}$ we can give a significantly tighter bound.
\begin{lemma}[\textbf{Bound conditioned on Event $\bf{\mathcal{E}}$}] \label{lemma:bound-event-E-X}
	For any fixed $\Omega$ satisfying event $\mathcal{E}$ with \ourassumptions, and having size:
	$$
	m = \ourcoresetsize
	$$
	we have
	$
	\Expsub{g}{\sup_{S \in \mathcal{S}(r)} \left| X^S(\Omega, g) \right|} \leq \bigO{\varepsilon}.
	$
\end{lemma}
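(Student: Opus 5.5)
We fix a coreset $\Omega$ satisfying event $\mathcal{E}$; the only remaining randomness is then the Gaussian vector $g$. We bound $\Expsub{g}{\sup_{S \in \mathcal{S}(r)} |X^S(\Omega,g)|}$ by a chaining argument over the cost vector nets of Lemma~\ref{lemma:cost-vector-net-size}. The scales are $\gamma_h = 2^{-h}$ for $h = 1,\dots,H$, with $H = \lceil \log_2(\varepsilon^{-1}) \rceil + O(1)$.

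For each $S \in \mathcal{S}(r)$ let $v^{S,h}$ be its representative in $\CostVectorNet{\gamma_h}{\Omega}{\mathcal{S}(r)}$, and telescope:
\begin{equation*}
\CostVector(\Omega) = v^{S,1} + \sum_{h=2}^{H} \left( v^{S,h} - v^{S,h-1} \right) + \left( \CostVector(\Omega) - v^{S,H} \right).
\end{equation*}
This splits $X^S$ into $H+1$ linear forms in $g$. For every $S$, the coordinates of $v^{S,h} - v^{S,h-1}$ are at most $3\gamma_h \, \textup{err}(q_i,S)$. By Fact~\ref{fact:sum-of-independent-gaussians}, each term is Gaussian with variance at most
\begin{equation*}
\sigma_h^2 \le 9\gamma_h^2 \cdot \frac{\sum_{q_i \in B(S)\cap\Omega} w_{q_i}^2 \, \textup{err}(q_i,S)^2}{\cost(P,S)^2}.
\end{equation*}
The number of distinct difference vectors at level $h$ is at most $|\text{net}_h|\cdot|\text{net}_{h-1}|$. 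Fact~\ref{fact:variance-max-gaussians} then bounds the expected supremum at level $h$ by $\sigma_h\sqrt{2\log(|\text{net}_h|^2)}$. The finest residual term is handled crudely: its coordinates are at most $\varepsilon\,\textup{err}$, and Cauchy--Schwarz together with $\mathcal{E}$ makes it $O(\varepsilon)$.

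The core step is a uniform variance bound. Let $q \in C_j$ with $C_j$ in band $b$ and type $t$. Closeness and the type give $\cost(q,S) \le 2\cost(q,A) + 2^{t+1}\Delta_j$, so
\begin{equation*}
\textup{err}(q,S)^2 = O\!\left(\cost(q,A)^2 + 2^t \Delta_j\,\cost(q,A) + 2^t\Delta_j^2\right).
\end{equation*}
We write $\sum w_q^2\,\textup{err}^2 \le \max_q\left(w_q\,\textup{err}(q,S)^2/\cdot\right)\cdot\sum w_q(\cdot)$. The factor $\max w_q$ is controlled by Fact~\ref{fact:weight-bound}, using the terms $\frac{4k'\cost(C_j,A)}{m\cost(q,A)}$ and $\frac{4k'|C_j|}{m}$. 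The remaining sums are controlled by properties $P_1$ and $P_3$ of $\mathcal{E}$, namely $\sum w_q \approx |C_j|$ and $\sum w_q\cost(q,A) \approx \cost(C_j,A)$.

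Since all clusters in the band have $\cost(C_j,A) \approx 2^bT$, this should give a bound of the form $O\!\left(\frac{k'}{m}\, 2^t\, (2^bT)\,\cost(B(S),A)\right)$. We then need a lower bound on $\cost(P,S)$. For types $t \ge 6$, Fact~\ref{fact:triangle-inequalties} gives $\cost(C_j,S) \gtrsim |C_j|\cost(a_j,S) \gtrsim 2^t\cost(C_j,A)$. For small types we use $\cost(P,S) \ge \cost(P,A)/\alpha$. Together these should yield $\sigma_h^2 \lesssim \gamma_h^2\, \alpha^2 k'/(m\cdot \min(2^t, \cdot))$.

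The term $2^r$ in the net size must also be paid for. Each interacting pair (cluster $C_j$, center $x_i$) forces $\cost(C_j,S) \gtrsim |C_j|\cost(a_j,x_i)/16$. Summing over the $N(S) \ge 2^r$ interactions therefore gives $\cost(B(S),S) \gtrsim 2^r \cdot 2^bT\cdot 2^t/k'$-type lower bounds, which lets the variance shrink by a factor $\gtrsim 2^r/k$ relative to the crude bound.

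Combining the variance bound with the net size $\exp(\min(2^r + k\gamma_h^{-2},\, 2^t k\gamma_h^{-2})\,\mathrm{polylog})$, each level contributes
\begin{equation*}
O\!\left(\alpha\sqrt{\tfrac{\beta k \,\min(\sqrt k,\varepsilon^{-2})\,\mathrm{polylog}}{m}}\right),
\end{equation*}
since the $\gamma_h^{2}$ and $\gamma_h^{-2}$ cancel. The $\min(\sqrt k,\varepsilon^{-2})$ arises from balancing the two branches of the min, as in~\cite{bansal2024sensitivity}. Summing over the $H = O(\log \varepsilon^{-1})$ levels and choosing $m = \ourcoresetsize$ gives $O(\varepsilon)$.

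The main obstacle is the variance bound in the intermediate regime. There one must trade the interaction count $2^r$ against the type $2^t$ so that the product (variance)$\times$(log net size) is uniformly $O(\varepsilon^2/H^2)$, while tracking the extra $\alpha^2$ and $\beta$ factors coming from $\cost(P,A)\le\alpha\cost(P,S)$ and $k'\le\beta k$. My first attempt is to follow the case split of~\cite{bansal2024sensitivity} ($2^r \lessgtr 2^t k$) verbatim and substitute these two inequalities wherever they used $O(1)$.
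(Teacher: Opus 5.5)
There is a genuine gap at the bottom of your chain. You telescope from the net vector $v^{S,1}$ and then treat every term as if its coordinates were $O(\gamma_h\,\textup{err}(q_i,S))$. That is true for the differences $v^{S,h}-v^{S,h-1}$ with $h\ge 2$. It is false for the first term $\sum_i g_i w_{q_i} v^{S,1}_i/\cost(P,S)$, whose coordinates have size $\approx \cost(q_i,S)$ itself.

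Take a typical point of a type-$t$ cluster, with $t\ge 6$ and $\cost(q,A)\approx\Delta_j$. Then $\cost(q,S)\approx 2^t\Delta_j$, while $\textup{err}(q,S)\approx 2^{t/2}\Delta_j$; your own bound $\textup{err}^2=O(\cost(q,A)^2+2^t\Delta_j\cost(q,A)+2^t\Delta_j^2)$ shows this. So this term's variance exceeds your per-level estimate by a factor $\approx 2^t$. The best you can say is roughly $\min(2^t,\,k'/k_{B(S)})\max(1,\alpha^2)/m$.

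Multiply by $\log|M_{1/2}|\approx\min(2^r+k,\,2^t k)$, using $2^r\le k\,k_{B(S)}$. The powers of $\gamma$ no longer cancel, and you are left with $\min(2^{2t}k,\,2kk')\lesssim k\min(\varepsilon^{-4},\beta k)$. That forces $m\approx k\varepsilon^{-2}\min(\varepsilon^{-4},\beta k)$ instead of $k\varepsilon^{-2}\min(\sqrt{k},\varepsilon^{-2})$. The very improvement the lemma is about is lost in this single term.

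The missing idea is the paper's cluster-constant baseline. Start the chain at $\mathbf{u}^{S,0}$, with $u^{S,0}_i=\cost(a_j,S)$ for $q_i\in C_j\in B(S)$ and $0$ otherwise. By Fact~\ref{fact:triangle-inequalties}(ii), $|\cost(q,S)-\cost(a_j,S)|\le 2\sqrt{\cost(q,S)\cost(q,A)}+3\cost(q,A)=O(\textup{err}(q,S))$. So $v^{S,1}-\mathbf{u}^{S,0}$ is a genuine chaining increment, and your level analysis applies to it.

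The baseline term itself needs no net. Using $\cost(P,S)\ge\frac{1}{4}\min(1,1/\alpha)\sum_{C_j\in B(S)}|C_j|\cost(a_j,S)$, it is bounded, uniformly in $S$, by $4\max(1,\alpha)\max_j\bigl|\sum_{q_i\in C_j\cap\Omega}g_iw_{q_i}\bigr|/|C_j|$. This is a maximum of at most $k'$ Gaussians, each with variance at most $8k'|C_j|^2/m$ by Fact~\ref{fact:weight-bound} and property $P_1$. This is where $P_1$ enters essentially, and it costs only $m=\widetilde{\Omega}(k'\varepsilon^{-2}\max(1,\alpha^2))$ (Lemma~\ref{lemma:bound-on-X-init-appendix}).

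A smaller point concerns your finest scale. With $2^{-H}\approx\varepsilon$, the Cauchy--Schwarz residual is $O(2^{-H}\max(1,\alpha))=O(\varepsilon\max(1,\alpha))$, not $O(\varepsilon)$. You need $2^{-H}\le\varepsilon/\max(1,\alpha)$; the paper takes $h_{\max}=\lceil 2\log_2\varepsilon^{-1}\rceil$. This costs only logarithmically many extra levels.
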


\noindent The proofs of Lemmas~\ref{lemma:worst-case-bound-X},~\ref{lemma:bound-event-E-X} are given in the following sections.
By law of total expectation, we can write $ \Expsub{\Omega}{\Expsub{g}{\sup_{S \in \mathcal{S}(r)} \left| X^S(\Omega, g) \right|}}$ as:
\begin{align*}
	& \Expsub{g}{\sup_{S \in \mathcal{S}(r)} \left| X^S(\Omega, g) \right|  \ \bigg\rvert \ \mathcal{E}} \cdot \Prob{\mathcal{E}}+ \Expsub{g}{\sup_{S \in \mathcal{S}(r)} \left| X^S(\Omega, g) \right|  \ \bigg\rvert \ \bar{\mathcal{E}}} \cdot \Prob{\bar{\mathcal{E}}} \\
	&  \leq \bigO{\varepsilon} \cdot \left(1 - \NotEventEInline \right) + \bigO{\alpha \cdot \varepsilon^{-2}} \cdot \NotEventEInline,
\end{align*}
which is $\leq \bigO{\varepsilon}$, concluding the proof of Lemma~\ref{lemma:cost-preservation-high-cost-clusters}. 

\subsubsection{Bound on Gaussian Process}
We now proceed to prove Lemma~\ref{lemma:worst-case-bound-X} and~\ref{lemma:bound-event-E-X}.
\begin{proof}[Lemma~\ref{lemma:worst-case-bound-X}]
	Let $\bf{z}^S \in \mathbb{R}^m$ such that, for any $q_i \in \Omega = \{ q_1, ..., q_m\}$, $z_i^S = (w_{q_i} \CostVectorEntry{q_i}) / \cost(P, S)$. By Cauchy-Schwartz, we can write $X^S = \langle g, \bf{z}^s \rangle \leq \norm{g} \cdot \norm{\bf{z}^S} $. We bound the squared norm of $\bf{z}^S$: 
	\begin{align*}
		& \norm{\bf{z}^S}^2 = \sum_{q \in B(S) \cap \Omega} \frac{w_q^2 \cost(q, S)^2}{\cost(P, S)^2} \overset{(i)}{\leq} 4  \sum_{q \in B(S) \cap \Omega} \frac{w_q^2 \left( \cost(q, a_j) + \cost(a_j, S) \right)^2}{\cost(P, S)^2} \\ 
		& \overset{(ii)}{\leq} 4  \sum_{q \in B(S) \cap \Omega} \frac{w_q^2 \left( \cost(q, a_j) +\Delta_j \varepsilon^{-2} \right)^2}{\cost(P, S)^2} \nonumber \\
		& = 4  \sum_{q \in B(S) \cap \Omega} \frac{w_q^2 \left( \cost(q, a_j)^2 + \Delta_j^2 \varepsilon^{-4} + 2\cost(q, a_j) \Delta_j \varepsilon^{-2} \right)}{\cost(P, S)^2} \\
		& \overset{(iii)}{\leq} \bigO{\frac{\cost(P, A)^2}{m \varepsilon^4 \cost(P, S)^2}} \overset{(iv)}{\leq} \bigO{\frac{\alpha^2}{\varepsilon^4 m} },
	\end{align*}
	where $(i)$ follows from triangle inequality, $(ii)$ since for close clusters $\cost(a_j, S) \leq \Delta_j \varepsilon^{-2}$, $(iii)$ from Fact~\ref{fact:weight-bound}, and $(iv)$ since $\cost(P, A) \leq \alpha \cdot \cost(P, S)$. Combining the above with the fact that $\Expsub{g}{\norm{g}} \leq \sqrt{m}$, we get:
	\begin{align*}
		& \Expsub{g}{\sup_{S \in \mathcal{S}(r)} |X^S(\Omega, g)} \leq \Expsub{g}{\sup_{S \in \mathcal{S}(r)} \norm{g} \cdot \norm{\bf{z}^S}} \\
		& \leq \Expsub{g}{\sup_{S \in \mathcal{S}(r)} \norm{g} \cdot \bigO{\frac{\alpha}{\varepsilon^2 \sqrt{m}}}} \leq \bigO{\frac{\alpha}{\varepsilon^2 \sqrt{m}}} \Expsub{g}{\norm{g}},
	\end{align*}
	which is $\leq \bigO{\alpha \cdot \varepsilon^{-2}},$ concluding the proof.
\end{proof}

% -- bound when event E is satisfied 
In the following, we focus on proving Lemma~\ref{lemma:bound-event-E-X}. Henceforth, we fix a coreset $\Omega = \{ q_1, ..., q_m \}$ satisfying event $\mathcal{E}$; thus, $\CostVector(\Omega)$ is deterministic, and the randomness of $X^S(\Omega, g)$ (Eq.~\ref{eq:X-rv-definition}) is only due to $g$. For the sake of notation, we write $\CostVector$ to denote $\CostVector(\Omega)$.

\emph{The Chaining Argument.} We express each cost vector $\CostVector$ in the net as a telescoping sum. For an integer $h \geq 1$ and a set $S$ of centers in $\mathcal{S}(r)$, let $\mathbf{u}^{S, h}$ be the cost vector net from a $(2^{-h}, \mathcal{S}(r))$ net approximating $\CostVector$. We also let $\mathbf{u}^{S, 0}$ the vector with entries $u_i^{S, 0} = \mathbbm{1}[q_i \in B(S)] \cdot \cost(a_j, S)$, for all $i \in [m]$. We write:
\begin{equation} \label{eq:cost-vector-telescoping-sum}
	\CostVector \coloneq \mathbf{u}^{S, 0} + \sum_{h = 1}^{h_{max}} \left( \mathbf{u}^{S, h} - \mathbf{u}^{S, h - 1} \right) +  \left( \CostVector - \mathbf{u}^{S, h_{max}} \right),
\end{equation}
where $h_{max} \coloneq \lceil 2\log_2 (1 / \varepsilon) \rceil$. Eq.~\ref{eq:cost-vector-telescoping-sum} represents a telescopic sum of the cost vector $\CostVector$, where the first term is an extremely coarse approximation of $\CostVector$; the next $h_{max}$ summands are differences of net vectors at finer scales; the last summand takes the final step to reach $\CostVector$. We can decompose $X^S$ (Def.~\ref{eq:X-rv-definition}) as follows:
\begin{equation}\label{eq:X-rv-decomposition}
	X^S(\Omega, g) = X^{S, Init}(\Omega, g) + \sum_{h=1}^{h_{max}} X^{S, h}(\Omega, g) + X^{S, Fin}(\Omega, g),
\end{equation}
with 
\begin{equation}\label{eq:X-init}
	X^{S, Init}(\Omega, g) \coloneq \frac{\sum_{i \in [m]} g_i w_{q_i} u_i^{S, 0}}{\cost(P, S)},
\end{equation}
\begin{equation}\label{eq:X-fin}
	X^{S, Fin}(\Omega, g) \coloneq \frac{\sum_{i \in [m]} g_i w_{q_i} \left( u_i^S - u_i^{S, h_{max}} \right) }{\cost(P, S)} \text{, and}
\end{equation} 
\begin{equation}\label{eq:X-h}
	X^{S, h}(\Omega, g) \coloneq \frac{\sum_{i \in [m]} g_i w_{q_i} \left( u_i^{S, h} - u_i^{S, h - 1} \right) }{\cost(P, S)}, \text{ for } h \in [1, h_{max}].
\end{equation}
By triangle inequality and linearity of expectation:
\begin{align*}
	& \Expsub{g}{\sup_{S \in \mathcal{S}(r)} \left| X^S(\Omega, g) \right|} \leq \Expsub{g}{\sup_{S \in \mathcal{S}(r)} \left| X^{S, Init}(\Omega, g) \right|} + \\ 
	& + \sum_{h = 1}^{h_{max}} \Expsub{g}{\sup_{S \in \mathcal{S}(r)} \left| X^{S, h}(\Omega, g) \right|} + \Expsub{g}{\sup_{S \in \mathcal{S}(r)} \left| X^{S, Fin}(\Omega, g) \right|}.
\end{align*}
We bound separately each of the three terms on the right-hand side. Henceforth, since we fixed $\Omega$ satisfying $\mathcal{E}$, we indicate $X^{S, h}(\Omega, g)$ as $X^{S, h}$, and similarly for $X^{S}, X^{S, Init}, X^{S, Fin}$. Lemma~\ref{lemma:bound-event-E-X} follows then by rescaling $\varepsilon$. 

% -- bound on X^{S, Init}
\begin{lemma}\label{lemma:bound-on-X-init-appendix}
	For any fixed coreset $\Omega$ of size $\bigOmegatilde{k' \varepsilon^{-2} \cdot \max(1, \alpha^2)}$ satisfying event $\mathcal{E}$, we have:
	$
	\Expsub{g}{\sup_{S \in \mathcal{S}(r)} \left| X^{S, Init} \right| } \leq \varepsilon.
	$
\end{lemma}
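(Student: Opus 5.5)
The plan is to bound $\Expsub{g}{\sup_{S \in \mathcal{S}(r)} |X^{S, Init}|}$ by grouping the Gaussian sum cluster by cluster. Since $u_i^{S,0} = \mathbbm{1}[q_i \in B(S)] \cdot \cost(a_j, S)$ for $q_i \in C_j$, we can write
\begin{equation*}
X^{S, Init} = \sum_{C_j \in B(S)} \frac{\cost(a_j, S)}{\cost(P, S)} \, G_j, \qquad G_j \coloneq \sum_{q_i \in C_j \cap \Omega} g_i w_{q_i}.
\end{equation*}
The random variables $G_1, \dots, G_{k'}$ do not depend on $S$. By Fact~\ref{fact:sum-of-independent-gaussians}, each $G_j$ is a centered Gaussian with variance $\sigma_j^2 = \sum_{q \in C_j \cap \Omega} w_q^2$. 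The dependence on $S$ is only through the deterministic coefficients $c_j(S) \coloneq \cost(a_j,S)/\cost(P,S)$ and through which clusters lie in $B(S)$. This avoids any net argument for the initial term: we only need to control $\max_j |G_j|/\sigma_j$ together with a uniform-in-$S$ bound on the coefficients.

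\textbf{Step 1: bound each variance.} Use event $\mathcal{E}$ and Fact~\ref{fact:weight-bound}. We have $w_q \le 4k'|C_j|/m$, so $\sigma_j^2 \le \max_q w_q \cdot \sum_{q \in C_j \cap\Omega} w_q \le \frac{4k'|C_j|}{m}(1+\varepsilon)|C_j| = O(k'|C_j|^2/m)$, where the last step uses property P1. Hence $\sigma_j = O(|C_j|\sqrt{k'/m})$.

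\textbf{Step 2: pull out the maximum.} Write $|X^{S,Init}| \le \max_j \frac{|G_j|}{\sigma_j}\cdot \sum_{C_j \in B(S)} c_j(S)\,\sigma_j$. Each $G_j/\sigma_j$ is a standard Gaussian, so Fact~\ref{fact:variance-max-gaussians} gives $\Expsub{g}{\max_j |G_j|/\sigma_j} \le \sqrt{2\log k'}$. It therefore suffices to show, uniformly over $S$,
\begin{equation*}
\sum_{C_j \in B(S)} \frac{\cost(a_j,S)\,|C_j|}{\cost(P,S)} = O(\max(1,\alpha)).
\end{equation*}
This follows from the argument already used for Term 2 in Lemma~\ref{lemma:appendix-event-E-coreset-cost-lower-bound}: by the triangle inequality, $|C_j|\cost(a_j,S) \le 2(\cost(C_j,A)+\cost(C_j,S))$. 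Summing over $j$ gives at most $2(\cost(P,A)+\cost(P,S)) \le 2(1+\alpha)\cost(P,S)$.

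\textbf{Step 3: combine.} Putting the pieces together,
\begin{equation*}
\Expsub{g}{\sup_S |X^{S,Init}|} = O\bigl(\sqrt{k'\log k'/m}\cdot\max(1,\alpha)\bigr),
\end{equation*}
which is at most $\varepsilon$ once $m = \bigOmegatilde{k'\varepsilon^{-2}\max(1,\alpha^2)}$. This matches the stated size, with $k' \le \beta k$ absorbed.

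The delicate point is the pointwise maximum in Step 2. Bounding $\sup_S$ by $\max_j|G_j|/\sigma_j$ times a deterministic sum is legitimate only because the $G_j$ are shared across all $S$. I expect the main obstacle to be making sure the bound in Step 1 uses P1 rather than a cruder estimate; otherwise an extra $\sqrt{k'}$ factor appears. If this crude approach loses a factor, the fallback is the sharper weight bound $w_q \le 4\cost(P,A)/(m\Delta_j)$ from Fact~\ref{fact:weight-bound}, combined with the band structure of $B(S)$, under which all clusters have comparable $\cost(C_j,A)$.
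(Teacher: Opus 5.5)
Your proof is correct and takes essentially the same route as the paper. Both arguments group the Gaussian sum cluster by cluster into $S$-independent Gaussians $G_j$, and bound their variance by $O(k'|C_j|^2/m)$ using Fact~\ref{fact:weight-bound} together with P1. Both then control the $S$-dependent coefficients through $|C_j|\cost(a_j,S)\le 2(\cost(C_j,A)+\cost(C_j,S))$ and $\cost(P,A)\le\alpha\,\cost(P,S)$, and finish with the max-of-Gaussians fact. The only cosmetic difference is the normalization: the paper divides by $|C_j|$ and uses a ratio-of-sums $\le$ max-of-ratios step, whereas you divide by $\sigma_j$ and factor out $\max_j |G_j|/\sigma_j$; both give the same $O\bigl(\max(1,\alpha)\sqrt{k'\log k'/m}\bigr)$ bound.
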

\begin{proof}
	The numerator of $|X^{S, Init}|$ (Eq.~\ref{eq:X-init}) is upper bounded as:
	\begin{align*}
		& \left| \sum_{i \in [m]} g_i w_{q_i} u_i^{S, 0} \right| \overset{}{=} \left| \sum_{C_j \in B(S)} \sum_{q_i \in C_j \cap \Omega} g_i w_{q_i} \cost(a_j, S) \right|  \\ 
		& \le \sum_{C_j \in B(S)} \left|  \sum_{q_i \in C_j \cap \Omega} g_i w_{q_i} \cost(a_j, S) \right|
	\end{align*}
	%where $(i)$ follows from the definition of $\mathbf{u}^{S, 0}$, and $(ii)$ from the triangle inequality.
	
	Then, we lower bound the denominator:
	\begin{align*}
		\cost(P, S) & \overset{(i)}{\geq} \frac{1}{2} \left( \cost(P, S) + \cost(P, A) / \alpha \right) \\
		& \overset{(ii)}{\geq} \frac{1}{2} \left( \sum_{C_j \in B(S)} \left( \cost(C_j, S) + \cost(C_j, A) / \alpha \right) \right) \\
		%& \geq \frac{1}{2} \left( \sum_{C_j \in B(S)} \left( \cost(C_j, S) + \cost(C_j, A) \right) \cdot \min(1, 1/ \alpha) \right) \\ 
		& \overset{(iii)}{\geq} \frac{1}{4} \sum_{C_j \in B(S)} |C_j| \cost(a_j, S) \cdot \min(1, 1 / \alpha);
	\end{align*}
	$(i)$ follows from $\cost(P, A) \leq \alpha\ \cost(P, S)$, $(ii)$ since $B(S) \subseteq P$, and $(iii)$ from:
	\begin{align*}
		& |C_j| \cost(a_j, S) = \sum_{q \in C_j} \cost(a_j, S) \\
		& \leq 2 \sum_{q \in C_j} \left( \cost(q, S) + \cost(q, a_j) \right) \leq 2 \left( \cost(C_j, S) + \cost(C_j, A) \right).
	\end{align*}
	Combining these bounds, we obtain:
	\begin{align*}
		& \left| X^{S, Init} \right|  \leq 4 \ \frac{\sum_{C_j \in B(S)} \left|  \sum_{q_i \in C_j \cap \Omega} g_i w_{q_i} \cost(a_j, S) \right|}{ \sum_{C_j \in B(S)} |C_j| \cost(a_j, S) \cdot \min(1, 1 / \alpha)} \\ 
		& \leq 4 \ \max_{C_j \in B(S)} \frac{\left| \sum_{q_i \in C_j \cap \Omega} g_i w_{q_i} \right| \cdot \max(1, \alpha)}{|C_j|}.
	\end{align*}
	The above is independent of $S$, and is the maximum of $|B(S)| = k_{B(S)} \leq k' \leq k \cdot \beta$ Gaussians. We can write:
	\begin{align}\label{eq:variance-sum-of-gaussians-appendix}
		\Varsub{g}{\sum_{q_i \in C_j \cap \Omega} g_i w_{q_i}} & \overset{(i)}{=} \sum_{q \in C_j \cap \Omega} w_q^2 \overset{(ii)}{\leq} \frac{4k' |C_j|}{m} \sum_{q \in C_j \cap \Omega} w_q \overset{(iii)}{\leq} \frac{8k' |C_j|^2}{m},
	\end{align}
	where $(i)$ follows from Fact~\ref{fact:sum-of-independent-gaussians}, $(ii)$ from Fact~\ref{fact:weight-bound} and $(iii)$ from property $P_1$ of event $\mathcal{E}$.
	From Fact~\ref{fact:variance-max-gaussians}:
	\begin{align*}
		\Expsub{g}{\sup_{S \in \mathcal{S}(r)} \left| X^{S, Init} \right| } \leq \frac{4 \max(1, \alpha)}{|C_j|} \cdot \sqrt{\frac{8k' |C_j|^2}{m}} \cdot \sqrt{2 \log k'},
	\end{align*}
	which ends the proof for the suggested choice of $m$.
\end{proof}
%\noindent \cristian{NOTE: as long as $\alpha$ is $\bigO{polylog(k)}$, the above bound for coreset size (for $X^{S, Init}$) is still $\bigOmegatilde{k / \varepsilon^2}$}

% -- bound on X^{S, Fin}
\begin{lemma}\label{lemma:bound-on-X-fin-appendix}
	For any fixed coreset $\Omega$ that satisfies event $\mathcal{E}$ we have that \\
	$
	\Expsub{g}{\sup_{S \in \mathcal{S}(r)} \left| X^{S, Fin} \right| } \leq \varepsilon.
	$
\end{lemma}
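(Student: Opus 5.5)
We bound $X^{S,Fin}$ crudely: by Cauchy--Schwarz, and using that the residual $\CostVector - \mathbf{u}^{S,h_{max}}$ is entrywise tiny, because $\mathbf{u}^{S,h_{max}}$ comes from a $(2^{-h_{max}},\mathcal{S}(r))$-net with $2^{-h_{max}} \le \varepsilon^2$. As in the proof of Lemma~\ref{lemma:worst-case-bound-X}, set $z_i^S \coloneq w_{q_i}(u_i^S - u_i^{S,h_{max}})/\cost(P,S)$. Then
\[
|X^{S,Fin}| \le \norm{g}\cdot\norm{\mathbf{z}^S}
\qquad\text{and}\qquad
\Expsub{g}{\norm{g}}\le\sqrt{m}.
\]
Hence it suffices to show that $\sup_{S\in\mathcal{S}(r)}\norm{\mathbf{z}^S} = \bigO{\varepsilon/\sqrt{m}}$ deterministically, for every fixed $\Omega$ satisfying $\mathcal{E}$. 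Only $q_i\in B(S)$ contribute, since both vectors vanish elsewhere by Def.~\ref{def:cost-vector-net}.

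For such $q_i\in C_j$, the net definition gives $|u_i^S-u_i^{S,h_{max}}|\le \varepsilon^2\,\mathrm{err}(q_i,S)$. We bound $\mathrm{err}$ using closeness, i.e.\ $\cost(a_j,S)\le\Delta_j\varepsilon^{-2}$, together with $\cost(q,S)\le 2\cost(q,a_j)+2\cost(a_j,S)$. AM--GM on the square-root terms then yields
\[
\mathrm{err}(q,S)=\bigO{\cost(q,A)+\varepsilon^{-2}\Delta_j}.
\]
We multiply by $\varepsilon^{2}$, which absorbs the $\varepsilon^{-2}$. Next we apply Fact~\ref{fact:weight-bound}, namely $w_q\le 4\cost(P,A)/(m\cost(q,A))$ and $w_q\le 4\cost(P,A)/(m\Delta_j)$. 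This gives
\[
w_{q}|u_i^S-u_i^{S,h_{max}}| = \bigO{\cost(P,A)/m},
\]
up to one stray factor: the $\Delta_j$ term gets $\varepsilon^{2}\cdot\varepsilon^{-2}=1$. Dividing by $\cost(P,S)\ge\cost(P,A)/\alpha$ gives $|z_i^S|=\bigO{\alpha/m}$. Therefore $\norm{\mathbf{z}^S}\le\bigO{\alpha/\sqrt m}$ and $\Expsub{g}{\sup_S|X^{S,Fin}|}=\bigO{\alpha}$, which is not yet $\le\varepsilon$.

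This is the main obstacle. A one-level gain of $\varepsilon^2$ is not enough unless we choose $h_{max}$ larger, and we would do exactly that. The residual decays like $2^{-h_{max}}$, while $h_{max}$ enters the later union bound only logarithmically. So we take $h_{max}=\lceil\log_2(\alpha\varepsilon^{-3})\rceil = \bigO{\log(k\varepsilon^{-1})}$, using the assumption on $\alpha$. The same computation then gives $|z_i^S|=\bigO{\varepsilon/m}$ and $\Expsub{g}{\sup_S|X^{S,Fin}|}\le\bigO{\varepsilon}$, and rescaling gives the lemma. If we must keep $h_{max}=\lceil 2\log_2(1/\varepsilon)\rceil$, we would instead use event $\mathcal{E}$ to obtain the sharper bound
\[
\sum_i w_{q_i}^2\,\mathrm{err}(q_i,S)^2=\bigO{\varepsilon^{-2}\cost(P,A)^2/m}.
\]
This follows by splitting $w_q^2$ as one factor bounded via Fact~\ref{fact:weight-bound} and one summed via $P_1$/$P_3$ ($\sum w_q\cost(q,A)\le 2\cost(C_j,A)$, $\sum w_q\le2|C_j|$). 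That yields $\norm{\mathbf{z}^S}=\bigO{\alpha\varepsilon/\sqrt m}$, and the result follows after rescaling $\varepsilon$ by $\alpha$ as in Lemma~\ref{lemma:cost-preservation-low-cost-clusters}.
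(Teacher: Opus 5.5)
Your proof is sound, up to the same kind of rescaling of $\varepsilon$ that the paper itself performs. The difference is in how you control the $\cost(q,S)$ part of $\mathrm{err}(q,S)$.

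The paper never uses closeness for this term. It writes $\mathrm{err}(q,S)^2 = O(\cost(q,S)(\cost(q,A)+\Delta_j) + \cost(q,A)^2 + \Delta_j^2)$ and absorbs one factor $w_q(\cost(q,A)+\Delta_j) = O(\cost(P,A)/m)$ via Fact~\ref{fact:weight-bound}. It then bounds the remainder $\sum_q w_q \cost(q,S) \le \cost_\Omega(P,S) = O(\max(1,\alpha))\cost(P,S)$ using Lemma~\ref{lemma:appendix-event-E-coreset-cost-lower-bound}; this is the only place $\mathcal{E}$ enters. Because that sum is compared directly against the denominator $\cost(P,S)$, no power of $\varepsilon$ is lost. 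The paper obtains $\|\mathbf{z}^S\|^2 = O(\varepsilon^4\max(1,\alpha^2)/m)$, hence a bound $O(\varepsilon^2\max(1,\alpha))$, which is already $O(\varepsilon)$ whenever $\alpha = O(1/\varepsilon)$. It then rescales $\varepsilon$ to absorb $\alpha$.

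You instead bound $\cost(q,S)$ through $\cost(a_j,S)\le\varepsilon^{-2}\Delta_j$. This measures it against $\cost(P,A)$ rather than $\cost(P,S)$ and costs a factor $\varepsilon^{-1}$. That is why your second remedy (keeping $h_{max}$) only reaches $O(\alpha\varepsilon)$ and needs the global rescaling $\varepsilon\to\varepsilon/\alpha$.

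Your first remedy, running the chain down to scale $\varepsilon^3/\alpha$, is a legitimate alternative. Lemma~\ref{lemma:bound-on-X-h-appendix} is unaffected beyond polylogarithmic factors, since at every level the variance factor $2^{-2h}$ and the net log-size factor $2^{2h}$ cancel. Under the polylog assumption on $\alpha$, $h_{max}$ stays $O(\log(k\varepsilon^{-1}))$. This route uses neither $\mathcal{E}$ nor any rescaling of $\varepsilon$, so it makes the bound ``$\le\varepsilon$'' literal for the redefined chain.

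One point to make explicit in the second remedy: the $\varepsilon^{-2}$ does not come from the splitting of $w_q^2$ by itself. Squaring your bound $\mathrm{err}(q,S) = O(\cost(q,A)+\varepsilon^{-2}\Delta_j)$ gives $\varepsilon^{-4}$ and just reproduces $O(\alpha)$; the AM--GM step there is the lossy one. You must apply closeness to a single factor of $\cost(q,S)$, which gives $\mathrm{err}(q,S) = O(\varepsilon^{-1}(\cost(q,A)+\Delta_j))$. With that, Fact~\ref{fact:weight-bound} alone yields $|z_i^S| = O(\alpha\varepsilon/m)$ pointwise, and $P_1$, $P_3$ are not needed.
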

\begin{proof}
	Let $\mathbf{z}^S \in \mathbb{R}^m$ be the vector with $z_i^S \coloneq \left( w_{q_i} \left( u_i^S - u_i^{S, h_{max}} \right) \right) / \cost(P, S)$. We can express $X^{S, Fin} = \langle g, \mathbf{z}^S \rangle$. Since $u_i^{S, h_{max}}$ is the finest approximation available for $\CostVector$ (in fact, it is a $2^{-h_{max}} = \varepsilon^2$-approximation), we can show that for any $S \in \mathcal{S}$, the vector $\mathbf{z}^S$ has small norm. 
	\begin{align*}
		& \norm{\mathbf{z}}^2 = \sum_{i \in [m]} \frac{w_{q_i}^2 \left( u_i^S - u_i^{S, h_{max}} \right)^2}{\cost(P, S)^2} \leq \sum_{C_j \in B(S)} \sum_{q \in C_j \cap \Omega} \frac{w_q^2 \cdot 2^{-2h_{max}} \cdot \textup{err}^2(q, S)}{\cost(P, S)^2} \\ 
		&= \varepsilon^4 \sum_{C_j \in B(S)} \sum_{q \in C_j \cap \Omega} \frac{w_q^2 \cdot \textup{err}^2(q, S)}{\cost(P, S)^2} \nonumber \\
		& \le \frac{\varepsilon^4}{\cost(P, S)^2} \sum_{C_j \in B(S)} \sum_{q \in C_j \cap \Omega} w_q^2 O \bigl( \underbrace{\cost(q, S)(\cost(q, A) + \Delta_j)}_\text{Summand 1} +  \nonumber \\ 
		& + \underbrace{\cost(q, A)^2}_\text{Summand 2} + \underbrace{\Delta_j^2}_\text{Summand 3} \bigr),
	\end{align*} 
	where $\textup{err}(q, S) = \sqrt{\cost(q, S)\cost(q, A)} + \sqrt{\cost(q, S) \Delta_j} + \cost(q ,A) + \Delta_j$ from Def.~\ref{def:cost-vector-net}, for $q \in C_j$.
	We bound each of the three summands.
	
	Summand 1:
	\begin{align*}
		& \frac{\varepsilon^4}{\cost(P, S)^2} \sum_{C_j \in B(S)} \sum_{q \in C_j \cap \Omega} w_q^2 \cdot \bigO{ \cost(q, S) \left( \cost(q, A)  + \Delta_j \right)} \\
		&  \overset{(i)}{\leq} \frac{\varepsilon^4 \ \cost(P, A)}{m \ \cost(P, S)^2} \sum_{C_j \in B(S)} \sum_{q \in C_j \cap \Omega} w_q \cdot \bigO{ \cost(q, S)} \\
		& = \frac{\varepsilon^4 \ \cost(P, A)}{m \ \cost(P, S)^2} \cdot \bigO{ \cost_\Omega(P, S)} \\
		& \overset{(ii)}{\leq} \frac{\varepsilon^4 \ \cost(P, A)}{m \ \cost(P, S)} \cdot \bigO{\max(1, \alpha)} \overset{(iii)}{\leq} \bigO{\frac{\varepsilon^4 \cdot \max(1, \alpha^2)}{m}},
	\end{align*}
	where $(i)$ follows from Fact~\ref{fact:weight-bound}, $(ii)$ from Lemma~\ref{lemma:appendix-event-E-coreset-cost-lower-bound}, and $(iii)$ from $\cost(P, A) \leq \alpha \cdot \cost(P, S)$.
	
	Summand 2:
	\begin{align*}
		& \frac{\varepsilon^4}{\cost^2(P, S)} \sum_{C_j \in B(S)} \sum_{q \in C_j \cap \Omega} w_q^2 \cdot \bigO{ \cost(q, A)^2} \\
		& \overset{(i)}{\leq} \frac{\varepsilon^4}{\cost^2(P, S)} \sum_{C_j \in B(S)} \sum_{q \in C_j \cap \Omega} \frac{\cost(P, A)^2}{m^2 \cost(q, A)^2} \cdot \bigO{ \cost(q, A)^2} \\
		& \overset{(ii)}{\leq} \bigO{\frac{\varepsilon^4 \cost(P, A)^2}{m \ \cost(P, S)^2}} \overset{(iii)}{\leq} \bigO{\frac{\varepsilon^4 \alpha^2}{m}} \leq \bigO{\frac{\varepsilon^4 \max(1, \alpha^2)}{m}},
	\end{align*}
	where $(i)$ follows from Fact~\ref{fact:weight-bound}, $(ii)$ since $\left| \sum_{C_j \in B(S)} C_j \cap \Omega \right| \leq m$ and $(iii)$ from $\cost(P, A) \leq \alpha \cdot \cost(P, S)$.
	
	Summand 3 follows from same steps of Summand 2, but using\\
	$w_q \leq \bigO{\cost(P, A) / \left( m \Delta_j \right)}$ from Fact~\ref{fact:weight-bound}.
	
	We showed that $\norm{\mathbf{z}^S}^2 \leq \bigO{\left( \varepsilon^4 \max(1, \alpha^2) \right) / m}$. From Cauchy-Schwartz:
	\begin{align*}
		& \Expsub{g}{X^{S, Fin}} = \Expsub{g}{\langle g \cdot \mathbf{z}^S \rangle} \leq \Expsub{g}{\norm{g} \cdot \norm{\mathbf{z}^S}} \leq \frac{\varepsilon^2 \max(1, \alpha)}{\sqrt{m}} \cdot \Expsub{g}{\norm{g}}
	\end{align*}
	which is $\leq \varepsilon^2 \max(1, \alpha)$, since $\Expsub{g}{||g||} \leq \sqrt{m}$ for a standard Gaussian vector $g \in \mathbb{R}^m$.
	The above is $\leq \varepsilon'$ by taking $\varepsilon = \sqrt{\varepsilon' / \max(1, \alpha)}$. (Assuming $\alpha = poly \log(k, \varepsilon^{-1})$.)
	
\end{proof}

% -- bound on X^{S, h}
We are left to prove the bound for a generic $h \in [1, h_{max}]$. We first introduce the following bound on the variance:
\begin{lemma} \label{lemma:variance-bound-in-term-of-costs}
	Let $S$ be any set of $k$ centers, and $h$ be an integer. The random state $X^{S, h}$ is a zero-mean Gaussian with $\Var{X^{S, h}} \leq 2^{-2h} / m \cdot \bigO{ \max(1, \alpha^2)}$.
\end{lemma}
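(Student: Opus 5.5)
Since $\Omega$ is fixed and satisfies $\mathcal{E}$, $X^{S,h}=\sum_{i\in[m]} g_i w_{q_i}(u_i^{S,h}-u_i^{S,h-1})/\cost(P,S)$ is a fixed linear combination of independent standard Gaussians. By Fact~\ref{fact:sum-of-independent-gaussians} it is a zero-mean Gaussian with variance
\[
\Var{X^{S,h}}=\sum_{i\in[m]} \frac{w_{q_i}^2\,(u_i^{S,h}-u_i^{S,h-1})^2}{\cost(P,S)^2}.
\]
What remains is to bound the right-hand side.

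For $h\ge 2$, both $\mathbf{u}^{S,h}$ and $\mathbf{u}^{S,h-1}$ approximate $\CostVector$ coordinatewise (Def.~\ref{def:cost-vector-net}). The triangle inequality then gives $|u_i^{S,h}-u_i^{S,h-1}|\le (2^{-h}+2^{-h+1})\,\textup{err}(q_i,S)=O(2^{-h})\,\textup{err}(q_i,S)$, and both entries vanish when $q_i\notin B(S)$.

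For $h=1$, the vector $\mathbf{u}^{S,0}$ is not a net vector, so I would bound $|u_i^S-u_i^{S,0}|=|\cost(q_i,S)-\cost(a_j,S)|$ directly. Fact~\ref{fact:triangle-inequalties}\ref{item:tineq-sqrt} gives
\[
|\cost(q_i,S)-\cost(a_j,S)|\le 2\sqrt{\cost(q_i,a_j)\cost(a_j,S)}+\cost(q_i,a_j).
\]
To express this through $\textup{err}$, I substitute $\cost(a_j,S)\le 2\cost(q_i,S)+2\cost(q_i,a_j)$. This gives $O(\textup{err}(q_i,S))$, which is $O(2^{-1})\,\textup{err}$ up to constants. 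So in all cases
\[
\Var{X^{S,h}}\le O(2^{-2h})\sum_{C_j\in B(S)}\sum_{q\in C_j\cap\Omega}\frac{w_q^2\,\textup{err}^2(q,S)}{\cost(P,S)^2}.
\]

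The remaining sum is exactly the one already handled in the proof of Lemma~\ref{lemma:bound-on-X-fin-appendix}, without the $\varepsilon^4$ factor. Expand $\textup{err}^2=O\bigl(\cost(q,S)(\cost(q,A)+\Delta_j)+\cost(q,A)^2+\Delta_j^2\bigr)$ and treat the three summands in turn.

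\textbf{Summand 1.} Use $w_q\,\cost(q,A)\le 4\cost(P,A)/m$ and $w_q\Delta_j\le 4\cost(P,A)/m$ (Fact~\ref{fact:weight-bound}) on one copy of $w_q$. This leaves $\frac{\cost(P,A)}{m\,\cost(P,S)^2}\cdot O(\cost_\Omega(P,S))$. Lemma~\ref{lemma:appendix-event-E-coreset-cost-lower-bound} (valid because $\mathcal{E}$ holds) bounds $\cost_\Omega(P,S)$ by $O(\max(1,\alpha))\cost(P,S)$. Then $\cost(P,A)\le\alpha\,\cost(P,S)$ gives $O(\max(1,\alpha^2)/m)$.

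\textbf{Summands 2 and 3.} Use $w_q\le 4\cost(P,A)/(m\cost(q,A))$ and $w_q\le 4\cost(P,A)/(m\Delta_j)$ on both copies of $w_q$. Each term becomes $O(\cost(P,A)^2/m^2)$. At most $m$ terms appear, so the total is $O(\alpha^2/m)$.

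Multiplying by $O(2^{-2h})$ gives the claimed bound $2^{-2h}/m\cdot O(\max(1,\alpha^2))$.

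The only delicate point is the base case $h=1$, where $\mathbf{u}^{S,0}$ is defined via $\cost(a_j,S)$ rather than as a net vector. There one must verify that the coarse approximation error is still dominated by $\textup{err}(q,S)$. I expect this to follow from the triangle inequality as sketched above; this is where the terms $\sqrt{\cost(q,S)\cost(q,A)}$ and $\cost(q,A)$ in $\textup{err}$ are used.
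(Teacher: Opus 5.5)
Your proposal is correct and follows the paper's proof: Gaussianity and the variance formula come from Fact~\ref{fact:sum-of-independent-gaussians}, the bound $|u_i^{S,h}-u_i^{S,h-1}| = O(2^{-h})\,\textup{err}(q_i,S)$ comes from the net definition, and the rest is the same three-summand estimate as in Lemma~\ref{lemma:bound-on-X-fin-appendix}, with $2^{-2h}$ in place of $\varepsilon^4$. Your separate check of $h=1$ is more rigorous than the paper, which applies the net property to $\mathbf{u}^{S,0}$ even though it is not a net vector (Def.~\ref{def:cost-vector-net} only covers $\gamma \le 1/2$), and your triangle-inequality bound $|\cost(q_i,S)-\cost(a_j,S)| = O(\textup{err}(q_i,S))$ fills exactly that step.
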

\begin{proof}
	Since $X^{S, h}$ is the sum of (the absolute value) of independent zero-mean Gaussians, from Fact~\ref{fact:sum-of-independent-gaussians} it is also a zero-mean Gaussian with variance:
	\begin{align*}\label{eq:firs-bound-on-variance}
		\Var{X^{S, h}} = \frac{1}{\cost(P, S)^2} \cdot \sum_{i \in [m]} w_{q_i}^2 \left( u_i^{S, h} - u_i^{S, h - 1} \right)^2.
	\end{align*}
	Moreover, by definition of cost vector net (Def.~\ref{def:cost-vector-net}), we have that, if $q_i \in B(S)$:
	\begin{align*}
		\left| u_i^{S, h} - u_i^{S, h-1}\right| \leq \left| u_i^{S, h} - u_i^{S} \right| + \left| u_i^{S, h - 1} - u_i^{S} \right| \leq 3 \cdot 2^{-h} \textup{err}(q_i, S),
	\end{align*}
	yielding $\Var{X^{S ,h}} \leq 9 / (\cost(P, S)^2) \cdot 2^{-2h} \cdot \sum_{q \in B(S) \cap \Omega} w_{q_i}^2 \textup{err}(q, S)^2.$
	Similarly to proof of Lemma~\ref{lemma:bound-on-X-fin-appendix} (using scale $h$ instead of $h_{max}$ to bound the summands), we can derive $\Var{X^{S, h}} \leq 2^{-2h} \cdot \cost(P, A) / (m  \ \cost(P, S)) \cdot \bigO{\max(1, \alpha)}$.
\end{proof}

Now, we tighten bounds, based on specific conditions of the considered clusters (namely, depending on their type $t$).
We have the following result:
\begin{lemma} \label{lemma:variance-bound-second}
	Let $S$ be any set of centers. We have $\Var{X^{S, h}} \leq \bigO{2^{-2h - t}} \cdot k' / \left( m \ k_{B(S)} \right) \cdot  \max(1, \alpha^2)$.
\end{lemma}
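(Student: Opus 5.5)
We plan to start from the same expression as in Lemma~\ref{lemma:variance-bound-in-term-of-costs}:
\[
\Var{X^{S,h}} \le \frac{9\cdot 2^{-2h}}{\cost(P,S)^2}\sum_{C_j\in B(S)}\sum_{q\in C_j\cap\Omega} w_q^2\,\textup{err}(q,S)^2 .
\]
Instead of bounding $\cost(P,S)$ from below by $\cost(P,A)/\alpha$, we will use the band and type structure of $B(S)$. The numerator will be bounded cluster by cluster, and the denominator will be bounded by the total cost of the clusters in $B(S)$.

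\emph{Numerator.} Fix $C_j\in B(S)$ and $q\in C_j\cap\Omega$. We expand $\textup{err}(q,S)^2 = O\bigl(\cost(q,S)(\cost(q,A)+\Delta_j)+\cost(q,A)^2+\Delta_j^2\bigr)$. Since $C_j$ has type $t$, $\cost(a_j,S)<2^t\Delta_j$, so by Fact~\ref{fact:triangle-inequalties} we get $\cost(q,S)\le 2\cost(q,A)+2^{t+1}\Delta_j$. Hence $\textup{err}(q,S)^2 = O\bigl(\cost(q,A)^2 + 2^t\Delta_j\cost(q,A)+2^t\Delta_j^2\bigr)$.

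For the weights we use the cluster-local bounds of Fact~\ref{fact:weight-bound}: $w_q\le 4k'\cost(C_j,A)/(m\cost(q,A))$ and $w_q\le 4k'|C_j|/m$. We will write $w_q^2\,\textup{err}^2 \le w_q\cdot(w_q\,\textup{err}^2)$ and use, in each term, whichever weight bound cancels the $\cost(q,A)$ factors. Each term then has the form $w_q\cdot O(2^t k'\cost(C_j,A)\Delta_j/m)$ or $w_q\cdot O(k'\cost(C_j,A)\cost(q,A)/m)$. Summing over $q\in C_j\cap\Omega$ and applying properties $P_1$ and $P_3$ of event $\mathcal{E}$ gives a per-cluster bound of $O(2^t k'\cost(C_j,A)^2/m)$, using $|C_j|\Delta_j=\cost(C_j,A)$. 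All clusters lie in band $b$, so $\cost(C_j,A)\approx 2^bT$, and the numerator is $O\bigl(2^t k'\,k_{B(S)}(2^bT)^2/m\bigr)$.

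\emph{Denominator.} This step needs the type lower bound. For $t\ge1$, $\cost(a_j,S)\ge 2^{t-1}\Delta_j$, so $|C_j|\cost(a_j,S)\ge 2^{t-1}\cost(C_j,A)$. The inequality $|C_j|\cost(a_j,S)\le 2(\cost(C_j,S)+\cost(C_j,A))$ then gives $\cost(C_j,S)\gtrsim 2^{t}\cost(C_j,A)$ for $t$ above a constant. For small $t$ we fall back on $\cost(P,S)\ge \cost(P,A)/\alpha \ge \cost(B(S),A)/\alpha$, which costs at most a constant $2^t$ factor. In both cases,
\[
\cost(P,S)\ \ge\ \Omega\bigl(2^t/\max(1,\alpha)\bigr)\cdot k_{B(S)}\,2^bT .
\]

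\emph{Main obstacle.} Dividing the two bounds gives $\Var \le O(2^{-2h}\,2^{t}k'\max(1,\alpha^2)/(m\,2^{2t}k_{B(S)}))$, i.e.\ $O(2^{-2h-t})\,k'\max(1,\alpha^2)/(m\,k_{B(S)})$, which is the claim. The hardest part is getting the exponent of $2^t$ right in the numerator. The cross term $\cost(q,S)\Delta_j$ must produce only $2^t$ and not $2^{2t}$, which requires routing $\cost(q,S)$ through $\cost(a_j,S)<2^t\Delta_j$ rather than through the crude close-cluster bound $\Delta_j\varepsilon^{-2}$. The large-$t$ and small-$t$ regimes of the denominator must also be handled consistently, so that the case split costs only constants and the $\max(1,\alpha^2)$ factor.
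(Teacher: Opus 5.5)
Your proof is correct and follows the paper's argument. You use the same per-cluster numerator bound $O(2^t k' \cost(C_j,A)^2/m)$, obtained from the cluster-local weight bounds, the type upper bound $\cost(a_j,S)<2^t\Delta_j$, the band and properties $P_1$/$P_3$, set against a denominator of order $2^t k_{B(S)} 2^b T$ from the type lower bound, with small $t$ handled separately. The only differences are minor: you derive $\cost(C_j,S)\ge(2^{t-2}-1)\cost(C_j,A)$ directly from the squared triangle inequality instead of citing Claim 2.29 of~\cite{bansal2024sensitivity}, and for small $t$ you reuse your numerator bound with $\cost(P,S)\ge\cost(B(S),A)/\alpha$ rather than invoking Lemma~\ref{lemma:variance-bound-in-term-of-costs}.
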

\begin{proof}
	\emph{Case 1: $t < 6$}. The claim follows trivially from Lemma~\ref{lemma:variance-bound-in-term-of-costs} since $k' \geq k_{B(S)}$, and $2^t$ is a constant. 
	
	\emph{Case 2: $t \geq 6$}. 
	From Eq.~\ref{eq:firs-bound-on-variance} we can derive:
	\begin{align*}
		& \Var{X^{S, h}} \leq \bigO{ \frac{2^{-2h}} {\cost(P, S)^2}}\cdot \sum_{q \in B(S) \cap \Omega} w_q^2 \cdot \textup{err}(q, S)^2 \le \bigO{ \frac{2^{-2h}} {\cost(P, S)^2}} \cdot \\
		& \cdot \sum_{q \in B(S) \cap \Omega} w_q^2 \cdot  \bigl( \cost(q, S)\cost(q, A) + \cost(q, S)\Delta_j + \cost(q, A)^2 + \Delta_j^2 \bigr)
	\end{align*}
	Let 
	$
	\Gamma \coloneq \sum_{B(S) \cap \Omega} w_q^2 \cdot  \left( \cost(q, S)\cost(q, A) + \cost(q, S)\Delta_j + \cost(q, A)^2 + \Delta_j^2 \right).
	$
	\begin{align*}
		& \Gamma \overset{(i)}{\leq} \bigO{\frac{k' 2^b T}{m}} \sum_{B(S) \cap \Omega} w_q \left( 2\cost(q, S) + \cost(q, A) + \Delta_j \right) \\
		&  \overset{(ii)}{\leq} \bigO{\frac{k' 2^b T}{m}} \sum_{B(S) \cap \Omega} w_q ( 4 \left( \cost(q, a_j) + \cost(a_j, S) \right) + \cost(q, A) + \Delta_j ) \\
		& \overset{(iii)}{\leq}  \bigO{\frac{k' 2^{t + b} T}{m}} \sum_{B(S) \cap \Omega} w_q \left( \cost(q, A) + \Delta_j \right) \overset{(iv)}{\leq} \bigO{\frac{k' \cdot 2^{t + 2b} \cdot T^2 \cdot k_{B(S)}}{m}},
	\end{align*}
	where $(i)$ follows since from Fact~\ref{fact:weight-bound} we have $w_q \leq 4k' \frac{\cost(C_j, A)}{m \cost(q, A)}$, and $w_q \Delta_j \leq \bigO{k' \Delta_j |C_j| / m} \leq \bigO{k' \cost(C_j, A) / m}$ and by definition of band $b$: $\cost(C_j, A) \leq 2^{b+1} T$. Then, $(ii)$ follows from triangle inequality, $(iii)$ by definition of type $t$: $\cost(a_j, S) < 2^t \Delta_j$ and $t \geq 6$, and $(iv)$, from event $\mathcal{E}$:
	\begin{myitemize}
		\item Property $P_3$, and definition of bands (Def.~\ref{def:bands}): 
		\begin{align}
			& \sum_{q \in B(S) \cap \Omega} w_q \cost(q, A) = \sum_{C_j \in B(S) \cap \Omega} \sum_{q \in C_j} w_q \cost(q, A) \nonumber \\
			& \le \sum_{C_j \in B(S) \cap \Omega} (1 + \varepsilon) \cost(C_j, A) \le \sum_{C_j \in B(S) \cap \Omega} (1 + \varepsilon) 2^{b+1} \cdot T \leq \bigO{k_{B(S)} \cdot 2^{b} \cdot T}; \nonumber
		\end{align}
		\item Property $P_1$, and definition of bands (Def.~\ref{def:bands}): 
		\begin{align}
			& \sum_{q \in B(S) \cap \Omega} w_q \Delta_j = \sum_{C_j \in B(S) \cap \Omega} \sum_{q \in C_j} w_q \Delta_j \nonumber \\
			& \leq 2 \sum_{C_j \in B(S) \cap \Omega} \cost(C_j, A) \leq \sum_{C_j \in B(S) \cap \Omega} \bigO{\cost(C_j, A)} \leq \bigO{k_{B(S)} \cdot 2^b \cdot T}. 
		\end{align}
	\end{myitemize}
	We have $\cost(P, S) \geq \cost(B(S), S) = \sum_{C_j \in B(S)} \cost(C_j, S)$. Fix a cluster $C_j \in B(S)$; since $t \geq 6$, we have $\cost(a_j, S) \geq 32 \Delta_j$. From Claim 2.29 of~\cite{bansal2024sensitivity}: 
	$\cost(C_j, S) \geq \frac{1}{6} |C_j| 2^{t - 1} \Delta_j \geq \bigO{2^{b + t} T}$. Summing over clusters: $\cost(P, S) \geq \bigO{k_{B(S)} \cdot 2^{b + t} T}$.
	Thus:
	\begin{align}\label{eq:variance-bound-third}
		& \Var{X^{S, h}} \leq \bigO{2^{-2h}} / (\cost(P, S)^2) \cdot \Gamma \nonumber \\
		& \leq \bigO{\frac{2^{-2h} \cdot k' \cdot 2^{t + 2b} \cdot T^2 \cdot k_{B(S)}}{m \cdot \cost(P, S)^2}}  \leq \bigO{\frac{2^{-2h} \cdot k' \cdot 2^{t + 2b} \cdot T^2 \cdot k_{B(S)}}{m \cdot k^2_{B(S)} \cdot 2^{2t + 2b} \cdot T^2}} \nonumber \\
		& \leq \bigO{ \frac{2^{-2h -t} \cdot k' }{m \cdot k_{B(S)}}} \cdot \max(1, \alpha^2).
	\end{align}
	
\end{proof}

% - corollary
Since the interaction number $N(S) \leq k \cdot k_{B(S)} \leq k \cdot k' \leq \beta \cdot k^2$, and since $N(S) \geq 2^r$, it follows:
\begin{corollary}\label{corollary:variance-bound-second}
	For any set $S$ of centers, we have:
	\begin{align*}
		\Var{X^{S, h}} & \leq \frac{ 2^{-2h - t} \cdot k' \cdot k}{m \cdot N(S)} \cdot \bigO{\max(1, \alpha^2)} \leq \frac{2^{-2h - t - r} \cdot \beta \cdot k^2}{m} \cdot \bigO{ \max(1, \alpha^2)}.
	\end{align*}
\end{corollary}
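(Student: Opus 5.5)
This corollary follows directly from Lemma~\ref{lemma:variance-bound-second} by rewriting the factor $1/k_{B(S)}$ in terms of the interaction number $N(S)$. Start from
$$\Var{X^{S, h}} \leq \bigO{2^{-2h-t}} \cdot \frac{k'}{m \, k_{B(S)}} \cdot \max(1, \alpha^2).$$
Each center $x_i \in S$ interacts only with clusters of $B(S)$, so $|\mathcal{I}(x_i)| \leq k_{B(S)}$ for each of the $k$ centers. Summing over the centers gives $N(S) \leq k \cdot k_{B(S)}$, and hence $1/k_{B(S)} \leq k/N(S)$. Substituting yields the first inequality,
$$\Var{X^{S, h}} \leq \frac{2^{-2h-t} \cdot k' \cdot k}{m \cdot N(S)} \cdot \bigO{\max(1, \alpha^2)}.$$

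For the second inequality, use two facts:
\begin{enumerate}
\item membership $S \in \mathcal{S}(r)$ means $N(S) \geq 2^r$ (Definition~\ref{def:center-class-N(S)}), so $1/N(S) \leq 2^{-r}$;
\item $k' \leq \beta k$, so $k' \cdot k \leq \beta k^2$.
\end{enumerate}
Plugging both into the first bound gives $2^{-2h-t-r} \beta k^2 / m \cdot \bigO{\max(1,\alpha^2)}$, as claimed.

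The only delicate point is the degenerate case $N(S) = 0$. Here the middle expression is undefined, but $S$ then lies in no class $\mathcal{S}(r)$ with $r \geq 0$. One can either treat such $S$ separately, since for them the bound of Lemma~\ref{lemma:variance-bound-second} with $k_{B(S)} \geq 1$ already suffices, or note that the corollary is only invoked for $S \in \mathcal{S}(r)$. The case $B(S) = \emptyset$ is trivial, because then $X^{S,h} \equiv 0$.
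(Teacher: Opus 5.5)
Your proposal is correct and follows the paper's argument exactly. The paper also starts from Lemma~\ref{lemma:variance-bound-second}, uses $N(S) \le k \cdot k_{B(S)}$ to replace $1/k_{B(S)}$ by $k/N(S)$, and then applies $N(S) \ge 2^r$ and $k' \le \beta k$ to get the second inequality; your remark on the degenerate case $N(S)=0$, which the paper does not address, is a harmless clarification.
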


We are now ready to prove the main lemma:
% -- bound on X^{S, Init}
\begin{lemma}\label{lemma:bound-on-X-h-appendix}
	For any fixed coreset $\Omega$ satisfying event $\mathcal{E}$ with \ourassumptions, and having size $m = \ourcoresetsize$ we have, for a fixed $h \in [h_{max}]$:
	$
	\Expsub{g}{\sup_{S \in \mathcal{S}(r)} \left| X^{S, h} \right| } \leq \varepsilon / h_{max}.
	$
	%where $h_{max} \coloneq \lceil 2 \log_2 \varepsilon^{-1} \rceil$.
\end{lemma}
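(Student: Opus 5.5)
The plan is to treat $\sup_{S \in \mathcal{S}(r)} |X^{S,h}|$ as the supremum of a finite family of zero-mean Gaussians and apply Fact~\ref{fact:variance-max-gaussians}. The first step is to observe that $X^{S,h}$ depends on $S$ only through the pair $(\mathbf{u}^{S,h}, \mathbf{u}^{S,h-1})$. The vector $\mathbf{u}^{S,h}$ ranges over a $(2^{-h}, \mathcal{S}(r))$-net and $\mathbf{u}^{S,h-1}$ over a $(2^{-h+1}, \mathcal{S}(r))$-net (for $h=1$, over the vectors $\mathbf{u}^{S,0}$, which the same net argument covers). Hence the number $N_h$ of distinct random variables $X^{S,h}$ is at most the product of the two net sizes. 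The normalization $\cost(P,S)$ also depends on $S$; to handle this, I would attach to each net pair a representative $S$, or equivalently discretize $\cost(P,S)$ into powers of $2$. This costs only a $\log$ factor in the count. By Lemma~\ref{lemma:cost-vector-net-size},
\[
\log N_h = \bigO{\min\left(2^r + k 2^{2h},\, 2^t k 2^{2h}\right) \cdot \mathrm{poly}\log(k \varepsilon^{-1})}.
\]

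The second step is to bound the variance uniformly over $S \in \mathcal{S}(r)$. Two bounds are available. Corollary~\ref{corollary:variance-bound-second} gives
\[
\sigma^2 \le \frac{2^{-2h-t-r}\beta k^2}{m}\,\bigO{\max(1,\alpha^2)}.
\]
Lemma~\ref{lemma:variance-bound-in-term-of-costs}, strengthened in the same way by Lemma~\ref{lemma:variance-bound-second} using $k_{B(S)} \ge 1$, gives
\[
\sigma^2 \le \frac{2^{-2h-t}k'}{m}\,\bigO{\max(1,\alpha^2)}.
\]
Fact~\ref{fact:variance-max-gaussians} then yields $\Expsub{g}{\sup_S |X^{S,h}|} \le \sigma\sqrt{2\log N_h}$. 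It remains to check that $\sigma^2 \log N_h \le \varepsilon^2/h_{max}^2$ holds for the stated $m$.

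The third step is a case analysis matching the $\min$ in Lemma~\ref{lemma:cost-vector-net-size}. The $2^r$ part of the log-size is paired with the Corollary bound: $2^r \cdot 2^{-2h-t-r}\beta k^2/m = 2^{-2h-t}\beta k^2/m$. The $k2^{2h}$ part is paired with the second bound, giving $k 2^{2h}\cdot 2^{-2h-t}k'/m \le \beta k^2 2^{-t}/m$; the $2^{-t}$ and $2^{2h}$ factors cancel. To obtain $m \approx k\varepsilon^{-2}\min(\sqrt{k},\varepsilon^{-2})$ rather than $k^2\varepsilon^{-2}$, I would combine three ingredients as in Bansal et al.:
\begin{itemize}
\item when $2^{2h} \le \varepsilon^{-2}$ (guaranteed by $h \le h_{max}$), use the $k2^{2h}$ branch against the variance $2^{-2h}k'/(m k_{B(S)})$, giving order $k'\varepsilon^{-2}\cdot(\cdot)/m$;
\item use the $2^t k 2^{2h}$ branch when $2^r$ is small;
\item when $2^r$ is large, i.e.\ $2^r \ge k\sqrt{k}$, use the $2^r$ branch against the $2^{-r}$ variance, which contributes $k^2 2^{-r} \le \sqrt{k}$.
\end{itemize}
Balancing at $2^r \approx k^{3/2}$ yields the $\sqrt{k}$ factor, and the $\varepsilon^{-2}$ alternative comes from $2^{2h} \le \varepsilon^{-4}$ versus $2^t \le \varepsilon^{-2}$. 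With $\beta$ and $\alpha^2$ multiplying, and $\beta, \alpha = \mathrm{poly}\log$, the polylog factors and $h_{max}^2$ are absorbed into $\widetilde{O}$.

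The main obstacle is this balancing in the third step. The bounds must be chosen per $(r,t,h)$ so that every case gives $\sigma^2 \log N_h \le \varepsilon^2/\mathrm{poly}\log$ with $m = \widetilde{O}(k\varepsilon^{-2}\min(\sqrt{k},\varepsilon^{-2})\beta\max(1,\alpha^2))$. In particular, the small-$r$ regime requires using $N(S) \ge 2^r$ together with $k_{B(S)} \ge N(S)/k$ carefully, rather than the crude $\beta k^2$. I would follow the corresponding case split in Bansal et al.'s Lemma on chaining at a fixed scale, tracking where $k'$ replaces $k$ and contributes a $\beta$ factor.
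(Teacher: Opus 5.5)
\textbf{The variance bound you use for small types is too weak.} Your framework matches the paper's: reduce to finitely many Gaussians via the nets, apply Fact~\ref{fact:variance-max-gaussians}, then trade variance against $\log|M_{2^{-h}}|$ by cases. Your worry about the $S$-dependent normalization $\cost(P,S)$ is also fair; the paper glosses over it.

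The problem is the variance bounds you carry into Step~3. Lemma~\ref{lemma:variance-bound-in-term-of-costs} gives $\mathrm{Var}(X^{S,h}) \le 2^{-2h}\,O(\max(1,\alpha^2))/m$, with \emph{no} factor $k'$. This comes from the $\cost(p,A)/\cost(P,A)$ and $\Delta_j/\cost(P,A)$ terms of the sampling distribution, which by Fact~\ref{fact:weight-bound} give $w_q(\cost(q,A)+\Delta_j)\le 8\cost(P,A)/m$.

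You replaced it by $2^{-2h-t}k'\,O(\max(1,\alpha^2))/m$. That is not a strengthening: it loses a factor $k'2^{-t}$ whenever $2^t<k'$. Indeed, for $t<6$ the paper obtains Lemma~\ref{lemma:variance-bound-second} from the $k'$-free bound via $k_{B(S)}\le k'$. With only your two bounds, every pairing is of order $\beta k^2/m$ or worse when $t,r,h=O(1)$:
\begin{itemize}
\item $k2^{2h}\cdot 2^{-2h-t}k'/m\approx kk'/m$, and likewise your first bullet's $k2^{2h}\cdot 2^{-2h}k'/(mk_{B(S)})$ when $k_{B(S)}=O(1)$;
\item your first pairing, $2^r\cdot 2^{-2h-t-r}\beta k^2/m=2^{-2h-t}\beta k^2/m$, which you did not flag, is $\Theta(\beta k^2/m)$ for $h=1$;
\item $2^tk2^{2h}\cdot 2^{-2h-t-r}\beta k^2/m=\beta k^3 2^{-r}/m$.
\end{itemize}
So your plan only yields $m=\Omega(\beta k^2\varepsilon^{-2})$. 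Invoking $k_{B(S)}\ge N(S)/k\ge 2^r/k$ cannot rescue the small-$r$ regime, since it is vacuous when $2^r<k$.

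\textbf{The missing step is to pair the $k'$-free bound with the net-size terms.} The paper proceeds in two cases.
\begin{itemize}
\item For $2^r<2k$: $\log|M_{2^{-h}}|=O(k2^{2h}\,\mathrm{polylog})$ against $2^{-2h}/m$ gives $O(k\max(1,\alpha^2)\,\mathrm{polylog}/m)$.
\item For $2^r\ge 2k$: $\log|M_{2^{-h}}|=O(2^{2h}\min(2^r,2^tk)\,\mathrm{polylog})$ against $\min(1,k^22^{-t-r})\,\beta 2^{-2h}/m$ gives at most $\min(k2^t,k^22^{-t})\,\beta/m$. This is balanced at $2^t=\sqrt k$ and capped by $2^t\le 2^{t_{max}}=O(\varepsilon^{-2})$.
\end{itemize}
So the $\varepsilon^{-2}$ alternative comes from the type cap in the pairing $2^tk2^{2h}\cdot 2^{-2h}/m=2^tk/m$, not from $h$. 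The factor $2^{2h}$ always cancels. Also, $h\le h_{max}=\lceil 2\log_2\varepsilon^{-1}\rceil$ only gives $2^{2h}=O(\varepsilon^{-4})$, not $2^{2h}\le\varepsilon^{-2}$ as you claim.

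Once the $k'$-free bound is restored, your alternative of balancing in $r$ at $2^r\approx k^{3/2}$ also closes, but with different pairings than you wrote (up to $\max(1,\alpha^2)$ and polylog factors):
\begin{itemize}
\item for $2^r\le k^{3/2}$, use the $k'$-free variance against $2^r+k2^{2h}$, giving $(2^{r-2h}+k)/m\le 2k^{3/2}/m$;
\item for $2^r>k^{3/2}$, use the $2^{-r}$ variance of Corollary~\ref{corollary:variance-bound-second} against the $2^tk2^{2h}$ branch, not the $2^r$ branch, giving $\beta k^3 2^{-r}/m<\beta k^{3/2}/m$.
\end{itemize}
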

\begin{proof}
	Conversely to proofs of Lemmas~\ref{lemma:bound-on-X-init-appendix},~\ref{lemma:bound-on-X-fin-appendix} in which we derived terms independent of the supremum over infinitely many centers (in the center class), here we discretize the space and provide bounds by carefully trading-off variance and net size at each scale $2^{-h}$. We have, for any $h \in [1, h_{max}]$:
	\begin{align*}
		& \sup_{S \in \mathcal{S}(r)} \left| X^{S, h} \right| = \sup_{S \in \mathcal{S}(r)} \frac{ \left| \sum_{i \in [m]} g_i w_{q_i} \left( u_i^{S, h} - u_i^{S, h - 1} \right) \right| }{\cost(P, S)} \\
		&  \leq \sup_{ \left( v^{h-1}, v^h \right) \in M_{2^{-(h-1)}} \times M_{2^{-h}}} \frac{ \left| \sum_{i \in [m]} g_i w_{q_i} \left( v_i^{h} - v_i^{h - 1} \right)\right| }{\cost(P, S)}.
	\end{align*} 
	Now, the supremum is finite, expressed over $|M_{2^{-(h-1)}} | \cdot |M_{2^{-h}} | \leq |M_{2^{-h}} |^2$ many pairs of vectors. By Fact~\ref{fact:variance-max-gaussians}:
	\begin{equation} \label{eq:sup-X-h-to-var-times-net-size}
		\Expsub{g}{\sup_{S \in \mathcal{S}(r)} \left| X^{S, h} \right|}  \leq \sigma_h \cdot \sqrt{2 \log \left| M_{2^{-h}} \right|},
	\end{equation}
	where $\Var{X^{S, h}} \leq \sigma_h^2$ for $S \in \mathcal{S}(r)$. We now focus on finding values for $\sigma_h^2$, and combining them with the bound on the net size (Lemma~\ref{lemma:cost-vector-net-size}).
	We consider two cases, based on the interaction number $r$:
	
	\emph{Case 1: $2^r \geq 2k$}. From Lemma~\ref{lemma:variance-bound-in-term-of-costs} and Corollary~\ref{corollary:variance-bound-second}, we have, for any set $S$ of $k$ centers in the center class $\mathcal{S}(r)$:
	\begin{align} \label{eq:variance-bound-X-h-case-2}
		\Var{X^{S, h}} \leq \min \left(1, k^2 \cdot 2^{-(t + r)} \right) \cdot 2^{-2h} / m \cdot \beta \cdot \bigO{\max(1, \alpha^2)}.
	\end{align}
	From Lemma~\ref{lemma:cost-vector-net-size}:
	\begin{align} \label{eq:net-size-bound-X-h-case-1}
		& \log \left| M_{2^{-h}} \right|  \leq \bigO{\min \left( 2^r + k 2^{2h}, 2^t \ k  2^{2h} \right) \cdot poly \log \left( k 2^{h} \varepsilon^{-1}\right)} \nonumber \\
		& \overset{(i)}{\leq} \bigO{2^{2h} \cdot \min \left( 2^r, 2^t k \right) \cdot poly \log \left( k  2^{h} \varepsilon^{-1}\right)} \nonumber \\
		& \overset{(ii)}{\leq}  \bigO{2^{2h} \cdot \min \left( 2^r, 2^t \ k \right) \cdot poly \log \left( k \varepsilon^{-1}\right)},
	\end{align}
	where $(i)$ follows since $2^r + k / 2^{-2h} < 2^{r + 2h}$  (for $2^r \geq 2k$), $(ii)$ since $\log \left( k 2^{h} \varepsilon^{-1} \right) \leq \bigO{ \log \left( k \varepsilon^{-1} \right) }$. Combining Eq.~\ref{eq:sup-X-h-to-var-times-net-size},~\ref{eq:variance-bound-X-h-case-2} and~\ref{eq:net-size-bound-X-h-case-1}:
	\begin{align*}
		& \Expsub{g}{\sup_{S \in \mathcal{S}(r)} \left| X^{S, h} \right| } \leq \sqrt{ \min(1, k^2 \cdot 2^{-(t + r)}) \cdot 2^{-2h} / m \cdot \beta \cdot \bigO{ \max(1, \alpha^2) } } \cdot \\
		& \cdot \sqrt{ \bigO{2^{2h} \cdot \min \left( 2^r, 2^t \ k \right) \cdot poly \log( k \varepsilon^{-1})}}  \\
		% & \leq \sqrt{ \min(1, k^2 \cdot 2^{-(t + r)}) \cdot \min \left( 2^r, 2^t \ k \right)  \cdot \log(k \varepsilon^{-1}) / m \cdot \beta \cdot \bigO{ \max(1, \alpha^2) }} \\
		& \overset{}{\leq} \sqrt{ \min \left( k \cdot 2^t , k^2 \cdot 2^{-t} \right)  \cdot poly \log(k \varepsilon^{-1}) / m \cdot \beta \cdot \bigO{ \max(1, \alpha^2) }},
	\end{align*}
	%where $(i)$ follows since, for any non-negatives $A, B, C, D$ we have $\min(A, B) \cdot \min(C, D) \leq \min(A \cdot C, B \cdot D)$. 
	Consider $t^* = \log_2 \sqrt{k}$. If $t \geq t^*$, then $ \min \left( k \cdot 2^t , k^2 \cdot 2^{-t} \right)  = k^2 \cdot 2^{-t} \leq k^2 \cdot 2^{-t^*} = k^{1.5}$. Otherwise, for $t < t^*$, we have $ \min \left( k \cdot 2^t , k^2 \cdot 2^{-t} \right)  = k \cdot 2^t \leq k \cdot \varepsilon^{-2}$, since $t \leq t_{max} \coloneq \lceil \log_2( \varepsilon^{-2}) \rceil$. Combining both cases we can write: 
	\begin{align*}
		& \Expsub{g}{\sup_{S \in \mathcal{S}(r)} \left| X^{S, h} \right| } \leq \sqrt{\frac{k}{m} poly \log(k \varepsilon^{-1}) \cdot \min(\sqrt{k}, \varepsilon^{-2}) \cdot \beta \cdot \bigO{ \max(1, \alpha^2) }},
	\end{align*}
	which is $\leq \varepsilon$ for the suggested coreset size $m$.
	
	\emph{Case 2: $2^r < 2k$}. Similarly, applying Lemmas~\ref{lemma:variance-bound-in-term-of-costs} and~\ref{lemma:cost-vector-net-size} to Eq.~\ref{eq:sup-X-h-to-var-times-net-size}:
	\begin{align*}
		& \Expsub{g}{\sup_{S \in \mathcal{S}(r)} \left| X^{S, h} \right|} \leq \sqrt{ \bigO{2^{-2h} / m} \cdot \max(1, \alpha^2) } \cdot \sqrt{ \bigO{2^{2h}} \cdot k \cdot poly \log(k \varepsilon^{-1})} \\
		&  \leq \sqrt{\bigO{k \cdot poly \log(k \varepsilon^{-1}) / m} \cdot \max(1, \alpha^2)},
	\end{align*}
	as $2^r + k\cdot2^t < 2^tk\cdot 2^{2h}$ for $2^r < 2k$. The above is $\leq \bigO{\varepsilon}$ for coreset size $m = \bigOmega{k \varepsilon^{-2} \cdot \log(k \varepsilon^{-1}) \cdot \max(1, \alpha^2)}$. 
	
	Combining both cases, we obtain coreset size 
	$$
	m = \ourcoresetsize,
	$$
	which concludes the proof, rescaling $\varepsilon$ by $h_{max} = \bigO{\log \varepsilon^{-1}}$. 
	
\end{proof}

\section{Analysis: Predictions in Sequence of Snapshots}
\label{sec:proof_of_good_predictions}

In this section, we focus on proving Thm.~\ref{thm:good_predictions}.
We first report some results from~\cite{ben2007framework}, using the notation of our paper. 
Let $\OPT_k(\Distro)$ be the optimal $k$-means cost on the distribution $\Distro$, defined as $\OPT_k(\mathcal{D}) = \min_{S \colon |S| = k} \Expsub{p \sim \mathcal{\Distro}}{\cost(p, S)}$.
\begin{corollary}[Application of Corollary 3 from~\cite{ben2007framework}]
	\label{corollary:uc_multiplicative}
	For every $\varepsilon > 0$, and every probability distribution $\Distro$ over $\mathbb{R}^d$, and every set $S$ of $k$ centers, if a sample $P \subseteq \mathbb{R}^d$ of size $n = \bigOmega{\frac{1}{\varepsilon^2 \OPT_k(\Distro)}}$ is picked i.i.d. via $\Distro$, then with constant probability (over the choice of P), 
	$$
	\left| \frac{\cost(P, S)}{n} - \Expsub{p \sim \Distro}{\cost(p, S)} \right| \leq \varepsilon \cdot \Expsub{p \sim \Distro}{\cost(p, S)}.
	$$
\end{corollary}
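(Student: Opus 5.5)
The plan is to derive the statement from a Bernstein-type (multiplicative Chernoff) bound for a single fixed $S$, using that every $\cost(p,S)$ lies in $[0,1]$. This is the normalization we assumed, following~\cite{ben2007framework}.

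Fix $S$ and write $\mu_S \coloneqq \Expsub{p \sim \Distro}{\cost(p, S)}$. The sample average $\cost(P,S)/n$ is a mean of $n$ i.i.d. variables $Y_i = \cost(p_i,S) \in [0,1]$, each with mean $\mu_S$.

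The key variance observation is that $Y_i \in [0,1]$ gives $Y_i^2 \le Y_i$. Hence $\Var{Y_i} \le \Exp{Y_i^2} \le \mu_S$. Centering gives $X_i = Y_i - \mu_S$, which satisfy $|X_i| \le 1$ and $\sum_i \Exp{X_i^2} \le n\mu_S$.

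Apply Thm.~\ref{thm:berstein} with $t = \varepsilon n \mu_S$ and $M=1$. This gives
\[
\Prob{\left|\cost(P,S) - n\mu_S\right| \ge \varepsilon n \mu_S} \le 2\exp\left(-\frac{\varepsilon^2 n^2 \mu_S^2}{2n\mu_S + 2\varepsilon n \mu_S/3}\right) \le 2\exp\left(-\frac{\varepsilon^2 n \mu_S}{3}\right),
\]
using $\varepsilon \le 1$ (the case $\varepsilon>1$ reduces to $\varepsilon = 1$ up to constants). The right-hand side is at most any desired constant $\delta$ once $n \ge 3\ln(2/\delta)/(\varepsilon^2 \mu_S)$.

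The one step needing care is replacing $\mu_S$ by $\OPT_k(\Distro)$ in the sample-size requirement. Since $|S| = k$, by definition of $\OPT_k(\Distro)$ as a minimum over $k$-sets we have $\mu_S \ge \OPT_k(\Distro)$. So $n = \Omega(1/(\varepsilon^2 \OPT_k(\Distro)))$ suffices uniformly over all $S$ with $|S|=k$.

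Dividing the event by $n$ gives exactly the claimed inequality, with constant probability. Degenerate cases are immediate: if $\mu_S = 0$, then the $Y_i$ are a.s. $0$ and the claim holds trivially. If $\OPT_k(\Distro)=0$, the bound is vacuous, consistent with the later assumption $\OPT_{k'}(\Distro)\ge\nu$.

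I expect no real obstacle here, since the statement is for a fixed $S$ and no uniform convergence is needed. The only subtlety is the self-bounding variance trick $\Var{Y_i}\le \mu_S$. Without it, Hoeffding would give $1/(\varepsilon^2\mu_S^2)$ rather than the stated $1/(\varepsilon^2 \OPT_k(\Distro))$. For later use with sets of $k'$ centers, the identical argument applies with $k$ replaced by $k'$.
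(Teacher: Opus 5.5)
Your argument is correct and is essentially the paper's. The paper cites Lemma~1 and Corollary~3 of~\cite{ben2007framework} for a single fixed clustering, invokes a Chernoff bound to make the deviation multiplicative, and then uses $\Expsub{p\sim\Distro}{\cost(p,S)}\ge\OPT_k(\Distro)$; your Bernstein computation with the self-bounding variance $\Var{Y_i}\le\mu_S$ is exactly that multiplicative Chernoff step made explicit.

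The only slip is the parenthetical on $\varepsilon>1$. Reducing to $\varepsilon=1$ would need $n=\Omega(1/\mu_S)$, not $\Omega(1/(\varepsilon^2\mu_S))$. Handle that case directly instead: for $\varepsilon\ge 1$ the lower deviation cannot occur, and Markov gives $\Prob{\cost(P,S)/n>(1+\varepsilon)\mu_S}\le 1/(1+\varepsilon)\le 1/2$ for every $n$.
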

\begin{proof}
	Corollary 3 from~\cite{ben2007framework} is formulated for clustering problems that admit a local and complete description scheme, an assumption satisfied by $k$-means clustering~\cite{ben2007framework}. Our stated result follows directly from an application of Lemma 1 and Corollary 3 from~\cite{ben2007framework}, by the definition of empirical risk of a clustering on a (finite) pointset, and by definition of $k$-means problem over the distribution $\Distro$.
	Note that we used a Chernoff Bound to get the multiplicative guarantees, and we applied optimality of $\OPT_k(\Distro)$.
\end{proof}

\begin{theorem}[Application of Theorem 4 from~\cite{ben2007framework}]
	\label{thm:uc_for_fixed_centers_mul_apx}
	For every $\varepsilon > 0$, and every probability distribution $\Distro$ over $\mathbb{R}^d$, if a sample $P \subseteq \mathbb{R}^d$ of size $n = \bigOtilde{\frac{k}{\varepsilon^2 \OPT_k(\Distro)}}$ is picked i.i.d. via $\Distro$, then, for every set $S \subseteq P$ of $k$ centers:
	$$
	\left| \frac{\cost(P, S)}{n} - \Expsub{p \sim \Distro}{\cost(p, S)} \right| \leq \varepsilon \cdot \Expsub{p \sim \Distro}{\cost(p, S)}
	$$
	with constant probability (over the choice of $P$).
\end{theorem}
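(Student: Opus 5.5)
The plan is to obtain the statement as a uniform-convergence result over the finite family of candidate center sets determined by $P$. Corollary~\ref{corollary:uc_multiplicative} handles a single set fixed before sampling; here the center set $S$ may depend on $P$ itself, so a union bound over such sets is needed. Following Theorem 4 of~\cite{ben2007framework}, I would view $k$-means as admitting a local and complete description scheme: every $S \subseteq P$ with $|S| = k$ is described by the $k$ indices of its points in the sample. The family of candidate solutions then has at most $\binom{n}{k} \le n^k$ members, which is what turns a single-solution bound into a bound for all data-dependent solutions.

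The first step is to remove the dependence of $S$ on the points used to evaluate it. I would use the standard compression-scheme trick. Fix an index tuple $I$ of size $k$ and let $S_I$ be the corresponding points. The remaining $n-k$ points are i.i.d.\ from $\Distro$ and independent of $S_I$. Applying the Chernoff/Bernstein argument behind Corollary~\ref{corollary:uc_multiplicative} to these $n-k$ points, with distances in $[0,1]$ so each $\cost(p,S_I) \in [0,1]$, I would get a multiplicative deviation bound. The mean is $\mu_I = \Expsub{p\sim\Distro}{\cost(p,S_I)} \ge \OPT_k(\Distro)$, and the failure probability is at most $2\exp(-\Omega(\varepsilon^2 (n-k)\,\OPT_k(\Distro)))$. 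Here the lower bound $\mu_I \ge \OPT_k(\Distro)$ is exactly what turns the additive Chernoff bound into a multiplicative one with sample size scaling like $1/(\varepsilon^2\OPT_k(\Distro))$.

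The second step is to account for the $k$ excluded points. Their contribution to $\cost(P,S_I)$ is zero, since they are the centers themselves. Replacing division by $n-k$ with division by $n$ therefore changes the empirical average by a factor of $(1-k/n)$. This is at most an $\varepsilon$ relative error once $n \ge k/\varepsilon$, which is implied by the assumed sample size because $\OPT_k(\Distro)\le 1$.

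The final step is a union bound over the at most $n^k$ index tuples. I would require $n^k \cdot 2\exp(-c\,\varepsilon^2 n\,\OPT_k(\Distro))$ to be at most a small constant, which holds when $n \ge C\,\frac{k\log n}{\varepsilon^2\OPT_k(\Distro)}$. This is satisfied by $n = \bigOtilde{\frac{k}{\varepsilon^2\OPT_k(\Distro)}}$, since the implicit $\log n$ relation resolves to a polylogarithmic factor. Rescaling $\varepsilon$ to absorb the two error sources then gives the claim simultaneously for all $S \subseteq P$ with constant probability.

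I expect the main obstacle to be the first step: making the argument rigorous when $S$ depends on $P$. Because of this dependence, the summands $\cost(p,S)$ over $p\in P$ are not independent. The leave-out-the-centers argument, which conditions on the index set, is the tool I would rely on, checking carefully that the conditional distribution of the other points is still $\Distro^{n-k}$.
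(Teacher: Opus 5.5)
Your proposal is correct and is essentially the paper's argument. The paper simply invokes Theorem~4 of~\cite{ben2007framework}, whose proof is this same scheme: leave out the describing points so the rest of the sample stays independent, then take a union bound over the at most $n^k$ sample-indexed center sets, using the multiplicative Chernoff bound behind Corollary~\ref{corollary:uc_multiplicative}. Your write-up is more explicit than the paper's, and it correctly notes that the union bound needs the exponential tail $2\exp(-\Omega(\varepsilon^2 n \OPT_k(\Distro)))$ rather than the constant-probability form in which that corollary is stated.
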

\begin{proof}
	Our stated result follows immediately from an application of Theorem 4 from~\cite{ben2007framework}, where we have that $k$-means clustering satisfies the condition of allowing a local description scheme, holding $l$, $|I|$ and $m$ (notation of~\cite{ben2007framework}) to be $k$, 1, and $n$ (our notation), respectively. Note that we used Corollary~\ref{corollary:uc_multiplicative} before applying union bound, in order to get the multiplicative approximation.
\end{proof}

Let $S^*$ be a clustering that minimizes the cost on the distribution $\Distro$, i.e., $S^* = \min_S \Expsub{p \sim \Distro}{\cost(p, S)}$.
%\begin{theorem}[Application of Theorem 5 from~\cite{ben2007framework}]
%	\label{thm:center_sample_to_distro}
%	Let $\mathcal{A}$ be an algorithm returning an $\alpha$-approximation to the optimal $k$-means cost on any finite pointset $P \subseteq \mathbb{R}^d$; that is, $\cost(P, \mathcal{A}(P)) \leq \alpha \cdot \OPT_k(P)$, with $\mathcal{A}(P) \subseteq P$. 
%	For every $\varepsilon > 0$, and every probability distribution $\Distro$ over $\mathbb{R}^d$, let $P \subseteq \mathbb{R}^d$ be a sample of size $n = \OurSizeBound$ picked i.i.d. via $\Distro$. Then:
%	$$
%	\Expsub{p \sim \Distro}{\cost(p, \mathcal{A}(P))} \leq \bigO{\alpha} \cdot \Expsub{p \sim \Distro}{\cost(p, S^*)} + \varepsilon,
%	$$
%	with constant probability (over the choice of P).
%\end{theorem}
%\begin{proof}
%	Theorem 5 from~\cite{ben2007framework} is stated for clustering problems that admit a local and complete description scheme with $c$-multiplicative coverage, which is the case for $k$-means problem (for $c = 4$, Corollary 1 of~\cite{ben2007framework}).
%	The proof of our stated result follows the one of Theorem 5 from~\cite{ben2007framework}, where $\alpha$-approximation algorithm $\mathcal{A}$ is used in place of empirical risk minimization, introducing the $\bigO{\alpha}$ term.
%\end{proof}
\begin{theorem}[Application of Theorem 5 from~\cite{ben2007framework}]
	\label{thm:center_sample_to_distro_mul_apx}
	Let $\mathcal{A}$ be an algorithm returning an $\alpha$-approximation to the optimal $k$-means cost on any finite pointset $P \subseteq \mathbb{R}^d$; that is, $\cost(P, \mathcal{A}(P)) \leq \alpha \cdot \OPT_k(P)$, with $\mathcal{A}(P) \subseteq P$. 
	For every $\varepsilon \in (0, 1/2]$, and every probability distribution $\Distro$ over $\mathbb{R}^d$, let $P \subseteq \mathbb{R}^d$ be a sample of size $n = \bigOtilde{\frac{k}{\varepsilon^2 \OPT_k(\Distro)}}$ picked i.i.d. via $\Distro$. Then:
	$$
	\Expsub{p \sim \Distro}{\cost(p, \mathcal{A}(P))} \leq \bigO{\alpha} \cdot \OPT_k(\Distro),
	$$
	with constant probability (over the choice of P).
\end{theorem}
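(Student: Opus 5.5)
We follow the standard ERM-to-population argument of Theorem 5 in~\cite{ben2007framework}, with the $\alpha$-approximation algorithm $\mathcal{A}$ in place of the empirical risk minimizer. Write $A = \mathcal{A}(P)$ and let $S^*$ be an optimal set of $k$ centers for $\Distro$. The chain of inequalities we aim for is
\begin{align*}
\Expsub{p \sim \Distro}{\cost(p, A)} &\le \tfrac{1}{1-\varepsilon}\cdot\tfrac{\cost(P, A)}{n} \le \tfrac{\alpha}{1-\varepsilon}\cdot \tfrac{\OPT_k(P)}{n} \\
&\le \tfrac{\alpha}{1-\varepsilon}\cdot \tfrac{\cost(P, S^*)}{n} \le \tfrac{\alpha(1+\varepsilon)}{1-\varepsilon}\, \OPT_k(\Distro).
\end{align*}
Since $\varepsilon \le 1/2$, the final factor is at most $3\alpha = \bigO{\alpha}$.

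Each inequality has a separate justification.
\begin{itemize}
\item The first inequality is the lower side of the uniform multiplicative convergence in Theorem~\ref{thm:uc_for_fixed_centers_mul_apx}. It applies because $A \subseteq P$ is a set of $k$ centers drawn from the sample, which is exactly the class covered by that theorem at sample size $n = \bigOtilde{k/(\varepsilon^2 \OPT_k(\Distro))}$.
\item The second inequality is the approximation guarantee of $\mathcal{A}$.
\item The third holds because $\OPT_k(P) \le \cost(P, S)$ for any $S$.
\item The fourth is the upper side of Corollary~\ref{corollary:uc_multiplicative}, applied to the single fixed set $S^*$. This is legitimate because $S^*$ depends only on $\Distro$, not on $P$.
\end{itemize}
Both probabilistic events hold with constant probability. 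After shrinking each failure probability, for instance through the $\widetilde{O}$ factors in $n$, a union bound shows that both hold simultaneously with constant probability.

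The main obstacle is the first step. $A$ depends on $P$, so we cannot apply the fixed-$S$ Corollary~\ref{corollary:uc_multiplicative} to it. We need uniform convergence over all $k$-subsets of $P$, and this is exactly the local description scheme argument behind Theorem~\ref{thm:uc_for_fixed_centers_mul_apx}.

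The subtle point is that the multiplicative deviation there is measured relative to $\Expsub{p\sim\Distro}{\cost(p,S)}$, which is at least $\OPT_k(\Distro)$. So the $\frac{1}{\OPT_k(\Distro)}$ dependence in the sample size is enough uniformly over these candidate sets. The argument is to union bound over the description sets, of which there are at most $n^k$, after conditioning on the $k$ describing points. The $\log n^k = k\log n$ cost of this union bound is absorbed in the $\widetilde{O}(k)$ factor.

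If $\cost(P,A)/n$ happened to sit much below its expectation, the multiplicative form of Theorem~\ref{thm:uc_for_fixed_centers_mul_apx} still controls it, since the relative error is taken with respect to the expectation. So no additive slack remains, and the final bound is purely multiplicative.
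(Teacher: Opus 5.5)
Your proposal is correct and is essentially the paper's argument. Both follow Ben-David's Theorem~5 with $\mathcal{A}$ in place of empirical risk minimization, using Theorem~\ref{thm:uc_for_fixed_centers_mul_apx} for the data-dependent $\mathcal{A}(P)\subseteq P$, Corollary~\ref{corollary:uc_multiplicative} for the fixed optimal centers $S^*$, and $(1+\varepsilon)/(1-\varepsilon)\le 3$. The one small difference is that the paper routes through the $4$-multiplicative coverage property of $k$-means that Ben-David's theorem requires, while you rightly note this step is unnecessary here: $\alpha$ is measured against the unconstrained $\OPT_k(P)$, so $\OPT_k(P)\le\cost(P,S^*)$ holds directly.
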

\begin{proof}
	Theorem 5 from~\cite{ben2007framework} is stated for clustering problems that admit a local and complete description scheme with $c$-multiplicative coverage, which is the case for $k$-means problem (for $c = 4$, Corollary 1 of~\cite{ben2007framework}).
	The proof of our stated result follows the one of Theorem 5 from~\cite{ben2007framework}, where $\alpha$-approximation algorithm $\mathcal{A}$ is used in place of empirical risk minimization, introducing the $\bigO{\alpha}$ term, and by using multiplicative approximation results from Thm.~\ref{thm:uc_for_fixed_centers_mul_apx}, Corollary~\ref{corollary:uc_multiplicative}. We used the fact $\varepsilon \le 1/2$ to bound $(1 + \varepsilon) / (1 - \varepsilon) \le 3$. 
\end{proof}

Now, we shall proceed to prove our main result (Thm.~\ref{thm:good_predictions}), which we restate for completeness. 
\begin{manualtheorem}{Theorem~\ref{thm:good_predictions}.}
	For any $\varepsilon \in (0, 1/2]$, and any distribution $\Distro$ over $\mathbb{R}^d$, let $P_i, P_j \subseteq \mathbb{R}^d$ be two pointsets of size $n_i, n_j$, respectively, drawn i.i.d. from $\Distro$. 
	Let $\mathcal{A}$ be an algorithm returning a set $\mathcal{A}(P_i) \subseteq P_i$ of $k'$ centers that yields an $\Bicriteria{\alpha}{\beta}$ bi-criteria approximation for dataset $P_i$. 
	If $n_i, n_j = \widetilde{O} \left( {\frac{k'}{ \varepsilon^2  \OPT_{k'}(\Distro)}} \right) $
	then, with constant probability (over the choices of $P_i$ and $P_j$), $
	\cost(P_j, \mathcal{A}(P_i)) \leq \bigO{\alpha} \cdot \OPT_k(P_j)$.
\end{manualtheorem}
\begin{proof}
	We have, for $j \neq i$, with constant probability (over the choice of $P_j$):
	\begin{align*}
		& \frac{\cost(P_j, \mathcal{A}(P_i))}{n_j} \overset{(i)}{\leq} (1 + \varepsilon) \Expsub{p \sim \Distro}{\cost(p, \mathcal{A}(P_i))} \overset{(ii)}{\leq} \bigO{\alpha} \cdot \OPT_{k'}(\Distro)  \\
		& \overset{(iii)}{\leq} \bigO{\alpha} \cdot \Expsub{p \sim \Distro}{\cost(p, S')} \overset{(iv)}{\leq} \bigO{\alpha} \cdot \frac{1}{1 - \varepsilon} \frac{\cost(P_j, S')}{n_j} \overset{(v)}{\leq} \bigO{\alpha} \cdot \frac{\OPT_{k'}(P_j)}{n_j},
	\end{align*}
	where $(i)$ follows from Corollary~\ref{corollary:uc_multiplicative}, $(ii)$ by applying Theorem~\ref{thm:center_sample_to_distro_mul_apx}, holding with constant probability over the choice of $P_i$, and since $\varepsilon \le 1/2$.
	Then, $(iii)$ follows from optimality of $\OPT_k(\Distro)$, by fixing a set $S' \subseteq P_j$ of $k'$ centers that guarantees multiplicative coverage for $P_j$, that is $\cost(P_j, S') \leq 4 \cdot \OPT_{k'}(P_j)$. 
	(From Corollary 1 of~\cite{ben2007framework}, the existence of such $S'$ is guaranteed in the context of Euclidean $k$-means clustering.)
	Finally, $(iv)$ follows by Theorem~\ref{thm:uc_for_fixed_centers_mul_apx} for the fixed choice of $S'$, and $(v)$ by coverage property of $S'$.
	Result of the thorem follows since $\OPT_{k'}(P_j) \le \OPT_{k}(P_j)$ for $k' \ge k$, and then by a union bound and by rescaling the parameters. 
\end{proof}
From the above theorem, it immediately follows the following. 
\begin{manualtheorem}{Corollary~\ref{corollary:bicriteria_from_snapshots}.}
	For every $\varepsilon \in (0, 1/2]$, and every distribution $\Distro$ over $\mathbb{R}^d$ such that $\OPT_{k'}(\Distro) \ge \nu$, for some integer $k$', let $P_i, P_j \subseteq \mathbb{R}^d$ be two pointsets of size 
	$
	n_i, n_j = \bigOtilde{ \frac{k'}{ \varepsilon^2 \; \nu }}
	$ 
	drawn i.i.d. via $\Distro$.
	If a set $A \subseteq P_i$ of $k'$ centers induces an $\Bicriteria{\alpha}{\beta}$ bi-criteria approximation on $P_i$, then, with constant probability (over the choices of $P_i$ and $P_j$), the same set $A$ induces an $\Bicriteria{\bigO{\alpha}}{\beta}$ bi-criteria approximation on $P_j$. 
\end{manualtheorem}
\begin{proof}
	The proof follows applying Theorem 3. Dataset-size bounds are stated in terms of $\nu$ as it holds $\nu \le OPT_{k'}(\Distro)$. The statement follows by the definition of bi-criteria approximation.
\end{proof}

\newpage
\section{Experimental Evaluation}
\label{sec:experiments-appendix}
In this section, we provide further details and visualization regarding datasets, and we report additional results from our experimental evaluation. 

%\DataVizIntelLab
\subsection{Datasets}
\label{sec:datasets-appendix}

In our experimental evaluation, we considered the following snapshot sequences.
\begin{myitemize}
	\item \emph{\textbf{Twitter}}~\cite{twitter_geospatial_data_1050} contains a full week of data sampled from Twitter in the United States, from Saturday Jan 12, 2013 to Friday Jan 18, 2013, analyzed in~\cite{helwig2015analyzing}. Features represent geospatial information, namely GPS location and timestamps; the latter are used for subdividing the datasets into snapshots, for each day of the week. A visualization of geospatial information and number of snapshots can be found in Fig.~\ref{fig:twitter-dataset}.
	\DataVizTwitter
	\item \emph{\textbf{IntelLab}}\footnote{Link: \url{https://db.csail.mit.edu/labdata/labdata.html}, accessed on Jan 24, 2026.} contains information about data collected from 54 sensors (i.e., temperature, humidity, light, voltage) deployed in the Intel Berkeley Research lab between February 28th and April 5th, 2004. Original dataset consists in 38 different days; we removed the last four days, since they contain less than 345 points, resulting in 34 total snapshots. 
	\item \emph{\textbf{Taxi}}~\cite{moreira2013taxi}, from ECML PKDD 2015 prediction challenge, consists of datasets describing trajectories performed by all the 442 taxis running in the city of Porto, in Portugal, from July $1^{st}$, 2013 to June $30^{th}$, 2014. Following~\cite{draganov2024settling} we kept only the starting (pickup) latitude and longitude as features; we divided the data into 12 snapshots (one per each month). The datasets contain outliers, as shown in Fig.~\ref{fig:taxi-dataset}.
	\DataVizTaxi
	\item \emph{\textbf{NYC Taxi and Limousine Commission (TLC)}} is the agency responsible for licensing and regulating New York City's Medallion (Yellow) taxi cabs, for-hire vehicles, commuter vans, and paratransit vehicles. From Kaggle\footnote{Link: \url{https://www.kaggle.com/datasets/elemento/nyc-yellow-taxi-trip-data/data}, accessed on Feb  23, 2025.}, we downloaded data that consider only the Yellow Taxis data for the \emph{months} of Jan 2015, and Jan - Mar 2016 (4 snapshots), having 16 numeric features.  
	%The dataset has 16 numeric features: VendorID, number of passenger, trip distance, pickup latitude, pickup longitude, RateCodeID, dropoff latitude, dropoff longitude, payment type, fare amount, extra, MTA tax, improvement surcharge, tip amount, tolls amount, total amount. The datasets are noisy, as shown in Fig.~\ref{fig:NYC-Taxi-dataset}. 
	For scaling to bigger datasets, we downloaded\footnote{NYC TLC website: \url{https://www.nyc.gov/site/tlc/about/tlc-trip-record-data.page}} datasets discretized by \emph{year}, from 2015 to 2024. 10-year datasets result in snapshots with 14 features. 
\end{myitemize}

\DataVizNYCTLC

\subsection{Additional Experiments}
\label{sec:additional-results-appendix}
In the following, we report results omitted in the main text for lack of space (i.e., results for experiments with parameters number $k$ of centers $k \in \{10, 20, 50\}$, and coreset size $m \in \{200k, 500k \}$ ).

Results across different coreset size $m$ are very similar to the ones reported in the main text. Hence, we just report results and defer to the main text for detailed comments on the different approaches.   
First, we report runtimes for coreset size $m = 200k, m = 500k$ in Figures~\ref{fig:runtime_m200} and~\ref{fig:runtime_m500}.
Then, Fig.~\ref{fig:log_ratio_compression_optimization_m200} and~\ref{fig:log_ratio_compression_optimization_m500} show cost-ratio related to the dataset (on a log-scale) after applying clustering algorithm \texttt{kmeans++} plus Lloyd's\footnote{We set $\text{max iterations}=300$, $\text{tolerance}=0.001$, and $\text{number of initializations}=10$, as in common libraries defaults}, for coreset size $m = 200k, m = 500k$ respectively. 
Finally, we depict plots of estimated distortions, showing the cost ratio related to our algorithm \algname\ (on a log scale), when allowing algorithms to produce coresets of fixed size $m = 200k, m = 500k$. Results are in Fig.~\ref{fig:estimated_log_ratio_distortion_m200} and~\ref{fig:estimated_log_ratio_distortion_m500}.

All in all, our algorithm \algname\ is the fastest approach that provides high-quality coresets across all datasets.
As observed in our main text, our experimental results confirm the practical impact of our approach: across the considered sequences, \algname\ consistently achieves a superior trade-off between clustering cost and runtime compared to uniform sampling and state-of-the-art sensitivity-based techniques.

\CompactRuntime{200}
\vspace{-0.5cm}
\CompactRuntime{500}

\CompactLogRatioCompressionOptimization{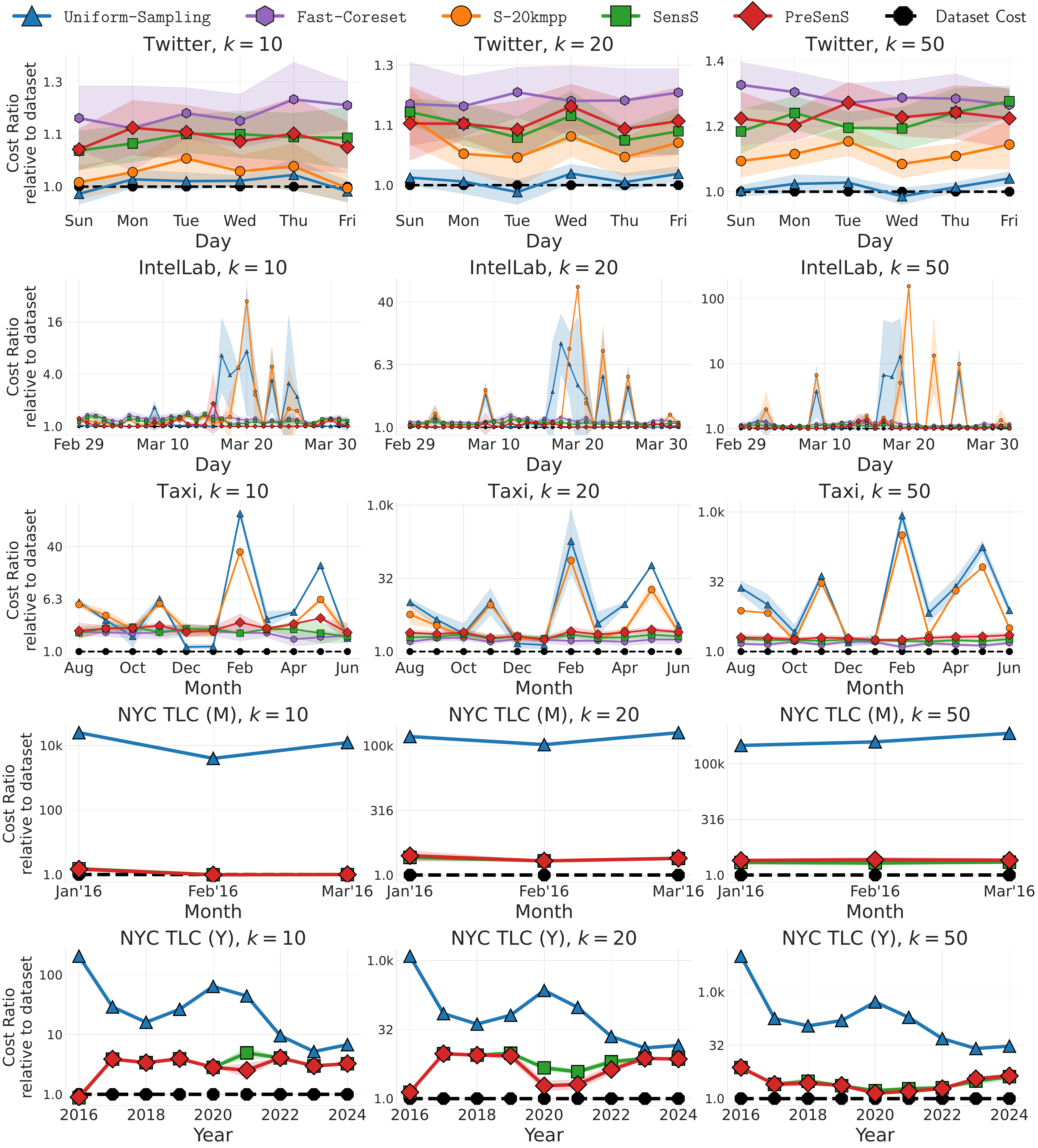}
\CompactLogRatioCompressionOptimization{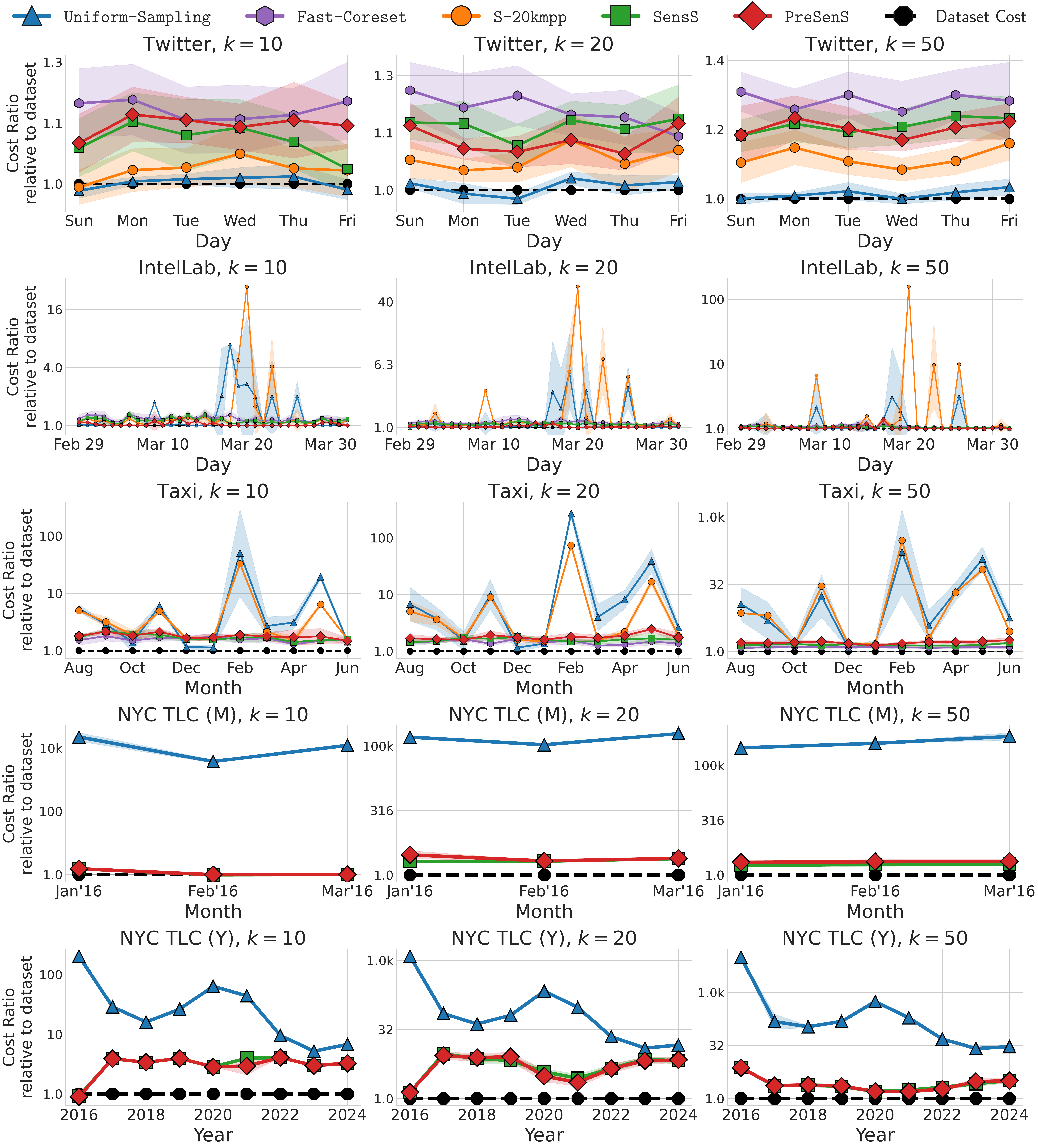}

\EstimatedLogRatioDistortion{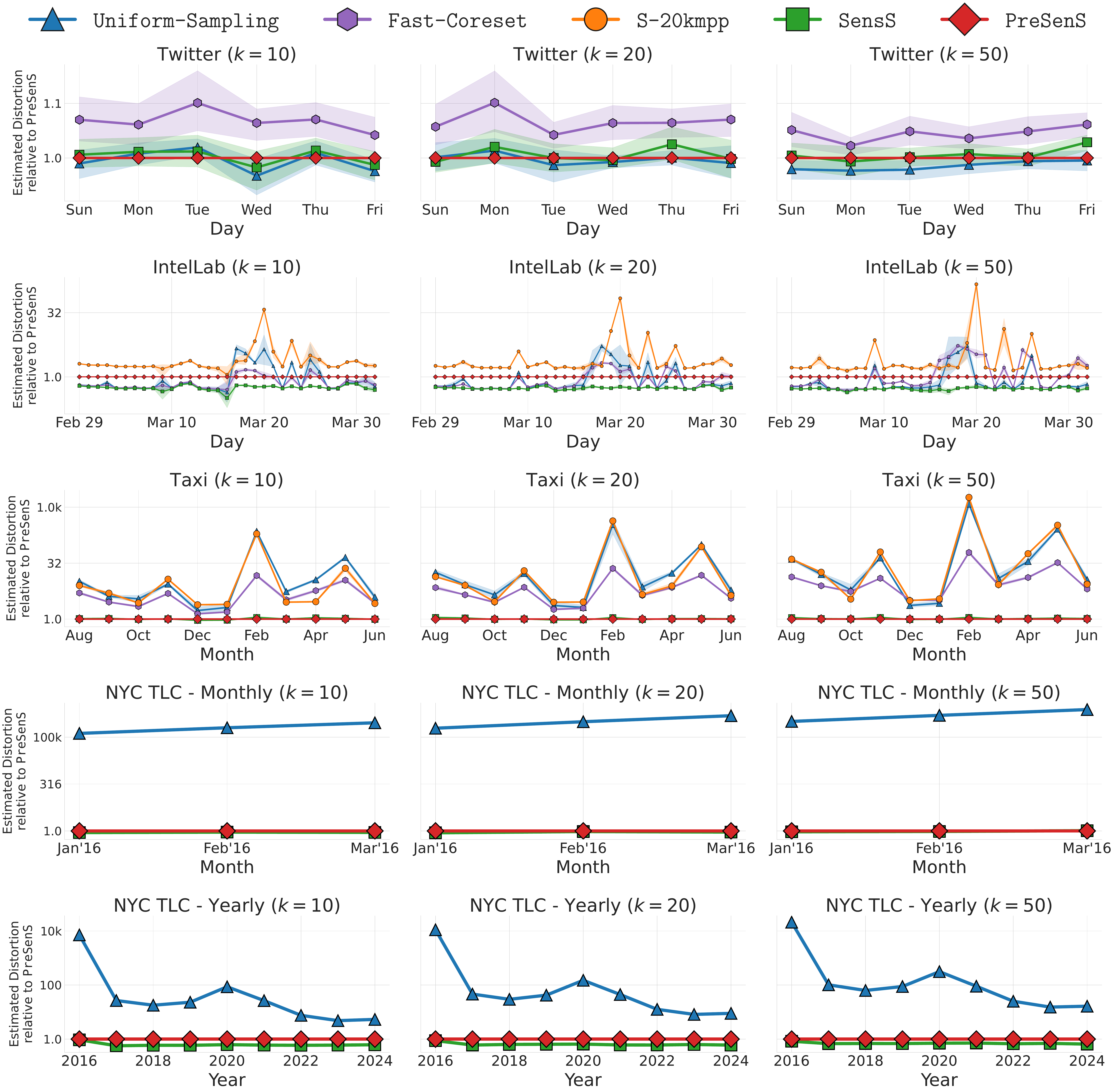}
\EstimatedLogRatioDistortion{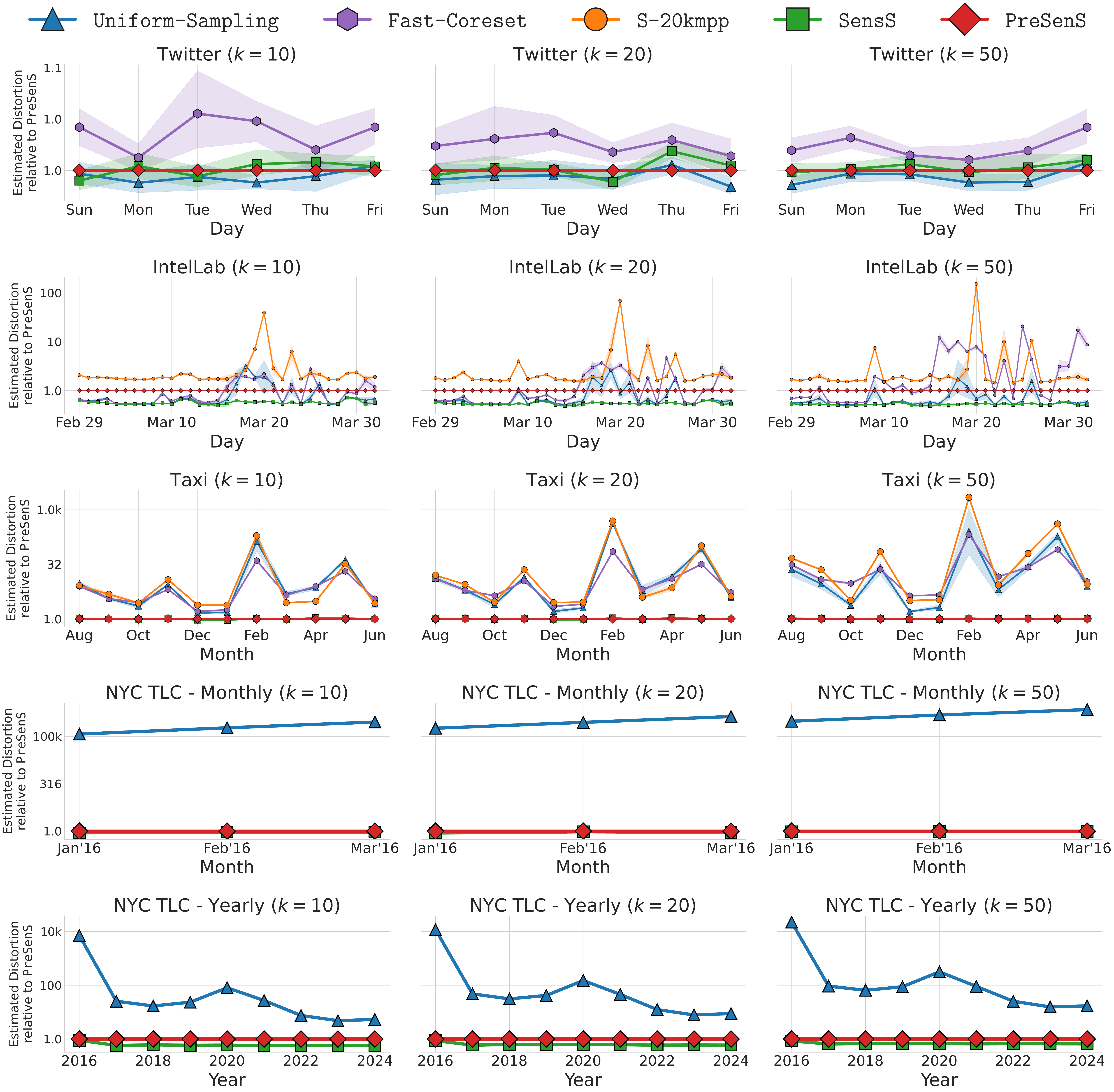}

\end{document}